\definecolor{Green}{rgb}{0.13, 0.65, 0.3}
\definecolor{Amber}{rgb}{0.3, 0.5, 1.0}
\newcommand{\calA}{{\mathcal{A}}}
\newcommand{\calV}{{\mathcal{V}}}
\newcommand{\calW}{\mathcal{W}}
\newcommand{\calS}{{\mathcal{S}}}
\newcommand{\calF}{{\mathcal{F}}}
\newcommand{\calI}{{\mathcal{I}}}
\newcommand{\calT}{{\mathcal{T}}}
\newcommand{\calP}{{\mathcal{P}}}
\newcommand{\calQ}{{\mathcal{Q}}}
\newcommand{\calM}{{\mathcal{M}}}
\DeclareMathOperator*{\argmin}{argmin}
\DeclareMathOperator*{\argmax}{argmax}
\newcommand{\eat}[1]{}
\newcommand{\inner}[2]{\left\langle #1, #2 \right\rangle}
\newcommand{\rbr}[1]{\left(#1\right)}
\newcommand{\sbr}[1]{\left[#1\right]}
\newcommand{\cbr}[1]{\left\{#1\right\}}
\newcommand{\abr}[1]{\left|#1\right|}
\newcommand{\tilO}[1]{\otil\left( #1 \right)}
\newcommand{\bigo}[1]{\order( #1 )}
\newcommand{\tilo}[1]{\otil( #1 )}
\DeclarePairedDelimiter\ceil{\lceil}{\rceil}
\newcommand{\dev}{\textsc{Dev}}
\newcommand{\reg}{\textsc{Reg}}
\newcommand{\diff}{\textsc{Diff}}
\newcommand{\var}{\textsc{Var}}
\newcommand{\SA}{\calS\times\calA}
\renewcommand{\sp}{\text{\rm sp}}
\renewcommand{\P}{\bar{P}}
\newcommand{\istar}{i^{\star}}
\newcommand{\Np}{\N^+}
\newcommand{\optJ}{J^{\star}}
\newcommand{\sumt}{\sum_{t=1}^T}
\newcommand{\hatQ}{\widehat{Q}}
\newcommand{\hatV}{\widehat{V}}
\newcommand{\sums}{\sum_{s\in\calS}}
\newcommand{\suma}{\sum_{a\in\calA}}
\newcommand{\sumsa}[1][s, a]{\sum_{#1}}
\newcommand{\sumh}{\sum_{h=1}^H}
\newcommand{\sumk}{\sum_{k=1}^K}
\newcommand{\optpi}{\pi^\star}
\newcommand{\hatJ}{\widehat{J}}
\newcommand{\tilP}{\widetilde{P}}
\newcommand{\tilpi}{{\widetilde{\pi}}}
\newcommand{\tiloptpi}{{\widetilde{\pi}^\star}}
\newcommand{\N}{N} 
\newcommand{\epsevi}{\epsilon_{\EVI}}
\newcommand{\piref}{\mathring{\pi}}
\newcommand{\sstar}{s^{\star}}
\newcommand{\estQ}{\textsc{EstimateQ}\xspace}
\newcommand{\optpieps}{\pi^{\star,\epsilon}}
\newcommand{\optJeps}{J^{\star,\epsilon}}
\newcommand{\tmix}{t_{\text{mix}}}
\newcommand{\thit}{t_{\text{hit}}}
\newcommand{\tilcalM}{\widetilde{\calM}}
\newcommand{\EVI}{\textsc{EVI}}
\newcommand{\hatbeta}{\widehat{\beta}}
\newcommand{\field}[1]{\mathbb{#1}}
\newcommand{\fR}{\field{R}}
\newcommand{\fN}{\field{N}}
\newcommand{\E}{\field{E}}
\newcommand{\fV}{\field{V}}
\newcommand{\Ind}{\field{I}}
\newcommand{\norm}[1]{\left\|{#1}\right\|}
\newtheorem{lemma}{Lemma}
\newtheorem{theorem}{Theorem}
\newcommand{\order}{\ensuremath{\mathcal{O}}}
\newcommand{\otil}{\ensuremath{\tilde{\mathcal{O}}}}
\newcommand{\pref}[1]{\prettyref{#1}}
\newcommand{\pfref}[1]{Proof of \prettyref{#1}}
\newcommand{\savehyperref}[2]{\texorpdfstring{\hyperref[#1]{#2}}{#2}}
\icmltitlerunning{Learning Infinite-horizon Average-reward Markov Decision Process with Constraints}
\begin{document}

\twocolumn[
\icmltitle{Learning Infinite-Horizon Average-Reward Markov Decision Processes \\ with Constraints}



\icmlsetsymbol{equal}{*}

\begin{icmlauthorlist}
\icmlauthor{Liyu Chen}{usc}
\icmlauthor{Rahul Jain}{usc}
\icmlauthor{Haipeng Luo}{usc}
\end{icmlauthorlist}

\icmlaffiliation{usc}{University of Southern California}

\icmlcorrespondingauthor{Liyu Chen}{liyuc@usc.edu}

\icmlkeywords{Machine Learning, ICML}

\vskip 0.3in
]



\printAffiliationsAndNotice{}  

\begin{abstract}
We study regret minimization for infinite-horizon average-reward Markov Decision Processes (MDPs) under cost constraints.
We start by designing a policy optimization algorithm with carefully designed action-value estimator and bonus term,
and show that for ergodic MDPs, our algorithm ensures $\tilo{\sqrt{T}}$ regret and constant constraint violation, where $T$ is the total number of time steps. 
This strictly improves over the algorithm of~\citep{singh2020learning}, whose regret and constraint violation are both $\tilo{T^{2/3}}$.
Next, we consider the most general class of weakly communicating MDPs. Through a finite-horizon approximation, we develop another algorithm with $\tilo{T^{2/3}}$ regret and constraint violation, which can be further improved to $\tilo{\sqrt{T}}$ via a simple modification,
albeit making the algorithm computationally inefficient.
As far as we know, these are the first set of provable algorithms for weakly communicating MDPs with cost constraints.
\end{abstract}

\section{Introduction}

Standard reinforcement learning (RL) algorithms aim at finding the optimal policy that maximizes the accumulated reward in a Markov Decision Process (MDP).
%
%
In many real-world applications, however, the algorithm is also required to satisfy certain constraints.
For example, in autonomous driving, the vehicle needs to reach the destination with minimum amount of time while obeying the traffic rules.
These constrained versions of RL problems can be formulated by Constrained Markov Decision Processes (CMDPs)~\citep{altman1999constrained}, where a learning agent tries to maximize the accumulated reward while ensuring that certain cost constraint is not violated or at least the violation is small enough.

Learning in a CMDP is a long-standing topic, and there is a surge of interest in it in light of all other theoretical advances in RL.
Almost all recent works on CMDP, however, focus on the simpler finite-horizon setting~\citep{kalagarla2020sample,efroni2020exploration,qiu2020upper,liu2021learning} or the discounted setting~\citep{liang2018accelerated,tessler2018reward,chen2021primal,liu2021fast}.
In contrast, learning CMDP in the infinite-horizon average-reward setting, where the learner-environment interaction never ends or resets and the goal is to achieve optimal long-term average reward under constraints, 
appears to be much more challenging.
For example, \citep{zheng2020constrained} makes the restricted assumptions that the
transition kernel is known and an initial policy that satisfies the constraints and induces an ergodic Markov chain is given,
but still only achieves $\tilo{T^{3/4}}$ regret after $T$ steps (with no constraint violation).
Another recent work~\citep{singh2020learning} considers the special class of ergodic CMDPs, but only achieves $\tilo{T^{2/3}}$ regret and $\tilo{T^{2/3}}$ cost constraint violation.
These existing results are far from optimal, exhibiting unique challenges of the constrained infinite-horizon average-reward setting.

In this work, we manage to overcome some of these challenges and significantly improve our understanding of regret minimization for infinite-horizon average-reward CMDPs.
Our contributions are as follows:
\begin{itemize}[leftmargin=*]
  \setlength\itemsep{0em}
	\item Following~\citep{singh2020learning}, we start by considering ergodic CMDPs in \pref{sec:ergodic}. We develop an algorithm that achieves $\tilo{\sqrt{T}}$ regret and constant constraint violation, strictly improving~\citep{singh2020learning}.
	The main technical challenge in getting $\tilo{\sqrt{T}}$ regret for the upper-confidence-type algorithm of~\citep{singh2020learning} is the lack of a tighter bound on the span of the estimated bias function.
	Instead, we resolve this issue using a policy optimization algorithm with a special action-value estimator whose span is well controlled.
	To further control the transition estimation error from the action-value estimator, we also include a new bonus term in the policy update.
	
	\item In \pref{sec:weak}, we drop the ergodic assumption and consider the most general class of weakly communicating CMDPs. 
	By reducing the original infinite-horizon problem to a finite-horizon problem similarly to~\citep{wei2021learning},
	we show that a simple and efficient linear programming approach gives $\tilo{T^{2/3}}$ regret and $\tilo{T^{2/3}}$ constraint violation.
	Further introducing extra constraints to the linear program to control the span of some bias function, we are also able to obtain $\tilo{\sqrt{T}}$ regret and $\tilo{\sqrt{T}}$ violation, with the price that the resulting program can no longer be solved computationally efficiently.
	As far as we know, these are the first results for weakly communicating CMDPs (see some caveats below).
\end{itemize}

\paragraph{Related Work}
As mentioned, learning in CMDP is heavily studied recently in other settings (see references listed earlier),
but for the infinite-horizon average-reward setting, other than the two recent works discussed above~\citep{zheng2020constrained, singh2020learning}, 
we are only aware of~\citep{agarwal2021concave,agarwal2021markov} which also study both the ergodic case and the weakly communicating case. 
Unfortunately, their results appear to be wrong due to a technical mistake which sidesteps an important challenge for this problem on controlling the span of some bias function; see \pref{app:pre} for more details.



Regret minimization for the infinite-horizon average-reward setting without constraints dates back to~\citep{bartlett2009regal,jaksch2010near} and was shown to be possible only when the MDP is at least weakly communicating.
Numerous improvements have been discovered in recent years; see e.g.~\citep{ortner2018regret, fruit2018efficient, talebi2018variance, abbasi2019politex, zhang2019regret, wei2020model, wei2021learning}.
From a technical perspective, designing provable algorithms for the infinite-horizon average-reward setting, especially for the general class of weakly communicating MDPs, has always been more challenging than other settings.
For example, optimal model-free algorithms remain unknown for this setting~\citep{wei2020model}, but have been developed for the finite-horizon setting~\citep{jin2018q} and the discounted setting~\citep{dong2020q}.

Apart from MDPs, researchers also study constrained multi-armed bandit problems, such as conservative bandits~\cite{wu2016conservative,kazerouni2016conservative,garcelon2020improved} and bandits with safety constraints modeled by a cost function (similar to our setting)~\citep{amani2019linear,pacchiano2021stochastic,liu2021efficient}.

\section{Preliminaries}

An infinite-horizon average-reward CMDP model is defined as a tuple $\calM=(\calS, \calA, r, c, \tau, P)$.
Here, $\calS$ is the state space, $\calA$ is the action space, $r\in [0, 1]^{\SA}$ is the reward function, $c\in[0, 1]^{\SA}$ is the cost function modeling constraints, $\tau$ is a cost threshold, and $P=\{P_{s, a}\}_{(s, a)\in\SA}$ with $P_{s, a}\in\Delta_{\calS}$ is the transition function, where $\Delta_{\calS}$ is the simplex over $\calS$.
For simplicity, we assume that only the transition function $P$ is unknown, while all other parameters are known.
Dealing with unknown reward and cost functions can be done in a way similar to~\citep{liu2021learning} by maintaining standard confidence sets.


Throughout, we also assume that the MDP is \emph{weakly communicating}, which is known to be necessary for learning even without constraints~\citep{bartlett2009regal}. 
More specifically, an MDP is weakly communicating if its state space consists of two subsets: in the first subset, all states are transient under any stationary policy (that is, a mapping from $\calS$ to $\Delta_{\calA}$);
in the second subset, every two state are communicating under some stationary policy.

The learning protocol is as follows: the learner starts from an arbitrary state $s_1\in\calS$, and interacts with the environment for $T$ steps.
In the $t$-th step, the learner observes state $s_t\in\calS$, takes an action $a_t\in\calA$, and transits to the next state $s_{t+1}\sim P_{s_t,a_t}$.
Informally, the goal of the learner is to ensure large reward while at the same time incurring small cost relative to the threshold $\tau$.

To describe these objectives formally, we introduce the concept of average utility function:
for a stationary policy $\pi \in (\Delta_{\calA})^{\calS}$, transition function $P$, and utility function $d\in\fR_+^{\SA}$, define the average utility for any $s\in\calS$ as
$$J^{\pi,P,d}(s)=\liminf_{T\rightarrow\infty}\frac{1}{T}\E\sbr{\left.\sumt d(s_t, a_t)\right|\pi,P,s_1=s}$$
where the expectation is with respect to the random sequence $a_1, s_2, a_2, s_3, a_3, \ldots$ generated according to $a_t \sim \pi$ and $s_{t+1} \sim P_{s_t,a_t}$.
\citep[Theorem 8.3.2]{puterman1994markov} shows that, there exists an optimal policy $\optpi$ such that for any $s\in\calS$, $\optpi$ is the solution for the following optimization problem
\begin{equation}
	\label{eq:obj}
	\argmax_{\pi\in(\Delta_{\calA})^{\calS}}J^{\pi, P, r}(s), \quad \text{s.t. } J^{\pi,P,c}(s)\leq\tau,
\end{equation}
and also $J^{\pi^\star, P, r}(s) = \optJ$ and $J^{\pi^\star,P,c}(s) = \optJ_c$ for some constants $\optJ$ and $\optJ_c$ independent of $s$.
The performance of the learner is then measured through two quantities: first, her regret in reward against the optimal policy $\optpi$, defined as $R_T = \sumt (\optJ - r(s_t, a_t))$, and second, her regret in cost against the threshold $\tau$, or simply her constraint violation, defined as $C_T = \sumt (c(s_t, a_t) - \tau)$.

Finally, \citep[Theorem 8.2.6]{puterman1994markov} also shows that for any utility function $d$,  there exists a \emph{bias function} $q^{\pi,P,d}\in\fR^{\SA}$ satisfying the Bellman equation: $\forall (s,a)\in\calS \times \calA$,
\begin{equation}\label{eq:Bellman}
q^{\pi,P,d}(s, a)+J^{\pi,P,d}(s)=d(s,a)+\E_{s'\sim P_{s, a}}[v^{\pi,P,d}(s')],
\end{equation}
where $v^{\pi,P,d}(s)=\sum_{a\in\calA}\pi(a|s)q^{\pi,P,d}(s, a)$,
and also $J^{\pi,P,q^{\pi,P,d}}(s)=0$ for all $s\in\calS$.
The functions $q$ and $v$ are analogue of the well-known $Q$-function and state-value-function for the discounted or finite-horizon setting. 


\paragraph{Notations} 
Let $S = |\calS|$ and $A = |\calA|$ be the number of states and actions respectively.
For an integer $n$, $[n]$ denotes the set $\{1, \ldots, n\}$.
For a distribution $P\in\Delta_{\calS}$ and a function $V\in \fR^{\calS}$, define $PV=\sum_{s\in\calS}P(s)V(s)$.
For any function $v\in\fR^{\calS}$, define its span as $\sp(v)=\max_{s\in\calS}v(s) - \min_{s\in\calS}v(s)$.
When there is no confusion, we write $J^{\pi,P,d}$ as $J^{\pi,d}$, $q^{\pi,P,d}$ as $q^{\pi,d}$, and $v^{\pi,P,d}$ as $v^{\pi,d}$.
Given a policy $\pi$ and a transition $P$, define matrix $P^{\pi}$ such that $P^{\pi}_{s,s'}=\sum_a\pi(a|s)P_{s, a}(s')$.
For any $\epsilon\in(0, 1)$, $\optpieps$ is defined in the same way as $\optpi$ but with the threshold $\tau$ replaced by $\tau - \epsilon$. Let $\optJeps$ denote the corresponding average reward $J^{\optpieps,r}(s)$ (that is $s$-independent as mentioned).

\section{Results for Ergodic MDPs}
\label{sec:ergodic}

We start by considering a special case of ergodic MDPs, which are self-explorative and often easier to learn compared to the general case of weakly communicating MDPs.
However, even in this special case, the presence of cost constraint already makes the problem highly challenging as discussed below.

Specifically, an MDP is ergodic if for any stationary policy, the induced Markov chain is ergodic (that is, irreducible and aperiodic).
There are several nice properties about ergodic MDPs.
First, the long term average behavior of any stationary policy $\pi$ is independent of the starting state: one can define the occupancy measure (also called stationary distribution) $\mu_{\pi,P}\in[0, 1]^{\SA}$ such that $\mu_{\pi,P}(s, a)=\lim_{T\rightarrow\infty}\frac{1}{T}\E[\sumt\Ind\{s_t=s,a_t=a\}|\pi,P]$ is the fraction of visits to $(s, a)$ in the long run following $\pi$ in an ergodic MDP with transition $P$ (the starting state is irrelevant to this value).
We also write $\mu_{\pi,P}$ as $\mu_{\pi}$ when there is no confusion.
By definition, for any utility function $d\in\fR^{\SA}$, the average utility $J^{\pi,d}(s)$ is also $s$-independent and can be written as $\inner{\mu_{\pi}}{d}$, denoted by $J^{\pi,d}$ for short.

Moreover, ergodic MDPs have finite \textit{mixing time} and \textit{hitting time}, defined as follows:
\begin{align*}
	\tmix &= \max_{\pi}\min\cbr{t\geq 1: \norm{(P^{\pi})^t_{s,\cdot} - \mu_{\pi}}_1 \leq \frac{1}{4},\forall s }, \\
	\thit &= \max_{\pi}\max_s\frac{1}{\mu_{\pi}(s)},
\end{align*}
where $\mu_{\pi}(s)=\sum_a\mu_{\pi}(s, a)$.
In words, mixing time is the maximum time required for any policy starting
at any initial state to make the state distribution $\frac{1}{4}$-close to its stationary distribution,
and hitting time is the maximum inverse stationary probability of visiting any state under any policy.
As in previous work, we assume that $\tmix$, $\thit$ are known, and $T$ is large enough so that $T\geq 30A\max\{\tmix,\thit\}$.

Compared to the finite-horizon setting, one key challenge for learning infinite-horizon average-reward MDPs with constraints is to control the span of the bias function with respect to some estimated transition. 
How to do so is highly unclear even though the same under the true transition is simply bounded by $\bigo{\tmix}$ for an ergodic MDP.
In fact, the MDP associated with the estimated transition might not be ergodic any more.
In the seminal work of~\citep{jaksch2010near} for the unconstrained problem, they show that the span of the optimistic bias function is upper bounded by the diameter of the MDP.
Their arguments, however, are not applicable when constraints are presented.
This brings severe difficulties in the analysis for natural optimism-based approaches.
For example, \citet{singh2020learning} exploit the self-explorative property of ergodic MDPs to analyze an extension of the UCRL algorithm~\citep{jaksch2010near}, but only manage to obtain $\tilo{T^{2/3}}$ bounds for both regret $R_T$ and constraint violation $C_T$.
Moreover, their analysis does not generalize to the case of weakly communicating MDPs.


\subsection{Our Algorithm}\label{sec:ergodic_alg}
To resolve the issue mentioned above, we take a different approach --- we adopt and extend the \emph{policy optimization} algorithm of~\citep{wei2020model} (called MDP-OOMD) from the unconstrained setting to the constrained one.
The advantage of policy optimization is that, instead of finding optimistic policies and transitions based on full planning, it updates policy incrementally based on an estimate of the current policy's bias function, which avoids the need to control the span of bias functions under estimated transition.
This, however, requires a careful design of the estimate of the current policy's bias function, which is our key algorithmic novelty.

We start by describing the framework of our algorithm, which is similar to~\citep{wei2020model}, and then highlight what the key differences are.
The complete pseudocode is presented in \pref{alg:ergodic}.
Specifically, the algorithm proceeds in episodes of $H= \tilo{\tmix\thit}$ steps, with a total of $K=\frac{T}{H}$ episodes (assumed to be an integer for simplicity).
In each episode $k$, the algorithm (\pref{line:execute}) executes the same policy $\pi_k$ for the entire $H$ steps, collecting a trajectory $\calT_k$ of the form $(s_{t_1}, a_{t_1},\ldots, s_{t_2}, a_{t_2})$ for $t_1 = (k-1)H+1$ and $t_2 = kH$.
Then, using this trajectory (together with other statistics), the algorithm (\pref{line:estimate_beta}) invokes a procedure $\estQ$ to compute a bias function estimator $\hatbeta_k \in \fR^{\calS\times\calA}$, such that $\hatbeta_k(s,a)$ approximately tells us how good taking action $a$ at state $s$ and then following $\pi_k$ in the future is.
With such an estimator, the algorithm (\pref{line:update policy}) updates the policy and find $\pi_{k+1}$ for the next episode, using the classic Online Mirror Descent (OMD) framework.
Below, we flesh out the details of each part.

\DontPrintSemicolon 
\setcounter{AlgoLine}{0}
\begin{algorithm}[t]
	\caption{Policy Optimization for Ergodic CMDP}
	\label{alg:ergodic}
	\textbf{Parameter:} episode length $H$, number of episodes $K=T/H$, interval length $N$, learning rate $\theta$, scaling parameter $\eta$, dual variable upper bound $\lambda$, cost slack $\epsilon$; see \pref{eq:parameters}.
	
	\textbf{Initialize:} $\pi_1(a|s)=1/A$ for $(s, a)\in\SA$, $\lambda_1=0$.
	
	\For{$k=1,\ldots,K$}{
		\nl Execute $\pi_k$ for $H$ steps and obtain trajectory $\calT_k$. \label{line:execute}
		
		\nl $\hatbeta_k=\estQ(\calT_k, \P_k, r - \frac{\lambda_k}{\eta}c)$ (\pref{alg:estQ}) where $\P_k$ is empirical transition~\eqref{eq:empirical_transition}. \label{line:estimate_beta}
		
		\For{all $s\in\calS$}{
			\nl Update policy: \label{line:update policy}
			\begin{align*}
				&\pi_{k+1}(\cdot|s)
				= \argmax_{\pi\in\bar{\Delta}}\cbr{\inner{\pi}{\hatbeta_k + u_k} - D(\pi, \pi_k)}
			\end{align*}
			where $D(p, q)=\frac{1}{\theta}\sum_a(p(a)\ln\frac{p(a)}{q(a)} - p(a)+q(a))$, $\bar{\Delta}=\Delta_\calA \cap [\frac{1}{T}, 1]^{\calA}$, and $u_k$ is defined in \pref{app:Pk} (see also \pref{sec:ergodic_alg}). 
			
			\nl Update dual variable 
			\[
			\lambda_{k+1} = \min\cbr{\lambda, \max\cbr{0, \lambda_k + \hatJ_k + \epsilon - \tau}},
			\] where $\hatJ_k=\frac{1}{H-N}\sum_{h=N+1}^{H}c(s^k_h, a^k_h)$ with $s^k_h = s_{(k-1)H+h}$ and $a^k_h = a_{(k-1)H+h}$. \label{line:dual}
		}
	}
	
\end{algorithm}

\paragraph{Bias Function Estimates}
To simultaneously take reward and cost constraint into account, we adopt the common primal-dual approach~\citep{efroni2020exploration,liu2021learning} and consider the adjusted reward function $d = r - \frac{\lambda_k}{\eta}c$ for episode $k$, where $\eta$ is a scaling parameter and $\lambda_k$ is a dual variable (whose update will be discussed later).
Intuitively, this adjusted reward provides a balance between maximizing rewards and minimizing costs.
The procedure $\estQ$ is effectively trying to estimate the bias function associated with the current policy $\pi_k$ and the adjusted reward $d$, that is, $q^{\pi_k, d}$ (up to some additive term that is the same across all $(s,a)$ entries), using data from the trajectory $\calT_k$ of this episode.
The pseudocode of $\estQ$ in shown in \pref{alg:estQ}.
It shares a similar framework as~\citep{wei2020model}: 
for each $s\in\calS$, it collects data from non-overlapping intervals of length $N = \tilo{\tmix}$ that start from state $s$, and also make sure that these intervals are at least $N$ steps apart from each other to reduce the correlation of data (see the while-loop of \pref{alg:estQ}).

However, different from~\citep{wei2020model} which uses standard importance weighting when constructing the estimator, we propose a new method that is critical to our analysis (see last two lines of \pref{alg:estQ}).
Specifically, for each interval $i$ mentioned above, we compute the cumulative adjusted reward $y_i$, and then average them over all intervals starting from state $s$ as a value-function estimate $V(s)$.
Finally, we return the bias function estimate whose $(s,a)$ entry is $d(s,a) + \P_k V$, in light of the right-hand side of the Bellman equation~\eqref{eq:Bellman}. Here, $\P_k$ is the current empirical transition function such that
\begin{equation}\label{eq:empirical_transition}
\P_{k,s,a}(s')=\frac{\N_k(s, a, s')}{\Np_k(s, a)},
\end{equation}
where $\N_k(s, a, s')$ is the number of visits to state-action-state triplet $(s, a, s')$ before episode $k$ and $\Np_k(s,a)=\max\{1,\N_k(s, a)\}$ with $\N_k(s,a) = \sum_{s'}\N_k(s, a, s')$.

The reason of using this new estimator is to ensure that the final estimator $\hatbeta_k(s, a)$ has a reasonable scale (roughly $\bigo{N} = \tilo{\tmix}$), which in turn makes sure that the policy $\pi_k$ is relatively stable across episodes.
On the other hand, the importance-weighted estimator of~\citep{wei2020model} scales with $\frac{1}{\pi_k(a|s)}$ for the $(s,a)$ entry, which could be very large.
This is not an issue for their algorithm since they use a more stable regularizer called log-barrier in the OMD policy update, but this is not viable for us as explained next.

\DontPrintSemicolon
\setcounter{AlgoLine}{0}
\begin{algorithm}[t]
	\caption{\estQ}
	\label{alg:estQ}
	\textbf{Input:} trajectory $\calT=(s_{t_1}, a_{t_1},\ldots, s_{t_2}, a_{t_2})$, empirical transition $\P$, and utility function $d$.
	
	\textbf{Define:} $N=4\tmix\log_2T$.
	
	\For{all $s\in\calS$}{
		\textbf{Initialize:} $\tau\leftarrow t_1, i \leftarrow 0$.
	
		\While{$\tau\leq t_2-N$}{
			\If{$s_{\tau}=s$}{
				$i\leftarrow i+1$.
				
				$y_i=\sum_{t=\tau}^{\tau+N-1}d(s_t, a_t)$.
				
				$\tau\leftarrow\tau+2N$.
			}
			\lElse{$\tau\leftarrow\tau+1$.}
		}
		
		Set $V(s)=\Ind\{i>0\}\frac{1}{i}\sum_{j=1}^iy_j$. 
	}
	
	 \textbf{return} function $Q$ such that $Q(s, a)=d(s, a) + \P V$.
		
\end{algorithm}

\paragraph{Policy Update}
With the estimate $\hatbeta_k$, \pref{alg:ergodic} then updates $\pi_k$ according to the OMD update: $\pi_{k+1}(\cdot|s) =\argmax_{\pi\in\bar{\Delta}}\cbr{\inner{\pi}{\hatbeta_k + u_k} - D(\pi, \pi_k)}$.
Here, $\bar{\Delta} =\Delta_\calA \cap [\frac{1}{T}, 1]^{\calA}$ is a truncated simplex,  $D(p, q)=\frac{1}{\theta}\sum_a(p(a)\ln\frac{p(a)}{q(a)} - p(a)+q(a))$ is the KL-divergence scaled by the inverse of a learning rate $\theta>0$,
and finally $u_k$ is an extra exploration bonus term used to cancel the bias due to using $\P_k$ instead of the true transition $P$ when computing $\hatbeta_k$.
More concretely, $u_k$ is an approximation of $q^{\pi_k,P_k,x_k}$,
where $x_k$ is a reward bonus function defined as
\begin{equation}
	\label{eq:xk}
	x_k(s, a) = \rbr{\frac{1}{\sqrt{\Np_k(s, a)}} + \sum_{a'}\frac{\pi_k(a'|s)}{\sqrt{\Np_k(s, a')}}}\iota,
\end{equation}
for $\iota=\frac{2\lambda N}{\eta}\sqrt{S\ln\frac{2SAT}{\delta}}$ and failure probability $\delta \in (0,1)$, and $P_k$ is an optimistic transition with respect to policy $\pi_k$ and reward bonus $x_k$ that lies in some transition confidence set.
How to exactly compute $P_k$ and $u_k$ via Extended Value Iteration (EVI)~\citep{jaksch2010near} with precision $\epsevi=\frac{1}{T}$ is deferred to \pref{app:Pk} due to space limit.

If $\bar{\Delta}$ is replaced with the standard simplex $\Delta_\calA$, then the update rule would be in the standard form of multiplicative weight update.
However, we use a truncated simplex because our analysis requires a so-called ``interval regret'' guarantee from OMD (that is, regret measured on a specific interval; see the proof of \pref{lem:bound lambda}), and multiplicative weight update over a truncated simplex is a standard way to achieve so. 
This is why we cannot use the log-barrier OMD of~\citep{wei2020model} because as far as we know it does not provide an interval regret guarantee.

\paragraph{Dual Variable Update}
The dual variable $\lambda_{k+1}$ is updated via gradient descent $\lambda_k + \hatJ_k + \epsilon - \tau$, projected back to $[0, \lambda]$; see \pref{line:dual} of \pref{alg:ergodic}.
Here, $\epsilon$ is a cost slack, $\lambda$ is an upper bound for the dual variable, and 
$\hatJ_k$ is the empirical average cost suffered by the learner in the last $H-N$ steps of this episode (discarding the first $N$ steps to ensure that the state-action distribution is close to $\mu_{\pi_k}$ due to ergodicity).
It can be shown that $\hatJ_k$ is an accurate estimate of $J^{\pi_k,c}$; see \pref{lem:hatJ}.

\paragraph{Parameter Tuning}
Finally, we list the exact value of the parameters below (recall $\iota=\frac{2\lambda N}{\eta}\sqrt{S\ln\frac{2SAT}{\delta}}$):
\begin{equation}\label{eq:parameters}
\begin{split}
&H=\lceil 16\tmix\thit(\log_2T)^2\rceil, \quad N=\lceil 4\tmix\log_2T \rceil, \\
&\theta=\min\cbr{1/(4H\iota), \sqrt{\ln T/(4KH^2\iota^2)}}, \\
&\eta=1+2^{10}(\tau-c^0)N\sqrt{S}\ln\rbr{\frac{4SAT^3}{\delta}} \times \\
& \rbr{\sqrt{S^2AT} + \sqrt{HT} + S^{1.5}AH\ln\rbr{\frac{4SAT^3}{\delta})}}, \\
&\lambda=\frac{40\eta}{\tau-c^0}, \quad \epsilon=\min\cbr{\frac{\tau-c^0}{2}, \frac{3\lambda}{K}}.
\end{split}
\end{equation}
Here, $c^0$ is a constant such that there is a (strictly safe) policy $\pi^0$ with $c^0= \max_sJ^{\pi^0,P,c}(s) < \tau$.
We assume that $c^0$ is known (but not the policy $\pi^0$), similarly to prior work such as~\citep{efroni2020exploration,liu2021learning}.

\subsection{Guarantees and Analysis}
The guarantee of \pref{alg:ergodic} is summarized below.
\begin{theorem}
	\label{thm:ergodic}
	With probability at least $1-16\delta$, \pref{alg:ergodic} ensures the following guarantees: $R_T = \tilO{ \frac{\tmix^2\thit}{\tau-c^0}(\sqrt{S^3AT} + \sqrt{S\tmix\thit T} + S^2A\tmix\thit) + \frac{\tmix\thit}{(\tau-c^0)^2} }$ and $C_T=\tilO{\frac{\tmix^4\thit^2S^3A + \tmix^5\thit^3S}{(\tau-c^0)^2} + \frac{\tmix^3\thit^2S^2A}{\tau-c^0}}$.
\end{theorem}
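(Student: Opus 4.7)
The plan is to follow a primal-dual analysis that treats the adjusted reward $d_k = r - \frac{\lambda_k}{\eta}c$ as a single ``reward'' for episode $k$, and then analyze the two moving parts (policy update and dual update) largely independently before combining them via LP duality. I would first invoke a performance difference lemma using the Bellman equation~\eqref{eq:Bellman}: for any comparator policy $\pi$,
\begin{equation*}
J^{\pi,d_k} - J^{\pi_k,d_k} = \sum_{s}\mu_{\pi}(s)\inner{\pi(\cdot|s)-\pi_k(\cdot|s)}{q^{\pi_k,d_k}(s,\cdot)},
\end{equation*}
which reduces bounding the per-episode primal regret to controlling per-state inner products against the true bias $q^{\pi_k,d_k}$. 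I would then decompose this into $\inner{\pi-\pi_k}{\hatbeta_k + u_k}$ (the ``OMD term'') plus error terms $\inner{\pi-\pi_k}{q^{\pi_k,d_k} - \hatbeta_k - u_k}$.

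For the OMD term, I would apply the standard multiplicative-weight/OMD regret bound on the truncated simplex $\bar{\Delta}$ state-by-state, giving $\tilo{\ln(AT)/\theta + \theta\sum_k\nbr{\hatbeta_k+u_k}_\infty^2}$. Because of the scaling chosen for $\estQ$ (averaging sums of length $N$ rather than using importance weights), one shows $\nbr{\hatbeta_k+u_k}_\infty = \tilo{N\lambda/\eta}$, which is the whole point of replacing the importance-weighted estimator of~\cite{wei2020model}. For the error term, I would further split as
\begin{equation*}
q^{\pi_k,d_k} - \hatbeta_k = \underbrace{(q^{\pi_k,d_k}-q^{\pi_k,P_k,d_k})}_{\text{bonus }u_k\text{ handles this}} + \underbrace{(q^{\pi_k,P_k,d_k}-\hatbeta_k)}_{\text{estimator concentration}},
\end{equation*}
where $P_k$ is the optimistic transition used to build $u_k$. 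The first difference equals $q^{\pi_k,x_k}$ up to sign under the EVI construction, so its inner product against $\pi-\pi_k$ is cancelled (in expectation) by the bonus $u_k$; this is where the explicit form of $x_k$ in~\eqref{eq:xk} and the transition confidence set analysis enter. The second difference is controlled by showing the intervals started from each state $s$ in \pref{alg:estQ} (i) are numerous enough with high probability (using $\thit$ and the $2N$ spacing, plus a mixing bound), and (ii) their $N$-step returns are $\tilo{N}$-close to $v^{\pi_k,d_k}(s)$ by mixing time arguments, giving an $\tilo{N/\sqrt{i(s)}}$ bound, accumulated via a Freedman-style martingale argument.

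Next I would analyze the dual update. A standard projected gradient analysis on $\lambda\in[0,\lambda]$ against the linear loss $\lambda\mapsto\lambda(\tau-\epsilon-\hatJ_k)$ yields $\sum_k(\lambda^\star-\lambda_k)(\hatJ_k+\epsilon-\tau)\leq \tilo{\lambda^2/\eta + K\epsilon\lambda}$ for any $\lambda^\star\in[0,\lambda]$, with $\hatJ_k$ being a good estimate of $J^{\pi_k,c}$ (discarding the first $N$ steps as in \pref{line:dual}, so that mixing makes the remaining $H-N$ samples look like i.i.d.\ draws from $\mu_{\pi_k}$; this is the content of \pref{lem:hatJ} referenced in the text). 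Combining with the OMD analysis and choosing $\pi = \optpieps$ yields an inequality of the form
\begin{equation*}
R_T - \tfrac{\lambda_k}{\eta}(\text{cost gap}_T) \leq \tilo{\text{OMD regret}+\text{estimation error}}+ K\epsilon,
\end{equation*}
from which one recovers the reward regret by picking $\lambda^\star=0$ and recovers a bound on constraint violation by picking $\lambda^\star=\lambda$, using the slack $\tau-c^0>0$ and the strictly-safe policy $\pi^0$ to show $\lambda_k$ stays bounded by $\lambda = \tilo{\eta/(\tau-c^0)}$, which is the classical Lagrangian boundedness trick (an ``interval regret'' variant is used here because the bound on $\lambda_k$ must be argued episode-by-episode). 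Plugging in the choices of $H,N,\theta,\eta,\lambda,\epsilon$ from~\eqref{eq:parameters} and optimizing gives the stated $R_T$ and $C_T$ bounds.

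The hard part will be the estimator analysis: showing that $\hatbeta_k$ produced by \pref{alg:estQ} is a sufficiently accurate, low-variance approximation of $q^{\pi_k,P_k,d_k}$ despite the correlations across episodes (the estimator uses $\P_k$ built from all past data, not just $\calT_k$) and the mixing-induced bias inside each length-$N$ interval. The coupled ergodicity/mixing arguments (to guarantee enough well-spaced intervals starting at every $s$) and the dependence of the variance on $\thit$ and $\tmix$ drive the final bound and must be carried out carefully enough that the span of $\hatbeta_k$ stays $\tilo{\tmix}$, which is what ultimately replaces the span-control difficulty that blocked the UCRL-style approach of~\cite{singh2020learning}.
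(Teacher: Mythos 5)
Your overall architecture matches the paper's: the same adjusted reward $d_k = r - \frac{\lambda_k}{\eta}c$, the same performance-difference + OMD decomposition with the bonus $u_k$ absorbing the transition error, the same role for the bounded-scale estimator in the OMD variance term, and the same interval-regret-based argument that $\lambda_k$ stays below $\lambda$ (your ``pick $\lambda^\star=0$ vs.\ $\lambda^\star=\lambda$'' packaging of the primal--dual combination is more textbook than the paper's case analysis on the dual update plus telescoping of $\lambda_k^2$, but either route works and both get the constant violation from the $-K\epsilon$ slack).

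There is, however, a genuine gap: your proof only ever controls average-reward quantities $J^{\pi_k,r}$ and $J^{\pi_k,c}$, but the theorem is about the \emph{realized} sums $\sum_t r(s_t,a_t)$ and $\sum_t c(s_t,a_t)$. The paper's decomposition has a third term, $\dev = \sumk\sumh (J^{\pi_k,r} - r(s^k_h,a^k_h))$ (and its analogue for $c$), which your sketch silently identifies with zero. This term is not handled by a per-episode mixing or martingale argument: conditioned on $s^k_1$, each episode contributes a bias of order $v^{\pi_k,r}(s^k_1) - \E[v^{\pi_k,r}(s^{k+1}_1)] = \Theta(\tmix)$, and summing $K = T/H$ of these gives $\Theta(T/\thit)$ --- linear in $T$. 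The only way to make it sublinear is to telescope across episodes, which leaves the cross terms $\sum_{k=2}^K\rbr{v^{\pi_k,d}(s^k_1) - v^{\pi_{k-1},d}(s^k_1)}$, and bounding \emph{those} is exactly where the policy-stability lemma (\pref{lem:stab}, $|v^{\pi_k,d}(s)-v^{\pi_{k-1},d}(s)| \leq 65\theta HN^2\iota$, which itself relies on the multiplicative stability $|\pi_k(a|s)-\pi_{k-1}(a|s)| \le 8\theta H\iota\,\pi_{k-1}(a|s)$ afforded by the bounded estimator) is indispensable; see \pref{lem:stab episode}. You cite the bounded span of $\hatbeta_k$ only for the OMD variance term, but its more critical consequence --- slow drift of $\pi_k$, hence a sublinear $\dev$ --- is missing from your argument, and without it neither the $R_T$ nor the $C_T$ bound follows.
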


Looking only at the dependence on $T$, our bounds are $R_T = \tilo{\sqrt{T}}$ and $C_T = \tilo{1}$, improving the $\tilo{T^{2/3}}$ bounds of~\citep{singh2020learning} for both metrics. Below, we show a proof sketch of this theorem.

\paragraph{Regret} For regret $R_T$, we start by decomposing it as
\begin{align*}
	R_T&=\sumt (\optJ - r(s_t, a_t))
	= \underbrace{T(\optJ - \optJeps)}_{\diff} + \\
	& H\underbrace{\sumk (\optJeps - J^{\pi_k,r})}_{\reg}
 + \underbrace{\sumk\sum_{h=1}^H (J^{\pi_k,r} - r(s^k_h, a^k_h))}_{\dev},
\end{align*}
where $s^k_h = s_{(k-1)H+h}$ and $a^k_h = a_{(k-1)H+h}$.
We bound each of the three terms separately below.
\paragraph{Bounding $\diff$} $\diff$ is at most $\frac{\epsilon T}{\tau-c^0}$ by using the following lemma directly, which shows the difference in optimal reward when shrinking the constraint threshold by $\epsilon$.
\begin{lemma}
	\label{lem:Jeps}
	For $\epsilon\in[0, \tau-c^0]$, $\optJ - \optJeps \leq \frac{\epsilon}{\tau - c^0}$.
\end{lemma}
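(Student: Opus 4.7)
The plan is to prove \pref{lem:Jeps} by exhibiting a feasible policy for the tighter-constrained problem (threshold $\tau-\epsilon$) whose reward is close to $\optJ$, namely a convex mixture of $\optpi$ and the strictly-safe policy $\pi^0$. Because the MDP is ergodic, every stationary policy $\pi$ induces a unique occupancy measure $\mu_\pi\in[0,1]^{\SA}$, and the set of achievable occupancy measures is precisely the polytope
\[
\calK = \cbr{\mu\in\fR_+^{\SA} : \sums\sum_a \mu(s,a) = 1,\ \sum_a \mu(s',a) = \sumsaf \mu(s,a) P_{s,a}(s')\ \forall s'}.
\]
In particular $\calK$ is convex, so for any $\alpha \in [0,1]$ the mixture $\mu_\alpha := \alpha\,\mu_{\pi^0} + (1-\alpha)\,\mu_{\optpi}$ lies in $\calK$ and is the occupancy measure of some stationary policy $\pi_\alpha$, for which $J^{\pi_\alpha,d} = \inner{\mu_\alpha}{d}$ for any utility $d$.

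First I would pick $\alpha = \frac{\epsilon}{\tau - c^0}$, which lies in $[0,1]$ by the hypothesis $\epsilon \in [0,\tau-c^0]$. Since $J^{\pi^0,c} \le c^0$ and $J^{\optpi,c} \le \tau$ (feasibility of $\optpi$), linearity of $J^{\cdot,c}$ in the occupancy measure gives
\begin{align*}
J^{\pi_\alpha,c}
&= \alpha J^{\pi^0,c} + (1-\alpha) J^{\optpi,c}
\le \alpha c^0 + (1-\alpha)\tau
= \tau - \alpha(\tau - c^0)
= \tau - \epsilon,
\end{align*}
so $\pi_\alpha$ is feasible for the $(\tau-\epsilon)$-constrained problem. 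By definition of $\optpieps$ as the optimum of that problem,
\[
\optJeps \ \ge\ J^{\pi_\alpha,r} \ =\ \alpha J^{\pi^0,r} + (1-\alpha)\optJ.
\]

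Finally I would rearrange and bound: $\optJ - \optJeps \le \alpha\,(\optJ - J^{\pi^0,r}) \le \alpha = \frac{\epsilon}{\tau-c^0}$, using $\optJ \le 1$ and $J^{\pi^0,r}\ge 0$ since $r\in[0,1]^{\SA}$.

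The only subtle step is the realizability of the mixed occupancy measure $\mu_\alpha$ as an actual stationary policy's occupancy, but this is a standard LP fact for ergodic MDPs (the policy is recovered via $\pi_\alpha(a|s) \propto \mu_\alpha(s,a)$, well-defined because $\mu_\alpha(s)>0$ for every $s$ under ergodicity). Everything else is linearity of $J^{\pi,d}$ in $\mu_\pi$ together with the chosen value of $\alpha$, so I do not anticipate a real obstacle.
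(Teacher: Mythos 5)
Your proposal is correct and matches the paper's own proof essentially verbatim: both take the convex mixture $\gamma\,\mu_{\pi^0}+(1-\gamma)\,\mu_{\optpi}$ with $\gamma=\frac{\epsilon}{\tau-c^0}$, verify feasibility at threshold $\tau-\epsilon$ by linearity of $J^{\pi,c}=\inner{\mu_\pi}{c}$, and conclude via $\optJ-\optJeps\leq\gamma(\optJ-J^{\pi^0,r})\leq\gamma$. No gaps.
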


\paragraph{Bounding \reg} We decompose the $\reg$ term as
$
	\reg= \sumk (J^{\optpieps, r - \frac{\lambda_k}{\eta}c} - J^{\pi_k, r - \frac{\lambda_k}{\eta}c})
+ \sumk\frac{\lambda_k}{\eta}(J^{\optpieps, c} - J^{\pi_k, c}).
$
The first term can be rewritten by the value difference lemma (\pref{lem:val diff}), and is bounded following the standard OMD analysis.
The final result is shown below.
\begin{lemma}
	\label{lem:po}
	For any policy $\piref$ and a subset of episodes $\calI=\{i,i+1,\ldots,j\}\subseteq[K]$, we have: $\sum_{k\in\calI} J^{\piref, r - \frac{\lambda_k}{\eta}c} - J^{\pi_k, r - \frac{\lambda_k}{\eta}c} 
	\leq \frac{\lambda}{4(\tau-c^0)}$ with probability at least $1-4\delta$.
\end{lemma}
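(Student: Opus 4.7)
The plan is to reduce the left-hand side to a per-state online linear optimization problem via the average-reward value-difference lemma, and then apply a standard multiplicative-weights interval-regret bound for the OMD update in \pref{line:update policy} on the truncated simplex $\bar{\Delta}$. The truncation to $[\tfrac{1}{T},1]^{\calA}$ is exactly what delivers the ``interval'' aspect, since $\pi_k$ never becomes too concentrated and hence the KL divergence from $\pi_i$ to $\pi_k$ stays bounded uniformly in any window $\calI \subseteq [K]$. This is also why KL regularization is viable here even though it would not be in the setup of \citet{wei2020model}: our EstimateQ returns $\hatbeta_k$ of moderate magnitude $\tilO{N\lambda/\eta}$ rather than one inflated by $1/\pi_k$ importance weights.

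Concretely, set $d_k = r - (\lambda_k/\eta)c$. By the value-difference lemma,
$$J^{\piref, d_k} - J^{\pi_k, d_k} = \sum_{s} \mu_{\piref}(s) \inner{\piref(\cdot|s) - \pi_k(\cdot|s)}{q^{\pi_k, d_k}(s, \cdot)},$$
so it suffices to bound the inner sum uniformly in $s$ and then take the convex combination against $\mu_{\piref}(\cdot)$. Adding and subtracting $\hatbeta_k + u_k$ inside the inner product produces an OMD piece and an error piece. For the OMD piece, multiplicative weights on $\bar{\Delta}$ with learning rate $\theta$ gives, for each $s$,
$$\sum_{k\in\calI}\inner{\piref(\cdot|s) - \pi_k(\cdot|s)}{\hatbeta_k + u_k} \leq \frac{\ln(AT)}{\theta} + \theta\sum_{k\in\calI}\|\hatbeta_k(s,\cdot) + u_k(s,\cdot)\|_\infty^2,$$
uniformly over windows $\calI$ (the $\ln(AT)$ in the numerator absorbs both the $\ln A$ from KL initialization and the $\ln T$ from truncation). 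The $\ell_\infty$ bound $\|\hatbeta_k + u_k\|_\infty = \tilO{H\iota}$, coming from the design of \pref{alg:estQ} and the EVI accuracy $\epsevi = 1/T$, then plugs in with the parameter choices in \pref{eq:parameters} to yield a contribution well below $\lambda/(8(\tau-c^0))$.

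For the error piece, I would use \pref{eq:Bellman} together with the definition $\hatbeta_k = d_k + \P_k V_k$ from \pref{alg:estQ} to write
$$q^{\pi_k, d_k}(s, a) - \hatbeta_k(s, a) - u_k(s, a) = (P_{s,a} - \P_{k,s,a})v^{\pi_k, d_k} + \P_{k,s,a}(v^{\pi_k, d_k} - V_k) - J^{\pi_k, d_k} - u_k(s, a).$$
The $J^{\pi_k, d_k}$ term is $a$-independent and vanishes in $\inner{\piref(\cdot|s) - \pi_k(\cdot|s)}{\cdot}$. The transition-estimation term $(P - \P_k) v^{\pi_k, d_k}$ is, on a high-probability event, bounded by a Bernstein bound in terms of $x_k$ from \pref{eq:xk} scaled by the ergodic span bound $\sp(v^{\pi_k, d_k}) = \tilO{\tmix\lambda/\eta}$, and this is exactly what $u_k \approx q^{\pi_k, P_k, x_k}$ is designed to cancel after another application of \pref{eq:Bellman} in the optimistic bonus MDP driven by $x_k$. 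The remaining term $\P_k(v^{\pi_k, d_k} - V_k)$ is handled by concentration of the interval averages formed in \pref{alg:estQ}, leveraging the $N$-step separation between intervals (to obtain approximate independence after mixing) and the ergodic hitting time $\thit$ (to lower bound the number of intervals emanating from each $s$).

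The main obstacle, I expect, is the tight control of $\P_k(v^{\pi_k, d_k} - V_k)$. Each interval sum $y_j$ in \pref{alg:estQ} has, after $\tmix$-mixing from its starting state $s$, conditional mean equal to $v^{\pi_k, d_k}(s) + NJ^{\pi_k, d_k}$ up to an $O(\lambda\tmix/\eta)$ slack; the $s$-independent shift $NJ^{\pi_k, d_k}$ cancels in the inner product against $\piref(\cdot|s) - \pi_k(\cdot|s)$, but the effective number of intervals from each $s$ is random, concentrated only around $\Omega(H/(\thit N))$, and the summands are weakly correlated across intervals. A martingale-Bernstein argument conditioning on this number, combined with a preliminary concentration on the number itself, yields an $\tilO{\tmix\lambda\sqrt{\thit/H}/\eta}$ per-episode bound; summing over $k\in\calI$ and combining with the OMD and $u_k$-cancellation contributions produces the target $\lambda/(4(\tau-c^0))$. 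The probability $1-4\delta$ arises from a union bound over the failure events of multiplicative-weights stability, empirical-transition concentration, EVI precision, and EstimateQ concentration.
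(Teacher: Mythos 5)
Your overall architecture matches the paper's: value-difference lemma, interval regret of multiplicative weights over the truncated simplex $\bar{\Delta}$, and a decomposition of $q^{\pi_k,d_k}-\hatbeta_k-u_k$ into an estimation-bias piece and a transition-error piece that is cancelled by the bonus $u_k\approx q^{\pi_k,P_k,x_k}$ (with the leftover $\sum_{k}J^{\pi_k,P_k,x_k}$ absorbed via the visitation-count sum of \pref{lem:sum mu}). Two smaller deviations are worth noting but are not fatal: (i) you invoke the OMD regret bound directly against $\piref$, which need not lie in $\bar{\Delta}$; the paper first replaces $\piref$ by its smoothed version $\pi(a|s)=(1-\tfrac{A}{T})\piref(a|s)+\tfrac1T\in\bar{\Delta}$, paying an extra $\tfrac{12A\lambda}{\eta}$, and this step is needed because the first-order optimality condition in the OMD analysis only applies to comparators in the feasible set; (ii) the paper bounds the transition error $(P_{s,a}-\P_{k,s,a})\hatV_k$ with Weissman's $\ell_1$ inequality against $\norm{\hatV_k}_\infty\le \tfrac{2\lambda N}{\eta}$, which is exactly where the specific form of $x_k$ with $\iota=\tfrac{2\lambda N}{\eta}\sqrt{S\ln\tfrac{2SAT}{\delta}}$ comes from; a Bernstein-type bound would produce a different bonus than the one the algorithm actually adds.

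The genuine gap is in your treatment of $\P_{k}(v^{\pi_k,d_k}-\hatV_k)$. You propose to prove a high-probability \emph{per-episode} concentration bound for the interval-average estimator (of order $\tmix\lambda\sqrt{\thit/H}/\eta$ per episode, via a within-episode martingale argument over the $\Omega(H/(\thit N))$ intervals) and then sum over $k\in\calI$. Summing absolute per-episode deviations forfeits the $\sqrt{|\calI|}$ aggregation across episodes: with $H=\Theta(\tmix\thit\log^2 T)$ and $N=\Theta(\tmix\log T)$ the per-episode deviation is really of order $\tfrac{\lambda N}{\eta\sqrt{\log T}}$ (the number of intervals per state is only $\Theta(\log T)$), so your sum is $\Theta\rbr{\tfrac{K\lambda N}{\eta\sqrt{\log T}}}$, i.e.\ linear in $K$ and hence in $T$; this exceeds the target $\tfrac{\lambda}{4(\tau-c^0)}=\tfrac{10\eta}{(\tau-c^0)^2}$ under the choice of $\eta$ in \pref{eq:parameters}, which only budgets for a $\sqrt{K}$-type contribution. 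The paper avoids analyzing within-episode concentration entirely: it only needs the conditional near-unbiasedness $|\E_k[\hatV_k(s)]-v^{\pi_k,d_k}(s)-NJ^{\pi_k,d_k}|\le\tfrac{\lambda}{\eta T}$ (\pref{lem:hatVk}, inherited from the estimator analysis of \citet{wei2020model}), after which $\sum_{k\in\calI}\sumsa\mu_{\pi}(s)(\pi(a|s)-\pi_k(a|s))P_{s,a}(\E_k[\hatV_k]-\hatV_k)$ is a single martingale difference sequence \emph{across episodes} with increments bounded by $\order(\lambda N/\eta)$, and one application of \pref{lem:azuma} gives the required $\tfrac{4\lambda N}{\eta}\sqrt{|\calI|\ln\tfrac{4T^3}{\delta}}$. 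You should restructure this step so that the randomness of $\hatV_k$ is aggregated across $k$ rather than bounded episode by episode.
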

For the second term, we have
\begin{align*}
	&\sumk\frac{\lambda_k}{\eta}(J^{\optpieps, c} - J^{\pi_k, c}) \overset{\text{(i)}}{\leq} \sumk\frac{\lambda_k}{\eta}\rbr{\tau - \epsilon - J^{\pi_k, c}}\\
	&\overset{\text{(ii)}}{\lesssim} \sumk\frac{\lambda_k}{\eta}\rbr{\tau - \epsilon - \hatJ_k} + \sumk\frac{\lambda_k}{\eta}(\hatJ_k - \E_k[\hatJ_k])\\
	&\overset{\text{(iii)}}{\leq} \frac{1}{\eta}\sumk\lambda_k(\lambda_k-\lambda_{k+1}) + \frac{\tau^2K}{\eta} + \tilO{\frac{\lambda}{\eta}\sqrt{K}}\\
	&\overset{\text{(iv)}}{=} \tilO{\frac{K}{\eta} + \frac{\lambda}{\eta}\sqrt{K} }
	\overset{\text{(v)}}{=} \tilO{\frac{\lambda}{\tau-c^0}}.
\end{align*}
Here, (i) is by the definition of $\optpieps$; (ii) is because $\E_k[\hatJ_k]$ is a good estimate of $J^{\pi_k, c}$ ($\E_k$ denotes the expectation given everything before episode $k$);
(iii) applies Azuma's inequality and the following argument: if $\lambda_{k+1}>0$, then $\tau-\epsilon-\hatJ_k \leq \lambda_k-\lambda_{k+1}$ by the definition of $\lambda_k$;
otherwise, $\lambda_k\leq\tau-\epsilon-\hatJ_k<\tau$ and $\lambda_k(\tau-\epsilon-\hatJ_k)\leq\tau^2$;
(iv) is because
$\sumk\lambda_k(\lambda_k-\lambda_{k+1}) = \frac{1}{2}\sumk\rbr{ \lambda_k^2 - \lambda_{k+1}^2 + (\lambda_{k+1} - \lambda_k)^2 } \leq \frac{K}{2}$,
where the last inequality is by $\lambda_1=0$ and $|\lambda_k-\lambda_{k+1}|\leq 1$;
(v) is by the value of the parameters in \pref{eq:parameters}.
Putting everything together, we have $\reg=\tilo{\frac{\lambda}{\tau-c^0}}$.

\paragraph{Bounding \dev} We prove a more general statement saying that $\sumk\sumh(J^{\pi_k,d} - d(s^k_h,a^k_h))\lesssim \lambda$ for any utility function $d\in[0, 1]^{\SA}$; see \pref{lem:stab episode}.
The idea is as follows. 
Using the Bellman equation~\eqref{eq:Bellman},
the left-hand side is equal to $\sumk\sumh(P_{s^k_h, a^k_h}v^{\pi_k,d} - q^{\pi_k,d}(s^k_h, a^k_h))$, which can then be decomposed as the sum of three terms:
$\sumk\sumh(P_{s^k_h, a^k_h}v^{\pi_k,d} -  v^{\pi_k,d}(s^k_{h+1}))$,
$\sumk\sumh(v^{\pi_k,d}(s^k_{h}) -  q^{\pi_k,d}(s^k_h, a^k_h))$,
and 
$\sumk\sumh(v^{\pi_k,d}(s^k_{h+1}) - v^{\pi_k,d}(s^k_{h}))$.
The first two terms are sums of martingale difference sequence and of order $\tilo{\tmix\sqrt{T}}$ by Azuma's inequality.
The last term can be rearranged and telescoped to $\sum_{k=2}^K(v^{\pi_k,d}(s^k_1) - v^{\pi_{k-1},d}(s^k_1))$ (dropping two negligible terms).
Now, this is exactly the term where we need the stability of $\pi_k$: as long as $\pi_k$ changes slowly, this bound is sublinear in $K$.
As discussed, we ensure this by using a new estimator $\hatbeta_k$ whose scale is nicely bounded, allowing us to show the following.
\begin{lemma}
	\label{lem:stab}
	For any $k$, we have $\abr{\pi_k(a|s) - \pi_{k-1}(a|s)} \leq 8\theta H\iota\pi_{k-1}(a|s)$ and $\abr{v^{\pi_k,d}(s) - v^{\pi_{k-1},d}(s)} \leq 65\theta HN^2\iota$ where $d\in[0, 1]^{\SA}$ is any utility function.
\end{lemma}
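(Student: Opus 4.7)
The plan is to establish the two bounds in sequence, with the first being a standard multiplicative-weight stability argument and the second a Poisson-equation argument that leverages the first.

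For the first bound, the OMD update with KL-divergence on the truncated simplex $\bar{\Delta}$ reduces to clipped multiplicative weights $\pi_k(a|s) \propto \pi_{k-1}(a|s) \exp(\theta g(s,a))$ followed by projection onto $[\tfrac{1}{T}, 1]^{\calA}$, with gradient $g = \hatbeta_{k-1} + u_{k-1}$. I would first argue $\|g\|_\infty = O(H \iota)$. For $\hatbeta_{k-1}$, the estimator definition in \pref{alg:estQ} together with $\|r - \tfrac{\lambda_{k-1}}{\eta} c\|_\infty \leq 1 + \lambda/\eta = O(1/(\tau-c^0))$ gives $|\hatbeta_{k-1}(s,a)| \leq (N+1)(1 + \lambda/\eta) = O(\iota)$, since $\iota \geq 2\lambda N/\eta$ by definition. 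For $u_{k-1}$, which approximates the bias function of $\pi_{k-1}$ under the optimistic transition $P_{k-1}$ with reward bonus $x_{k-1}$ satisfying $\|x_{k-1}\|_\infty \leq 2\iota$, the magnitude of this bias is $O(H\iota)$ via EVI guarantees (once $P$ lies in the confidence set, so the optimistic MDP admits a diameter bound of order $H$). The parameter choice $\theta \leq 1/(4H\iota)$ then ensures $|\theta g| \leq 1/4$, and a standard multiplicative-weight computation (with the $1/T$-truncation contributing only an $O(A/T)$ correction that is absorbed) yields $|\pi_k(a|s) - \pi_{k-1}(a|s)| \leq 8 \theta H \iota \, \pi_{k-1}(a|s)$.

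For the second bound, let $\Delta v = v^{\pi_k, d} - v^{\pi_{k-1}, d}$ and $f(s) = \sum_a (\pi_k(a|s) - \pi_{k-1}(a|s)) q^{\pi_k, d}(s, a)$. Subtracting the Bellman equation \pref{eq:Bellman} for $\pi_{k-1}$ from that for $\pi_k$ and taking the $\pi_{k-1}$-average over actions yields the Poisson-like identity $(I - P^{\pi_{k-1}}) \Delta v = f - (J^{\pi_k, d} - J^{\pi_{k-1}, d}) e$; applying $\mu_{\pi_{k-1}}$ on both sides (using its $P^{\pi_{k-1}}$-stationarity) shows $J^{\pi_k, d} - J^{\pi_{k-1}, d} = \mu_{\pi_{k-1}} \cdot f$, so the right-hand side simplifies to $\bar{f} := f - (\mu_{\pi_{k-1}} \cdot f) e$, which has zero mean under $\mu_{\pi_{k-1}}$. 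By geometric mixing of $P^{\pi_{k-1}}$, the series $w = \sum_{t \geq 0} (P^{\pi_{k-1}})^t \bar{f}$ converges with $\|w\|_\infty = O(\tmix \|f\|_\infty)$. Combining Part~1 with the standard span bound $\sp(q^{\pi_k, d}) = O(\tmix)$ for ergodic MDPs with $d \in [0,1]^{\SA}$ (noting that shifting $q^{\pi_k, d}$ by a constant leaves $f$ invariant, so we may take $|q^{\pi_k, d}| = O(\tmix)$ pointwise) yields $\|f\|_\infty = O(\theta H \iota \tmix)$, hence $\|w\|_\infty = O(\theta H \iota \tmix^2)$. Finally, $\Delta v = w + c e$ with $c$ pinned by the normalization $\mu_{\pi_k} \cdot v^{\pi_k, d} = \mu_{\pi_{k-1}} \cdot v^{\pi_{k-1}, d} = 0$ inherited from the preliminaries; identifying $c = (\mu_{\pi_{k-1}} - \mu_{\pi_k}) \cdot v^{\pi_k, d}$ and using $\|v^{\pi_k, d}\|_\infty = O(\tmix)$ together with the standard sensitivity bound $\|\mu_{\pi_k} - \mu_{\pi_{k-1}}\|_1 \leq 2\tmix \cdot \max_s \|\pi_k(\cdot|s) - \pi_{k-1}(\cdot|s)\|_1 = O(\tmix \cdot \theta H \iota)$ gives $|c| = O(\theta H \iota \tmix^2)$ as well. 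Since $N = \Theta(\tmix \log T)$, combining these yields $|\Delta v(s)| \leq 65 \theta H N^2 \iota$ after absorbing numerical and logarithmic constants.

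The main obstacle, in my view, is the magnitude control $\|u_{k-1}\|_\infty = O(H\iota)$ in Part~1: this requires bounding the bias function of $\pi_{k-1}$ under the \emph{optimistic} transition rather than the true one, which is exactly the span-control issue that motivates the paper's policy-optimization approach in the first place. This has to be handled by restricting the EVI search to transitions whose induced Markov chain admits a bounded diameter (which holds with high probability once the true $P$ sits inside the confidence set constructed by the algorithm). The remaining ingredients --- multiplicative-weight stability, the Poisson equation, and the ergodic-chain perturbation bound on stationary distributions --- are standard and require only careful constant-tracking.
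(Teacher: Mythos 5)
Your proposal is correct, and the two halves deserve separate comments. For the first inequality you follow essentially the paper's own route: bound $\|\hatbeta_{k-1}+u_{k-1}\|_\infty = O(H\iota)$ (the paper's \pref{lem:Q span}, which rests on the EVI span bound \pref{lem:EVI span} under the event $P\in\calQ_k$ --- exactly the obstacle you single out), then invoke multiplicative-weight stability over the truncated simplex with $\theta \leq 1/(4H\iota)$. The only loose spot is that the effect of the projection onto $\bar{\Delta}$ is waved through as an ``$O(A/T)$ correction''; the paper instead cites a lemma showing the clipped update still satisfies the two-sided ratio bound $\pi_k(a|s)\in[e^{-4\theta H\iota},e^{4\theta H\iota}]\pi_{k-1}(a|s)$, which is the cleaner way to phrase it, but this is cosmetic.

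For the second inequality you take a genuinely different route. The paper writes $v^{\pi,d}(s)=\sum_{t=0}^{N-1}\langle (P^{\pi})^t_{s,\cdot},d^{\pi}\rangle - NJ^{\pi,d} + (\text{tail} \leq T^{-3})$ and bounds the difference of the two truncated series term by term, the key step being the telescoping estimate $\|((P^{\pi_k})^t-(P^{\pi_{k-1}})^t)d^{\pi_k}\|_\infty\leq 8\theta H\iota\, t$, which summed over $t<N$ produces the $N^2$ factor directly. You instead subtract the two Bellman equations, solve the resulting Poisson equation $(I-P^{\pi_{k-1}})\Delta v=\bar f$ by the Neumann series, and pin the additive constant via $\mu_\pi\cdot v^{\pi,d}=0$ together with a stationary-distribution perturbation bound. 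Both arguments are sound; yours yields $O(\theta H\iota\,\tmix^2)$, which is tighter than $65\theta HN^2\iota$ by logarithmic factors (since $N=\Theta(\tmix\log T)$) and so certainly implies the stated bound, at the cost of importing two extra standard facts (solvability/uniqueness of the Poisson equation on an ergodic chain, and the sensitivity bound $\|\mu_{\pi_k}-\mu_{\pi_{k-1}}\|_1 = O(\tmix\max_s\|\pi_k(\cdot|s)-\pi_{k-1}(\cdot|s)\|_1)$) that the paper's more elementary truncation avoids. The paper's route also has the minor advantage of not needing the normalization $\mu_\pi\cdot v^{\pi,d}=0$ explicitly. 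If you want the literal constant $65$ you would still need to track the implied constants in your $O(\cdot)$'s against the factor $N^2\geq 16\tmix^2(\log_2 T)^2$, but there is ample slack there.
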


Putting everything together then finishes the proof for $R_T$. 

\paragraph{Constraint Violation}
For $C_T$, we decompose as: $C_T = \sumk\sum_{h=1}^H (c(s^k_h, a^k_h) - J^{\pi_k,c}) + H\sumk(J^{\pi_k,c}-\tau)$.
The first term is similar to $\dev$ and is of order $\lambda$ by \pref{lem:stab episode}.
The second term is roughly $H\sumk(\hatJ_k-\tau)$ (recall $\hatJ_k$ is a good estimator of $J^{\pi_k,c}$), and can be further bounded by $H(\lambda - K\epsilon)$.
The reason of the last step is that in \pref{lem:bound lambda}, using the interval regret property (\pref{lem:po}) ensured by our OMD update, we show $\lambda_k<\lambda$, that is, the truncation at $\lambda$ never happens in the update rule of $\lambda_k$ (with high probability).
This implies $\lambda_{k+1} \geq \lambda_k + \hatJ_k + \epsilon - \tau$ by definition,
and rearranging thus shows $\sumk (\hatJ_k-\tau) \leq \sumk (\lambda_{k+1} - \lambda_k - \epsilon) = \lambda_{K+1} - K\epsilon \leq \lambda - K\epsilon$.

Now if $\epsilon=3\lambda/K$ (\emph{c.f.} \pref{eq:parameters}), the negative term $-HK\epsilon$ above cancels out all the positive terms and $C_T\leq 0$.
Otherwise, we have $T=\tilo{\frac{N^2H^2S^3A + N^2H^3S}{(\tau-c^0)^2} + \frac{NH^2S^2A}{\tau-c^0}}$ by the definition of $\epsilon$, and the trivial bound $C_T\leq T$ concludes the proof after we plug in the definition of $N$ and $H$.

\section{Results for Weakly Communicating MDPs}
\label{sec:weak}

In this section, we drop the ergodic assumption and consider the most general case of weakly communicating MDPs.
As in the unconstrained case, the span of the bias function of the optimal policy $\optpi$ plays an important role in this case and is unavoidable in the regret bound. 
More concretely, our bounds depend on $\sp(v^{\optpi,r})$ and $\sp(v^{\optpi,c})$, and our algorithm assumes knowledge of these two parameters, which we also write as $\sp^{\star}_r$ and $\sp^{\star}_c$ for short.
We note that even in the unconstrained case, all existing algorithms whose regret bound depends on $\sp^{\star}_r$ need the knowledge of $\sp^{\star}_r$.


Weakly communicating MDPs impose extra challenges in learning.
Specifically, there is no uniform bound for $\sp(v^{\pi,r})$ and $\sp(v^{\pi,c})$ for all stationary policy $\pi$ (while in the ergodic case they are both $\tilo{\tmix}$).
It is also unclear how to obtain an accurate estimate of a policy's bias function as in ergodic MDPs, which as we have shown is an important step for policy optimization algorithm.
We are thus not able to extend the approach from \pref{sec:ergodic} to this general case.
Instead, we propose to solve the problem via a finite-horizon approximation, which is in spirit similar to another algorithm of~\citep{wei2020model} (called Optimistic $Q$-Learning).


Specifically, we still divide the $T$ steps into $K$ episodes, each of length $H$.
In each episode, we treat it as an episodic finite-horizon MDP, and try to find a good (non-stationary) policy through the lens of occupancy measure, in which expected reward and cost are both linear functions and easy to optimize over.
Concretely, consider a fixed starting state $s$, a non-stationary policy $\pi\in(\Delta_{\calA})^{\calS\times[H]}$ whose behavior can vary in different steps within an episode, and an inhomogeneous transition function $P=\{P_{s, a, h}\}_{(s, a, h)\in\SA\times[H]}$ where $P_{s, a, h}\in \Delta_\calS$ specifies the probability of next state after taking action $a$ at state $s$ and step $h$.
The corresponding occupancy measure $\nu_{\pi,P,s}\in[0, 1]^{\SA\times[H]\times\calS}$ is then such that $\nu_{\pi,P,s}(s', a, h, s'')$ is the probability of visiting state $s'$ at step $h$, taking action $a$, and then transiting to state $s''$, if the learner starts from state $s$, executes $\pi$ for the next $H$ steps, and the transition dynamic follows $P$. 



Conversely, a function $\nu \in[0, 1]^{\SA\times[H]\times\calS}$ is an occupancy measure with respect to a starting state $s$, some policy $\pi_\nu$, and transition $P_\nu$ if and only if it satisfies:
\begin{enumerate}
	\item Initial state is $s$: $\sum_a\sum_{s''}\nu(s', a, 1, s'')=\Ind\{s'=s\}$.
	\item Total mass is $1$: $\sum_{s'}\sum_a\sum_{s''}\nu(s', a, h, s'')=1, \;\forall h$.
	\item Flow conservation: $\sum_{s''}\sum_{a}\nu(s'', a, h, s')=\sum_{a}\sum_{s''}\nu(s', a, h+1, s'')$ for all $s'\in\calS, h\in[H-1]$.
\end{enumerate}
We denote the set of all such $\nu$ as $\calV_s$.
For notational convenience, for a $\nu \in \calV_s$, we define $\nu(s', a, h)=\sum_{s''}\nu(s', a, h, s'')$, $\nu(s', a)=\sum_h\nu(s', a, h)$, and $\nu(s', h)=\sum_a\nu(s', a, h)$.
Also note that the corresponding policy $\pi_\nu$ and transition $P_\nu$ can be extracted via
$\pi_{\nu}(a|s', h)=\frac{\nu(s', a, h)}{\nu(s', h)}$ and $P_{\nu, s', a, h}(s'')=\frac{\nu(s', a, h, s'')}{\nu(s', a, h)}$.
These facts are taken directly from~\citep{rosenberg2019online} (although there the starting state is always fixed).
Note that $\calV_s$ is a convex polytope with polynomial constraints.
Also note that if one were to enforce $P_\nu$ to be homogeneous (that is, same across different steps of an episode), $\calV_s$ would become non-convex.
This is why we consider inhomogeneous transitions even though we know that the true transition is indeed homogeneous.

\DontPrintSemicolon 
\setcounter{AlgoLine}{0}
\begin{algorithm}[t]
	\caption{Finite Horizon Approximation for CMDP}
	\label{alg:weak}
	\textbf{Define:} $H=\lceil(T/S^2A)^{1/3} \rceil$, $K=T/H$.
	
	\For{$k=1,\ldots,K$}{
		Observe current state $s^k_1 = s_{(k-1)H+1}$.
	
		Compute occupancy measure:
		\begin{equation}\label{eq:OPT1}
		\nu_k=\argmax_{\nu\in\calV_{k, s^k_1}: \inner{\nu}{c}\leq H\tau + \sp^\star_c}\inner{\nu}{r},
		\end{equation}
		where $\calV_{k, s}=\{\nu\in\calV_s: P_{\nu}\in\calP_k\}$ (see \pref{eq:conf}). 
		
		Extract policy $\pi_k=\pi_{\nu_k}$ from $\nu_k$.
		
		\For{$h=1,\ldots,H$}{
			Play action $a^k_h\sim\pi_k(\cdot|s^k_h, h)$ and transit to $s^k_{h+1}$.
		}
	}
	
\end{algorithm}

With the help of occupancy measure, we present two algorithms below.
As far as we know, these are the first provable results for general weakly communicating CMDP.

\subsection{An Efficient Algorithm}
Our first algorithm is simple and computationally efficient;
see \pref{alg:weak} for the pseudocode.
At the beginning of each episode $k$, our algorithm observes the current state $s^k_1$ and finds an occupancy measure $\nu_k\in \calV_{k, s^k_1}$ that maximizes expected reward $\inner{\nu}{r} = \sum_{s,a} \nu(s,a) r(s,a)$ under the cost constraint  $\inner{\nu}{c}\leq H\tau + \sp^\star_c$.
Here, $\calV_{k, s^k_1}$ is a subset of $\calV_{s^k_1}$ such that $P_{\nu}$ for each $\nu \in \calV_{k, s^k_1}$ lies in a standard Bernstein-type confidence set $\calP_k$ defined as
\begin{align}
	\calP_k &= \Big\{ P'=\{P'_{s, a,h}\}_{(s, a, h)\in\SA\times[H]}, P'_{s, a,h}\in\Delta_{\calS}:\notag\\
	&\abr{P'_{s, a, h}(s') - \P_{k, s, a}(s')}\notag\\
	&\leq 4\sqrt{\P_{k,s,a}(s')\alpha_k(s, a) } + 28\alpha_k(s, a) \Big\},\label{eq:conf}
\end{align}
where $\P_k$ is the same empirical transition function defined in \pref{eq:empirical_transition}, $\alpha_k(s, a)=\frac{\iota'}{\Np_k(s, a)}$, 
and $\iota'=\ln\frac{2SAT}{\delta}$. 
As a standard practice, $\calP_k$ is constructed in a way such that it contains the true transition with high probability (\pref{lem:conf}).
With $\nu_k$, we simply follow the policy $\pi_k=\pi_{\nu_k}$ extracted from $\nu_k$ for the next $H$ steps.


Note that the key optimization problem~\eqref{eq:OPT1} in this algorithm can be efficiently solved, because the objective function is linear and the domain is again a convex polytope with polynomial constraints, thanks to the use of occupancy measures.
We now state the main guarantee of \pref{alg:weak}.
\begin{theorem}
	\label{thm:weak}
	\pref{alg:weak} ensures with probability at least $1-10\delta$,
	\begin{align*}
		R_T &= \tilO{ (1+\sp^\star_r)(S^2A)^{1/3}T^{2/3} },\\
		C_T &= \tilO{ (1+\sp^\star_c)(S^2A)^{1/3}T^{2/3} }.
	\end{align*}
\end{theorem}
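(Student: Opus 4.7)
The plan is to decompose each episode's contribution to the regret (and similarly to the violation) into three pieces: an approximation error from truncating the infinite-horizon problem to length $H$, a transition estimation error from using $P_{\nu_k}$ versus the true transition $P$, and a martingale concentration error between the expected and realized episode reward/cost.

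\textbf{Finite-horizon approximation via the bias function.} For each episode $k$, I would consider the occupancy measure $\nu^\star_k$ induced by running $\optpi$ for $H$ steps under $P$ starting from $s^k_1$. Applying the Bellman equation \pref{eq:Bellman} iteratively and telescoping yields $\inner{\nu^\star_k}{r} \geq H\optJ - \sp^\star_r$ and $\inner{\nu^\star_k}{c} \leq H\optJ_c + \sp^\star_c \leq H\tau + \sp^\star_c$. On the high-probability event that $P \in \calP_k$ for all $k$ (a standard property of the Bernstein-type confidence set in \pref{eq:conf}), $\nu^\star_k$ is therefore feasible for the LP \pref{eq:OPT1}. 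Optimality of $\nu_k$ then gives $\inner{\nu_k}{r} \geq \inner{\nu^\star_k}{r} \geq H\optJ - \sp^\star_r$, so the per-episode optimistic gap is at most $\sp^\star_r$, contributing $K\sp^\star_r = (T/H)\sp^\star_r$ to $R_T$ after summation. The cost side is analogous: LP feasibility directly gives $\inner{\nu_k}{c} \leq H\tau + \sp^\star_c$, contributing $K\sp^\star_c$ to $C_T$.

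\textbf{Transition and concentration errors.} Let $\tilde{\nu}_k$ denote the true occupancy measure of $\pi_k$ under $P$ starting from $s^k_1$, so that the conditional expectation of $\sum_h r(s^k_h,a^k_h)$ equals $\inner{\tilde{\nu}_k}{r}$. I would decompose $\inner{\nu_k}{r} - \sum_h r(s^k_h,a^k_h)$ into a transition term $\inner{\nu_k}{r} - \inner{\tilde{\nu}_k}{r}$ and a martingale term $\inner{\tilde{\nu}_k}{r} - \sum_h r(s^k_h,a^k_h)$. Azuma's inequality bounds the sum of the latter over $k$ by $\tilO{\sqrt{TH}}$. For the former, the standard simulation lemma rewrites it as $\sum_{s,a,h}\tilde{\nu}_k(s,a,h)(P_{\nu_k,s,a,h} - P_{s,a})V^k_{h+1}$ for a horizon-$H$ value function $V^k$ with $\|V^k\|_\infty \leq H$. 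Using the Bernstein width of $\calP_k$ together with a law-of-total-variance argument on $V^k$, and converting expected visit counts $\tilde{\nu}_k(s,a)$ into empirical ones $\Np_k(s,a)$ via an additional concentration step, the transition error sums to $\tilO{\sqrt{S^2ATH} + S^2AH}$.

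\textbf{Combination, tuning, and main obstacle.} Putting the pieces together, $R_T = \tilO{(T/H)\sp^\star_r + \sqrt{S^2ATH} + \sqrt{TH} + S^2AH}$. Setting $H = \lceil(T/S^2A)^{1/3}\rceil$ balances the first two dominant terms and yields $R_T = \tilO{(1+\sp^\star_r)(S^2A)^{1/3}T^{2/3}}$; replacing $r$ by $c$ and using the LP cost constraint in place of the optimism step yields the analogous $C_T$ bound. The delicate piece will be the transition estimation error: a naive Hoeffding bound together with $\|V^k\|_\infty \leq H$ introduces an extra factor that breaks the $T^{2/3}$ rate, so one must exploit the $\sqrt{\P_{k,s,a}(s')\alpha_k}$ half of the Bernstein width in \pref{eq:conf} together with the law of total variance on $V^k$ to shave it. A second subtlety is bridging the expected visit counts $\tilde{\nu}_k(s,a)$ in the simulation lemma with the empirical counts $\Np_k(s,a)$ in the confidence widths, which requires an additional martingale concentration conditional on the good event that $P$ lies in every past confidence set.
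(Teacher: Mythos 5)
Your proposal is correct and follows essentially the same route as the paper's proof: the same three-way decomposition (finite-horizon approximation via telescoping the Bellman equation, feasibility of $\optpi$'s occupancy measure in the LP plus optimality of $\nu_k$, transition estimation via the simulation lemma with Bernstein widths and a law-of-total-variance bound, and Azuma for the realized trajectory), with the same tuning $H = \lceil (T/S^2A)^{1/3}\rceil$. You also correctly identify the two subtleties the paper handles in its Lemmas~\ref{lem:V diff}, \ref{lem:sum var}, and \ref{lem:dPV} (avoiding the naive Hoeffding bound and bridging expected versus empirical visit counts), so no gaps remain.
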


\paragraph{Analysis}
As the first step, we need to quantify the finite-horizon approximation error.
For any non-stationary policy $\pi\in(\Delta_{\calA})^{\calS\times[H]}$, inhomogeneous transition function $P=\{P_{s, a, h}\}_{(s, a, h)}$, and utility function $d$, define value function $V^{\pi,P,d}_h(s)=\E\big[\sum_{h'=h}^Hd(s_{h'}, a_{h'})|\pi, P, s_h=s\big]$ where $a_{h'} \sim \pi(\cdot|s_{h'}, h')$ and $s_{h'+1} \sim P_{s_{h'}, a_{h'}, h'}$, and similarly action-value function $Q^{\pi,P,d}_h(s, a)=\E\big[\sum_{h'=h}^Hd(s_{h'}, a_{h'})|\pi, P, s_h=s, a_h=a\big]$.
Additionally, define $V^{\pi,P,d}_{H+1}(s)=Q^{\pi,P,d}_{H+1}(s, a)=0$.
Further let $\tilP=\{\tilP_{s, a, h}\}_{(s, a, h)}$ be such that $\tilP_{s,a,h} = P_{s,a}$ (the true transition function) for all $h$.
We often ignore the dependency on $\tilP$ and $r$ for simplicity when there is no confusion.
For example, $V^{\pi}_h$ denotes $V^{\pi,\tilP,r}_h$ and $V^{\pi,c}_h$ denotes $V^{\pi,\tilP,c}_h$.
For a stationary policy $\pi\in(\Delta_{\calA})^{\calS}$, define $\tilpi$ as the policy that mimics $\pi$ in the finite-horizon setting, that is, $\tilpi(\cdot|s, h)=\pi(\cdot|s)$.
The following lemma shows the approximation error.
\begin{lemma}
	\label{lem:VJ}
	For any stationary policy $\pi\in(\Delta_{\calA})^{\calS}$ and utility function $d\in\fR^{\SA}$ such that $J^{\pi,d}(s)=J^{\pi,d}$ for all $s\in\calS$, we have $|V^{\tilpi,d}_h(s) - (H-h+1)J^{\pi,d}|\leq \sp(v^{\pi,d})$ for all state $s\in\calS$ and $h\in[H]$.
\end{lemma}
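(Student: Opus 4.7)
The plan is to derive the identity
\[
V^{\tilpi,d}_h(s) - (H-h+1)J^{\pi,d} \;=\; v^{\pi,d}(s) - \E\!\left[v^{\pi,d}(s_{H+1}) \,\big|\, \tilpi, \tilP, s_h = s\right]
\]
by iterating the Bellman equation~\eqref{eq:Bellman}, and then to observe that the right-hand side is the difference between two ``values'' of $v^{\pi,d}$ (one pointwise, one averaged), which is bounded by $\sp(v^{\pi,d})$ in absolute value.

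More concretely, first I would average the Bellman equation over actions under $\pi$ to obtain the state-form
\[
v^{\pi,d}(s) + J^{\pi,d} \;=\; \sum_a \pi(a|s)\,d(s,a) \;+\; P^{\pi}_{s,\cdot}\,v^{\pi,d},
\]
using the assumption $J^{\pi,d}(s)=J^{\pi,d}$ is $s$-independent. Then, starting from state $s_h = s$ and unrolling this recursion exactly $H-h+1$ times (adding and subtracting $J^{\pi,d}$ at each step), I get
\[
v^{\pi,d}(s) \;=\; \E\!\left[\sum_{h'=h}^{H} \sum_a \pi(a|s_{h'})\,d(s_{h'},a) \,\Big|\, \pi, P, s_h = s\right] - (H-h+1)J^{\pi,d} + \E\!\left[v^{\pi,d}(s_{H+1}) \,\big|\, \pi, P, s_h = s\right].
\]

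Second, since $\tilpi(\cdot|s,h') = \pi(\cdot|s)$ and $\tilP_{s,a,h'}=P_{s,a}$ for every step $h'$, the trajectory distribution under $(\tilpi,\tilP)$ started at $s_h = s$ is exactly the same as the stationary trajectory under $(\pi,P)$. Thus by definition of $V^{\tilpi,\tilP,d}_h$, the expectation on the right is precisely $V^{\tilpi,d}_h(s)$, and rearranging yields the identity displayed at the top.

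Finally, since both $v^{\pi,d}(s)$ and $\E[v^{\pi,d}(s_{H+1})\mid\tilpi,\tilP,s_h=s]$ lie in the interval $[\min_{s'}v^{\pi,d}(s'),\, \max_{s'}v^{\pi,d}(s')]$ (the second is a convex combination of entries of $v^{\pi,d}$), their difference is bounded in absolute value by $\sp(v^{\pi,d})$, proving the claim. I do not anticipate any real obstacle here — the lemma is essentially a restatement of the Bellman equation for average-reward MDPs, with the span of the bias function absorbing the boundary term $v^{\pi,d}(s_{H+1})$.
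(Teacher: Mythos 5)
Your proposal is correct and takes essentially the same route as the paper: both derive the identity $V^{\tilpi,d}_h(s) - (H-h+1)J^{\pi,d} = v^{\pi,d}(s) - \E[v^{\pi,d}(s_{H+1})\mid\tilpi,\tilP,s_h=s]$ by unrolling the Bellman equation (the paper telescopes the action-value form, you unroll the state-averaged form, which is the same computation) and then bound the result by $\sp(v^{\pi,d})$.
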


Next, in \pref{lem:opt} (see appendix), we show that because $\calP_k$ contains $\tilP$ with high probability, the occupancy measure with respect to the policy $\tiloptpi$ and the transition $\tilP$ is in the domain of~\eqref{eq:OPT1} as well.
%
%
As the last preliminary step, we bound the bias in value function caused by transition estimation, that is, difference between $\tilP$ and $P_k = P_{\nu_k}$.
\begin{lemma}
	\label{lem:V diff}
	For any utility function $d\in[0, 1]^{\SA}$, with probability at least $1-4\delta$, we have $|\sumk (V^{\pi_k, d}_1(s^k_1) - V^{\pi_k,P_k,d}_1(s^k_1))| = \tilo{\sqrt{S^2AH^2K} + H^2S^2A}$.
\end{lemma}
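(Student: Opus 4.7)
\textbf{Proof Plan for \pref{lem:V diff}.} The plan is to reduce the value-function difference in each episode to a sum of transition estimation errors along the trajectory, and then control those errors via the Bernstein-type confidence set $\calP_k$. Concretely, since both $\tilP$ (w.h.p., by \pref{lem:conf} presumably) and $P_k = P_{\nu_k}$ (by construction) lie in $\calP_k$, I would first apply the standard finite-horizon value-difference identity to write, for each episode $k$,
\begin{equation*}
V^{\pi_k,d}_1(s^k_1) - V^{\pi_k,P_k,d}_1(s^k_1) = \E_{\pi_k, \tilP}\sbr{\sum_{h=1}^H (\tilP_{s_h,a_h} - P_{k, s_h, a_h, h}) V^{\pi_k, P_k, d}_{h+1} \mid s_1 = s^k_1}.
\end{equation*}
I would then replace this expectation by its realization along the actually observed trajectory $(s^k_h, a^k_h)_h$, paying an additive martingale error of order $\tilo{H\sqrt{K}}$ by Azuma's inequality, which is absorbed into $\tilo{\sqrt{S^2 A H^2 K}}$.

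Next, I would bound the per-step term $|(\tilP_{s^k_h, a^k_h} - P_{k, s^k_h, a^k_h, h}) V^{\pi_k, P_k, d}_{h+1}|$ using $\calP_k$. Applying the triangle inequality through $\P_k$ and the element-wise Bernstein bound in \pref{eq:conf}, then Cauchy–Schwarz over $s'\in\calS$, yields a bound of the form
\begin{equation*}
\bigO{\sqrt{S\,\alpha_k(s^k_h, a^k_h)\,\V(\P_{k,s^k_h,a^k_h}, V^{\pi_k,P_k,d}_{h+1})}} + \tilO{\frac{SH}{\Np_k(s^k_h,a^k_h)}},
\end{equation*}
where one uses $\|V^{\pi_k, P_k, d}_{h+1}\|_\infty \leq H$ for the lower-order term and the variance $\V(\P_k, V)$ for the leading term. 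Summing over $h$ and $k$ and applying Cauchy–Schwarz gives
\begin{equation*}
\sqrt{S\,\iota'\,\sumk\sumh\frac{1}{\Np_k(s^k_h,a^k_h)}}\cdot\sqrt{\sumk\sumh \V(\P_{k,s^k_h,a^k_h}, V^{\pi_k,P_k,d}_{h+1})}.
\end{equation*}
The first factor is $\tilO{\sqrt{S\cdot SA}} = \tilO{S\sqrt{A}}$ by the standard pigeonhole/visit-counting bound $\sum_{k,h}1/\Np_k(s^k_h,a^k_h) = \tilO{SA}$, and the second factor is $\tilO{H\sqrt{K}}$ by the law of total variance (the sum of per-step variances along a length-$H$ trajectory is at most $H^2$ since $d\in[0,1]^{\SA}$). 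Multiplying gives the advertised $\tilO{\sqrt{S^2 A H^2 K}}$ leading term, and the residual $\tilo{SH} \cdot \tilO{SA}$ term from the $\alpha_k$-part gives the $\tilO{H^2 S^2 A}$ lower-order term (after correctly gathering $H$'s).

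The main obstacle is the second factor above: a naive $L^\infty$ bound on $V^{\pi_k,P_k,d}_{h+1}$ would introduce an extra $\sqrt{H}$ and yield only $\tilO{\sqrt{S^2AH^3K}}$. To save this factor I need to apply the law of total variance carefully under the \emph{estimated} transition $\P_k$ (not under $\tilP$), which requires one additional step: writing $\V(\P_k, V)$ as $\V(\tilP, V) + (\V(\P_k, V) - \V(\tilP, V))$ and controlling the discrepancy by another Bernstein-type argument, so that $\sum_h \V(\tilP_{s^k_h,a^k_h}, V^{\pi_k,P_k,d}_{h+1}) \leq H^2$ deterministically along any rollout under $\tilP$ by the standard Bellman variance decomposition, while the difference $\V(\P_k, V) - \V(\tilP, V)$ summed along the trajectory is a lower-order term handled by the same visit-counting machinery. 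Apart from this, all remaining steps are routine martingale concentration plus the pigeonhole visit-counting arguments used throughout the finite-horizon RL literature.
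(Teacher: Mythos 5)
Your overall architecture (value-difference lemma, per-step Bernstein bound through the confidence set, Cauchy--Schwarz against a total-variance term, pigeonhole on visit counts) is the same as the paper's, but two of your key steps fail as stated. First, the expectation-to-realization step: the per-episode quantity $\sum_{h}|(\tilP_{s_h,a_h}-P_{k,s_h,a_h,h})V^{\pi_k,P_k,d}_{h+1}|$ has range $O(H^2)$, not $O(H)$, so Azuma gives a deviation of order $H^2\sqrt{K}$, not $H\sqrt{K}$. Under the algorithm's tuning $H=(T/S^2A)^{1/3}$ this is $\Theta(T/(S\sqrt{A}))$ --- linear in $T$ and not absorbed by $\sqrt{S^2AH^2K}=\tilO{(S^2A)^{1/3}T^{2/3}}$. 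The paper sidesteps this by using the multiplicative expectation-to-realization inequality (\pref{lem:e2r}), which costs only a factor of $2$ plus an additive $\tilO{H^2}$; you need that (or an equivalent self-normalized argument), not Azuma.

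Second, and more substantively, your variance bound is not justified. You claim $\sum_h \V(\tilP_{s^k_h,a^k_h}, V^{\pi_k,P_k,d}_{h+1})\leq H^2$ ``deterministically along any rollout by the standard Bellman variance decomposition.'' This fails for two reasons: (a) the law of total variance is an identity in expectation over trajectories, not a pathwise bound, so a martingale concentration step is unavoidable; and (b) more importantly, $V^{\pi_k,P_k,d}$ is the value function under the \emph{optimistic} transition $P_k$, so it does not satisfy the Bellman equation under $\tilP$, and the telescoping that drives the law of total variance breaks. The mismatch you flag ($\P_k$ versus $\tilP$ inside the variance operator) is not the real obstacle; the obstacle is $P_k$ versus $\tilP$ in the value function itself. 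Correcting for it reintroduces the very transition-error term $\sum_{k,h}|(P_{k,\cdot,h}-P^k_h)V_{h+1}|$ you are trying to bound, which is why the paper's \pref{lem:sum var} runs a four-term decomposition ending in a self-bounding quadratic inequality to obtain $\sum_{k,h}\fV(P^k_h,V^{\pi_k,P_k,d}_{h+1})=\tilo{H^2(K+\sqrt{T})+H^3S^2A}$. Your sketch lands on the right final numbers, but the circularity must be resolved explicitly; as written the argument has a genuine gap.
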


We are now ready to prove \pref{thm:weak}.
\begin{proof}[\pfref{thm:weak}]
	We decompose $R_T$ into three terms:
	\begin{align*}
		&R_T = \sumt \optJ - r(s_t, a_t) = \sumk\rbr{H\optJ - \sumh r(s^k_h, a^k_h)}\\ 
		&= \sumk\rbr{H\optJ - V^{\tiloptpi}_1(s^k_1)} + \sumk\rbr{V^{\tiloptpi}_1(s^k_1) - V^{\pi_k}_1(s^k_1)}\\ 
		&\qquad + \sumk\rbr{V^{\pi_k}_1(s^k_1) - \sumh r(s^k_h, a^k_h)}.
	\end{align*}
	The first term above is upper bounded by $K\sp^\star_r$ by \pref{lem:VJ}.
	For the second term, by \pref{lem:opt} and \pref{lem:V diff}, we have
	\begin{align*}
		&\sumk\rbr{V^{\tiloptpi}_1(s^k_1) - V^{\pi_k}_1(s^k_1)}
		\leq \sumk\rbr{V^{\pi_k, P_k}_1(s^k_1) - V^{\pi_k}_1(s^k_1)}\\
		&= \tilO{\sqrt{S^2AH^2K} + H^2S^2A}.
	\end{align*}
	The last term is of order $\tilo{H\sqrt{K}}$ by Azuma's inequality (\pref{lem:azuma}).
	Using the definition of $H$ and $K$, we arrive at
	\begin{align*}
		R_T = \tilO{ (1+\sp^\star_r))(S^2A)^{1/3}T^{2/3}}.
	\end{align*}
	For constraint violations, we decompose $C_T$ as:
	\begin{align*}
		&\sumt (c(s_t, a_t) - \tau)
		= \sumk\rbr{\sumh c(s^k_h, a^k_h) - V^{\pi_k, c}_1(s^k_1)}\\
		&\qquad+ \sumk\rbr{ V^{\pi_k, c}_1(s^k_1) - V^{\pi_k, P_k, c}_1(s^k_1) }  \\
		&\qquad + \sumk\rbr{ V^{\pi_k, P_k, c}_1(s^k_1) - H\tau }.
	\end{align*}
	The first term is again of order $\tilo{H\sqrt{K}}$ by Azuma's inequality.
	The second term is of order $\tilO{\sqrt{S^2AH^2K} + H^2S^2A}$ by \pref{lem:V diff}.
	The third term is upper bounded by $K\sp^\star_c$ because of the constraint $\inner{\nu}{c}\leq H\tau + \sp^\star_c$ in the optimization problem~\eqref{eq:OPT1}.
	Using the definition of $H$ and $K$, we get:
	\begin{align*}
		C_T = \tilO{ (1+\sp^\star_c)(S^2A)^{1/3}T^{2/3} }.
	\end{align*}
	This completes the proof.
\end{proof}

\subsection{An Improved but Inefficient Algorithm}

The bottleneck of the last algorithm/analysis is that the span of value functions are bounded by $H$, which is $T$-dependent and leads to sub-optimal dependency on $T$ eventually.
Below, we present an inefficient variant that achieves $\tilo{\sqrt{T}}$ bounds for both regret and constraint violation.

The only new ingredient compared to \pref{alg:weak} is to enforce a proper upper bound on the span of value functions.
Specifically, for any occupancy measure $\nu$ and utility function $d$, define $V^{\nu,d}_h=V_h^{\pi_{\nu},P_{\nu},d}$.
We then enforce constraints $\sp(V^{\nu_k,r}_h)\leq 2\sp^\star_r$ and $\sp(V^{\nu_k,c}_h)\leq 2\sp^\star_c$; see the new domain $\calW_{k,s}$ in the optimization problem~\eqref{eq:OPT2} of \pref{alg:weak sp}. 
This new domain is generally non-convex, making it unclear how to efficiently solve this optimization problem. 

\DontPrintSemicolon 
\setcounter{AlgoLine}{0}
\begin{algorithm}[t]
	\caption{Finite Horizon Approximation for CMDP with Span Constraints}
	\label{alg:weak sp}
	\textbf{Define:} $H=\sqrt{T/S^2A}$, $K=T/H$.
	
	\For{$k=1,\ldots,K$}{
		Observe current state $s^k_1 = s_{(k-1)H+1}$.
	
		Compute occupancy measure:
		\begin{equation}\label{eq:OPT2}
		\nu_k=\argmax_{\nu\in\calW_{k, s^k_1}: \inner{\nu}{c}\leq H\tau + \sp^\star_c}\inner{\nu}{r},
		\end{equation}
		where $\calW_{k, s}=\big\{\nu\in\calV_s: P_{\nu}\in\calP_k \; \text{ and } \; \forall h\in[H],   \sp(V^{\nu, r}_h) \leq 2\sp^\star_r, \sp(V^{\nu, c}_h) \leq 2\sp^\star_c \big \}$.\label{line:om span}
		
		Extract policy $\pi_k=\pi_{\nu_k}$ from $\nu_k$.
		
		\For{$h=1,\ldots,H$}{
			Play action $a^k_h\sim\pi_k(\cdot|s^k_h, h)$ and transit to $s^k_{h+1}$.
		}
	}
	
\end{algorithm}

Nevertheless, we show the following improved guarantees.
\begin{theorem}
	\label{thm:weak sp}
	\pref{alg:weak sp} ensures with probability at least $1-6\delta$, $R_T = \tilo{ \sp^\star_r S\sqrt{AT} }$ and $C_T = \tilo{ \sp^\star_c S\sqrt{AT} }$.
\end{theorem}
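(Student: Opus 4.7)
The plan is to mirror the three-term regret decomposition used in \pfref{thm:weak}, but exploit the extra span constraints in the new domain $\calW_{k,s}$ to replace the horizon-length factor $H$ by $\sp^\star_r$ (respectively $\sp^\star_c$) in both the transition-estimation bias and the martingale-concentration term. Combined with the re-tuned choice $H=\sqrt{T/S^2A}$ and $K=\sqrt{TS^2A}$, this yields $\tilo{\sqrt{T}}$ bounds.

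First I will check that the program~\eqref{eq:OPT2} is feasible with high probability. Letting $\nu^\star$ denote the occupancy measure associated with $\tiloptpi$ and the stationarized true transition $\tilP$, membership $\tilP\in\calP_k$ (and hence $P_{\nu^\star}\in\calP_k$) holds by \pref{lem:conf}. Moreover, \pref{lem:VJ} applied to $\optpi$ with $d\in\{r,c\}$ gives $V^{\tiloptpi,d}_h(s) = (H-h+1)J^{\optpi,d}\pm\sp(v^{\optpi,d})$, which simultaneously implies (i) $\sp(V^{\nu^\star,r}_h)\leq 2\sp^\star_r$ and $\sp(V^{\nu^\star,c}_h)\leq 2\sp^\star_c$, and (ii) $\inner{\nu^\star}{c}=V^{\tiloptpi,c}_1(s^k_1)\leq H\optJ_c+\sp^\star_c\leq H\tau+\sp^\star_c$, so $\nu^\star$ lies in the feasible set. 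With feasibility, I decompose exactly as in \pfref{thm:weak}:
\[
R_T=\sumk(H\optJ-V^{\tiloptpi}_1(s^k_1))+\sumk(V^{\tiloptpi}_1(s^k_1)-V^{\pi_k}_1(s^k_1))+\sumk\rbr{V^{\pi_k}_1(s^k_1)-\sumh r(s^k_h,a^k_h)},
\]
with the first term bounded by $K\sp^\star_r$ via \pref{lem:VJ}, and the second bounded by $\sumk(V^{\pi_k,P_k}_1(s^k_1)-V^{\pi_k}_1(s^k_1))$ because $\nu_k$ is optimal over a feasible set containing $\nu^\star$.

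The crucial technical step is a span-aware sharpening of \pref{lem:V diff}. Using the Bernstein-type elementwise bound in $\calP_k$ together with the fact that $(P_k-\tilP)_{s,a,h}$ is the difference of two probability distributions (so it annihilates constants), I will show $|(P_k-\tilP)_{s,a,h}V|=O(\sp(V))(\sqrt{S\alpha_k(s,a)}+S\alpha_k(s,a))$ by replacing $V$ with $V-\tfrac12(\max V+\min V)\mathbf{1}$ and applying Cauchy--Schwarz. Invoking the domain constraint $\sp(V^{\pi_k,P_k}_{h+1})\leq 2\sp^\star_r$, the extended value-difference identity converts $\sumk(V^{\pi_k,P_k}_1(s^k_1)-V^{\pi_k}_1(s^k_1))$ into a sum of such per-step errors along trajectories of $\pi_k$; passing from the expected-trajectory sum to the visited-trajectory sum costs only an $\tilo{\sp^\star_r\sqrt{T}}$ martingale, and the standard pigeonhole bound $\sum_{k,h}1/\sqrt{\Np_k(s^k_h,a^k_h)}=\tilo{\sqrt{SAT}}$ yields a total of $\tilo{\sp^\star_r S\sqrt{AT}+\sp^\star_r S^2A}$. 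The third term in the decomposition I split as $\sumk(V^{\pi_k,\tilP}_1(s^k_1)-V^{\pi_k,P_k}_1(s^k_1))+\sumk(V^{\pi_k,P_k}_1(s^k_1)-\sumh r(s^k_h,a^k_h))$; the first piece is controlled identically, and the second piece, after telescoping, becomes the martingale $\sum_{k,h}\bigl(\tilP_{s^k_h,a^k_h,h}V^{\pi_k,P_k}_{h+1}-V^{\pi_k,P_k}_{h+1}(s^k_{h+1})\bigr)$ whose increments are bounded by $\sp(V^{\pi_k,P_k}_{h+1})\leq 2\sp^\star_r$, so Azuma contributes only $\tilo{\sp^\star_r\sqrt{T}}$. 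Plugging in $K\sp^\star_r=\sp^\star_r S\sqrt{AT}$ gives $R_T=\tilo{\sp^\star_r S\sqrt{AT}}$.

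Constraint violation is handled symmetrically via
\[
C_T=\sumk\rbr{\sumh c(s^k_h,a^k_h)-V^{\pi_k,c}_1(s^k_1)}+\sumk\rbr{V^{\pi_k,c}_1(s^k_1)-V^{\pi_k,P_k,c}_1(s^k_1)}+\sumk(V^{\pi_k,P_k,c}_1(s^k_1)-H\tau),
\]
where the third term is at most $K\sp^\star_c$ directly from the constraint $\inner{\nu}{c}\leq H\tau+\sp^\star_c$ in~\eqref{eq:OPT2}, and the first two are bounded exactly as above using $\sp(V^{\pi_k,P_k,c}_{h+1})\leq 2\sp^\star_c$, giving $C_T=\tilo{\sp^\star_c S\sqrt{AT}}$. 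The main obstacle I anticipate is bookkeeping: carefully carrying the Bernstein higher-order $S\alpha_k(s,a)$ terms through the expected-to-visited-trajectory conversion, and making sure the span bound $\sp(V^{\pi_k,P_k}_{h+1})\leq 2\sp^\star_r$ is invoked at exactly the right place in each bias expression. No new algorithmic idea beyond the span-aware per-step bound is needed.
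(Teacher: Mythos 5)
Your proposal is correct and follows essentially the same route as the paper: feasibility of $\nu_{\tiloptpi,\tilP,s^k_1}$ via \pref{lem:conf} and \pref{lem:VJ} (this is exactly \pref{lem:opt sp}), the $K\sp^\star_r$ approximation term, and then a span-aware bound on $\sumk\bigl(V^{\pi_k,P_k}_1(s^k_1)-\sumh r(s^k_h,a^k_h)\bigr)$ using the domain constraint $\sp(V^{\pi_k,P_k}_h)\leq 2\sp^\star_r$, which is precisely what the paper's \pref{lem:V-d} does by centering the value functions and telescoping. The only (harmless) difference is organizational: you route through the intermediate quantity $V^{\pi_k,\tilP}_1$, producing two transition-bias terms that the paper's single telescoping in \pref{lem:V-d} avoids, but this costs at most a constant factor.
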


\paragraph{Analysis}
Similarly to proving \pref{thm:weak}, we first show in \pref{lem:opt sp} that the occupancy measure with respect to the policy $\tiloptpi$ and the transition $\tilP$ is in the domain of~\eqref{eq:OPT2}.
%
We will also need the following key lemma.
\begin{lemma}
	\label{lem:V-d}
	For some utility function $d\in[0, 1]^{\SA}$, suppose $\sp(V^{\pi_k,P_k,d}_h)\leq B$ for all episodes $k\in[K]$ and $h\in[H]$.
	Then, with probability at least $1-2\delta$,
	\begin{align*}
		&\abr{\sumk \rbr{V^{\pi_k, P_k, d}_1(s^k_1) - \sumh d(s^k_h, a^k_h)} }\\
		&= \tilO{(B+1)S\sqrt{AT} + BHS^2A}.
	\end{align*}
\end{lemma}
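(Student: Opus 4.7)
The plan is to telescope the finite-horizon value using the Bellman equation under $P_k$ and then split each per-step term into a martingale part and a transition-estimation part. Applying $V_h^{\pi_k,P_k,d}(s)=\sum_a\pi_k(a|s,h)[d(s,a)+P_{k,s,a,h}V_{h+1}^{\pi_k,P_k,d}]$ and telescoping within each episode (using $V_{H+1}^{\pi_k,P_k,d}\equiv 0$) yields
\begin{align*}
V_1^{\pi_k,P_k,d}(s^k_1) - \sumh d(s^k_h,a^k_h) = \sumh \xi^{(1)}_{k,h} + \sumh \xi^{(2)}_{k,h} + \sumh \xi^{(3)}_{k,h},
\end{align*}
where $\xi^{(1)}_{k,h}=V_h^{\pi_k,P_k,d}(s^k_h)-Q_h^{\pi_k,P_k,d}(s^k_h,a^k_h)$, $\xi^{(2)}_{k,h}=(P_{k,s^k_h,a^k_h,h}-P_{s^k_h,a^k_h})V_{h+1}^{\pi_k,P_k,d}$, and $\xi^{(3)}_{k,h}=P_{s^k_h,a^k_h}V_{h+1}^{\pi_k,P_k,d}-V_{h+1}^{\pi_k,P_k,d}(s^k_{h+1})$.

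For $\xi^{(1)}$ and $\xi^{(3)}$ I would use Azuma--Hoeffding. Although $Q$ can a priori be of order $H$, its span in the action coordinate is small: from $Q_h(s,a)-Q_h(s,a')=d(s,a)-d(s,a')+(P_{k,s,a,h}-P_{k,s,a',h})(V_{h+1}^{\pi_k,P_k,d}-\min V_{h+1}^{\pi_k,P_k,d})$ together with the hypothesis $\sp(V_{h+1}^{\pi_k,P_k,d})\leq B$, I get $\sp_a Q_h^{\pi_k,P_k,d}(s,\cdot)\leq 1+B$, so Azuma yields $|\sum_{k,h}\xi^{(1)}_{k,h}|=\tilo{(1+B)\sqrt{T}}$. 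For $\xi^{(3)}$, shifting the value by the constant $\min V_{h+1}^{\pi_k,P_k,d}$ (which is absorbed by $P_{s^k_h,a^k_h}$) reduces the per-step scale to $B$, giving $|\sum_{k,h}\xi^{(3)}_{k,h}|=\tilo{B\sqrt{T}}$.

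The main work is on $\xi^{(2)}$. Since $P\in\calP_k$ for all $k$ with high probability (analogue of \pref{lem:conf}) and $P_k=P_{\nu_k}\in\calP_k$ by construction of $\calW_{k,s^k_1}$, the triangle inequality applied to the Bernstein-type definition of $\calP_k$ gives $|(P_{k,s,a,h}-P_{s,a})(s')|=O(\sqrt{\P_{k,s,a}(s')\alpha_k(s,a)}+\alpha_k(s,a))$. Choosing $c=\min V_{h+1}^{\pi_k,P_k,d}$ so that $V_{h+1}^{\pi_k,P_k,d}-c\in[0,B]$, expanding coordinate-wise, and applying Cauchy--Schwarz over $s'$ (using $\sum_{s'}\P_{k,s,a}(s')(V-c)^2\leq B^2$) yields $|\xi^{(2)}_{k,h}|=O(B\sqrt{S\alpha_k(s^k_h,a^k_h)}+BS\alpha_k(s^k_h,a^k_h))$. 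Plugging in $\alpha_k(s,a)=\tilo{1/\Np_k(s,a)}$ and summing over all $T$ visits gives a leading contribution $\tilo{B\sqrt{S}\cdot\sum_t 1/\sqrt{\Np_{k_t}(s_t,a_t)}}=\tilo{BS\sqrt{AT}}$ and a lower-order contribution $\tilo{BS\cdot\sum_t 1/\Np_{k_t}(s_t,a_t)}=\tilo{BS^2AH}$. The subtlety I expect will need care is this last counting sum: because $\Np_k$ is not refreshed within an episode, one $(s,a)$ pair can be visited up to $H$ times against the same denominator, and a per-$(s,a)$ telescoping argument (separating the first episode where $\Np_k=1$ from the tail where $\Np_k$ grows) shows this costs precisely an extra factor of $H$, producing the $BS^2AH$ term. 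Combining all three pieces and union-bounding the two failure events gives the claim with probability at least $1-2\delta$.
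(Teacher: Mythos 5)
Your proposal is correct and follows essentially the same route as the paper's proof: the same per-step telescoping into a policy-randomness martingale, a transition-estimation term, and a next-state martingale, with the same centering trick (subtracting $\min_{s'}V^{\pi_k,P_k,d}_{h+1}(s')$) to replace the naive $O(H)$ ranges by $B$ and $B+1$ so that Azuma and the Bernstein confidence set give $(B+1)\sqrt{T}+BS\sqrt{AT}+BHS^2A$. The only cosmetic difference is that you bound the $(P_{k}-\tilP)V$ term by a direct Cauchy--Schwarz over $s'$ with $\sum_{s'}\P_{k,s,a}(s')(V-c)^2\le B^2$, whereas the paper routes this through its Lemma~\ref{lem:dPV} stated in terms of $\fV(P^k_h,V_{h+1})$ and then uses $\fV\le B^2$ --- the same estimate, including your correctly identified extra factor of $H$ in the lower-order term from counts not being refreshed within an episode.
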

\begin{proof}[\pfref{thm:weak sp}]
	We decompose the regret as follows:
	\begin{align*}
		R_T &= \sumt \optJ - r(s_t, a_t) = \sumk\rbr{H\optJ - V^{\tiloptpi}_1(s^k_1)}\\ 
		&\qquad + \sumk\rbr{V^{\tiloptpi}_1(s^k_1) - \sumh r(s^k_h, a^k_h)}\\
		&\leq K\sp^\star_r + \sumk\rbr{V^{\pi_k, P_k}_1(s^k_1) - \sumh r(s^k_h, a^k_h)}. \tag{\pref{lem:VJ} and \pref{lem:opt sp}}
	\end{align*}
	By \pref{lem:V-d} with $B=2\sp^\star_r$ and the definition of $H$ and $K$, we have shown
		$R_T = \tilO{(\sp^\star_r+1)S\sqrt{AT}}$.

	For constraint violation, by \pref{lem:V-d} with $B=2\sp^\star_c$, and $V^{\pi_k, P_k, c}_1(s^k_1) - H\tau\leq \sp^\star_c$ due to the constraint $\inner{\nu}{c}\leq H\tau + \sp^\star_c$ in~\eqref{eq:OPT2}, we have
	\begin{align*}
		C_T &= \sumt (c(s_t, a_t) - \tau) = \sumk\rbr{ V^{\pi_k, P_k, c}_1(s^k_1) - H\tau }\\
		&+ \sumk\rbr{\sumh c(s^k_h, a^k_h) - V^{\pi_k, P_k, c}_1(s^k_1)}\\ 
		&= \tilO{ ( \sp^\star_c+1)S\sqrt{AT} }.
	\end{align*}
	This completes the proof.
\end{proof}

We leave the question of how to achieve the same $\tilo{\sqrt{T}}$ results with an efficient algorithm as a key future direction.


\section*{Acknowledgements}
LC thanks Chen-Yu Wei for helpful discussions.



\bibliography{ref}
\bibliographystyle{styles/icml/icml2022}

\newpage
\onecolumn
\appendix

\section{Preliminaries for the Appendix}
\label{app:pre}

\paragraph{Notations} Note that all algorithms proposed in this paper divide $T$ steps into $K$ episodes.
Throughout the appendix, denote by $\E_k[\cdot]$ the expectation conditioned on the events before episode $k$ and define $P^k_h=P_{s^k_h, a^k_h}$.

\paragraph{Issues of Some Related Work}
In \citep{agarwal2021concave}, they bound the span of the bias function w.r.t learner's policy and some estimated transition function by diameter $D$; see their Equation (90).
They directly quote \citep{jaksch2010near} as the reasonings.
However, the arguments in \citep{jaksch2010near} is only applicable when the learner's policy is computed by Extended Value Iteration without constraints, while their learner's policy is computed by solving some constrained optimization problem.

In \citep[Lemma 11]{agarwal2021markov}, they claim that the span of the bias function of any stationary policy is upper bounded by $D$, which is clearly not true.
Again, their learner's policy is computed by solving some constrained optimization problem.


\section{Omitted Details for \pref{sec:ergodic}}

\paragraph{Notations} 
Define function $\hatV_k$ such that $\hatbeta_k(s, a) = r(s, a) - \frac{\lambda_k}{\eta}c(s, a) + \P_{k,s,a}\hatV_k$.
Note that $\norm{\hatV_k}_{\infty}\leq \frac{2\lambda N}{\eta}$.
For any subset of episodes $\calI=\{i,\ldots,i+1,j\}\subseteq[K]$, define $\calI_{[1]}=i$ as the smallest element in $\calI$.

\subsection{\pfref{lem:Jeps}}
\begin{proof}
	Since the occupancy measure space is convex, one can find a policy $\pi'$ such that $\mu_{\pi'} = (1-\gamma)\mu_{\optpi} + \gamma\mu_{\pi^0}$ with $\gamma = \frac{\epsilon}{\tau-c^0}$.
	Now by $J^{\pi,d}=\inner{\mu_{\pi}}{d}$:
	\begin{align*}
		J^{\pi', c} &= (1-\gamma)J^{\optpi, c} + \gamma J^{\pi^0, c} \leq (1-\gamma)\tau + \gamma c^0 \leq \tau - \epsilon.
	\end{align*}
	Thus, $\optJ - \optJeps \leq \optJ - J^{\pi',r} = \gamma (\optJ - J^{\pi^0,r}) \leq \frac{\epsilon}{\tau - c^0}$.
\end{proof}

\subsection{Bounding Estimation Error of $\hatbeta_k$ and $\hatJ_k$}
The next two lemmas bound the bias of $\hatbeta_k$ and $\hatJ_k$ w.r.t the quantities they estimate.

\begin{lemma}
	\label{lem:hatJ}
	$\abr{\E_k[\hatJ_k] - J^{\pi_k, c}} \leq 1/T^2$.
\end{lemma}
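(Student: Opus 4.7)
The plan is a direct mixing-time calculation. Recall $\hatJ_k = \frac{1}{H-N}\sum_{h=N+1}^{H} c(s^k_h, a^k_h)$, $N=\lceil 4\tmix\log_2 T\rceil$, and that within episode $k$ the trajectory is generated by executing the fixed policy $\pi_k$ under the true transition $P$. Since $J^{\pi_k,c}=\inner{\mu_{\pi_k}}{c}$ in the ergodic case (independent of starting state), it suffices to show that for every $h \geq N+1$, the distribution of the state–action pair $(s^k_h,a^k_h)$ conditioned on the information before episode $k$ is within tiny total variation distance of $\mu_{\pi_k}$, and then average.

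First I would pass from state-action to state. Given $s^k_h$ and the policy $\pi_k$, the conditional law of $(s^k_h,a^k_h)$ is the push-forward of the state law by $\pi_k$, while $\mu_{\pi_k}$ itself equals this push-forward of $\mu_{\pi_k}(\cdot)=\sum_a\mu_{\pi_k}(\cdot,a)$. So
\[
\norm{\text{Law}(s^k_h,a^k_h)-\mu_{\pi_k}}_{\mathrm{TV}} \leq \norm{\text{Law}(s^k_h)-\mu_{\pi_k}(\cdot)}_{\mathrm{TV}},
\]
which is exactly the TV distance between the $h$-step marginal of the Markov chain induced by $P^{\pi_k}$ starting from $s^k_1$ and its stationary distribution.

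Next I invoke the standard submultiplicative property of the mixing TV distance. Writing $\bar d^{\pi_k}(t) \defeq \max_{s,s'}\norm{(P^{\pi_k})^t_{s,\cdot}-(P^{\pi_k})^t_{s',\cdot}}_{\mathrm{TV}}$, we have $\bar d^{\pi_k}(t_1+t_2)\leq \bar d^{\pi_k}(t_1)\bar d^{\pi_k}(t_2)$ and $\norm{(P^{\pi_k})^t_{s,\cdot}-\mu_{\pi_k}}_{\mathrm{TV}}\leq \bar d^{\pi_k}(t)$. By the definition of $\tmix$ in the paper, $\bar d^{\pi_k}(\tmix)\leq 1/4$, so iterating yields $\bar d^{\pi_k}(m\tmix)\leq (1/4)^m$. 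With $h\geq N+1 \geq 4\tmix \log_2 T$, this gives $\bar d^{\pi_k}(h)\leq (1/4)^{4\log_2 T}=T^{-8}$.

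Finally I convert the TV bound into an expectation bound: since $c(s,a)\in[0,1]$,
\[
\abr{\E_k[c(s^k_h,a^k_h)] - J^{\pi_k,c}} \leq 2\,\bar d^{\pi_k}(h) \leq 2T^{-8}.
\]
Averaging over $h=N+1,\dots,H$ gives $\abr{\E_k[\hatJ_k] - J^{\pi_k,c}} \leq 2T^{-8} \leq 1/T^2$ for any $T\geq 1$, completing the proof. There is no genuine obstacle here; the only care point is choosing the right mixing-distance convention (the $\bar d$ version is submultiplicative, while the one-sided $d(t)$ used in the definition requires a harmless factor-of-two conversion), which is why I would state the argument through $\bar d^{\pi_k}$.
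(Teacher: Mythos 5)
Your proof is correct and follows essentially the same route as the paper: both reduce the claim to the statement that the law of $s^k_h$ for $h\geq N+1$ is within $\mathrm{poly}(1/T)$ total-variation distance of $\mu_{\pi_k}$ and then average over $h$. The only difference is cosmetic --- the paper cites a ready-made geometric-decay bound $\norm{(P^{\pi_k})^t_{s,\cdot}-\mu_{\pi_k}}_1\leq 2\cdot 2^{-t/\tmix}$ (\pref{lem:mix}) to get $2/T^4$, whereas you re-derive the decay from submultiplicativity of $\bar d$, landing at $2/T^8$; both are comfortably below $1/T^2$.
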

\begin{proof}
	For $h \geq N$, we have $\norm{(P^{\pi_k})^h_{s, \cdot} - \mu_{\pi_k}}_1 \leq \frac{2}{T^4}$ for any $s\in\calS$ by \pref{lem:mix}.
	Thus,
	\begin{align*}
		\abr{\E_k[\hatJ_k] - J^{\pi_k, c}} &= \abr{\frac{1}{H - N}\sum_{h=N}^{H-1}\sum_{s'}((P^{\pi_k})^h_{s^k_1, s'} - \mu_{\pi_k}(s'))\sum_a\pi_k(a|s')c(s', a)} \leq 1/T^2.
	\end{align*}
\end{proof}

\begin{lemma}
	\label{lem:hatVk}
	$\abr{\E_k[\hatV_k(s)] - v^{\pi_k, r - \frac{\lambda_k}{\eta}c}(s) - NJ^{\pi_k, r - \frac{\lambda_k}{\eta}c}} \leq \frac{\lambda}{\eta T}$.
\end{lemma}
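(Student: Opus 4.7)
The estimator $\hatV_k$, defined implicitly by $\hatbeta_k(s,a)=d(s,a)+\P_{k,s,a}\hatV_k$ with $d=r-\frac{\lambda_k}{\eta}c$, can be read directly off \pref{alg:estQ} as $\hatV_k(s)=\Ind\{I_s>0\}\,\tfrac{1}{I_s}\sum_{j=1}^{I_s}y_j$, where $I_s$ is the random number of length-$N$ intervals started at $s$ in episode $k$, $\tau_j$ is the $j$-th such start time so that $s_{\tau_j}=s$ by construction, and $y_j=\sum_{t=\tau_j}^{\tau_j+N-1}d(s_t,a_t)$. Writing $v=v^{\pi_k,d}$, $J=J^{\pi_k,d}$, and $A_s=v(s)+NJ$, the goal is $|\E_k[\hatV_k(s)]-A_s|\leq \lambda/(\eta T)$.

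The first step is the standard Bellman telescoping identity: taking the policy expectation of~\eqref{eq:Bellman} yields $\E[d(s_t,a_t)\mid s_t]=v(s_t)+J-\E[v(s_{t+1})\mid s_t]$, so that for any starting state $s'$,
\begin{equation*}
\E\!\left[\sum_{t=0}^{N-1}d(s_t,a_t)\;\bigg|\; s_0=s'\right]=v(s')+NJ-\E[v(s_N)\mid s_0=s'].
\end{equation*}
Applied at the stopping times $\tau_j$ via the strong Markov property, this gives $\E_k[y_j\mid \calF_{\tau_j},I_s\geq j]=A_s-\E[v(s_{\tau_j+N})\mid s_{\tau_j}=s]$, and therefore
\begin{equation*}
\E_k[\hatV_k(s)]-A_s = -A_s\Pr\nolimits_k(I_s=0) - \E_k\!\left[\Ind\{I_s>0\}\,\tfrac{1}{I_s}\sum_{j=1}^{I_s}\E[v(s_{\tau_j+N})\mid s_{\tau_j}=s]\right].
\end{equation*}
I handle the second term exactly as in the proof of \pref{lem:hatJ}: the same mixing bound gives $\|(P^{\pi_k})^N_{s,\cdot}-\mu_{\pi_k}\|_1\leq 2/T^4$, and combined with the normalization $\sum_{s'}\mu_{\pi_k}(s')v(s')=J^{\pi_k,q^{\pi_k,d}}=0$ and the ergodic bound $\|v\|_\infty\leq\sp(v)=\bigo{\tmix\|d\|_\infty}=\bigo{\tmix\lambda/\eta}$, each inner conditional expectation is $\bigo{\tmix\lambda/(\eta T^4)}$, hence so is the whole term.

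It remains to bound $|A_s|\cdot\Pr_k(I_s=0)$. Since $|A_s|\leq|v(s)|+N|J|=\bigo{\tmix(\log T)\lambda/\eta}$ is only polynomial in $T$, the task reduces to showing that $\Pr_k(I_s=0)$ is super-polynomially small, and this is where I expect the main work. The plan is a block argument: partition the first $H-N$ steps of the episode into $\Theta(H/N)$ disjoint blocks of length $2N$; after the first $N$ mixing steps of a block the chain's state is within $1/T^4$ of $\mu_{\pi_k}$ regardless of where the block started, so the probability of visiting $s$ in the remainder is at least $\mu_{\pi_k}(s)/2\geq 1/(2\thit)$, in which case the algorithm registers a new interval since at least $N$ steps remain. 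Iterating over blocks via the tower property and plugging in $H=16\tmix\thit(\log_2 T)^2$ yields $\Pr_k(I_s=0)\leq (1-1/(2\thit))^{\Omega(H/\thit)}\leq T^{-\Omega(\log T)}$, which easily absorbs the polynomial prefactor $|A_s|$. Combining the two pieces and using the parameters~\eqref{eq:parameters} gives the claimed $\lambda/(\eta T)$ bound; the only subtle point is the coupling between the random stopping times $\tau_j$ used in the strong Markov application and the deterministic block decomposition used for $\Pr_k(I_s=0)$, but the two are independent sources of randomness used in disjoint parts of the argument.
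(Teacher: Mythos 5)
Your proposal attempts to re-derive from scratch what the paper simply imports: the paper's proof observes that $\hatV_k(s)=\sum_a\pi_k(a|s)\hatQ_k(s,a)$ for the importance-weighted estimator $\hatQ_k$ of \citep[Lemma 6]{wei2020model} and quotes that lemma's per-entry bias bound of $\frac{30\lambda}{\eta T^2}$. Your skeleton (Bellman telescoping for $\E[y_j]$, a mixing bound for the residual $\E[v(s_{\tau_j+N})\mid s_{\tau_j}=s]$, and a separate treatment of the event $I_s=0$) is indeed the skeleton of that imported proof, but you skip its central difficulty. The step ``and therefore'' --- replacing $y_j$ by $\E_k[y_j\mid\calF_{\tau_j},I_s\ge j]$ inside $\Ind\{I_s>0\}\tfrac{1}{I_s}\sum_{j=1}^{I_s}(\cdot)$ --- is not valid: $I_s$ is determined by the entire trajectory, so $1/I_s$ is not $\calF_{\tau_j}$-measurable, and it is correlated with $y_j$ (the state reached at the end of interval $j$ influences both the cumulative adjusted reward $y_j$ and how many further intervals at $s$ will be registered). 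The tower property therefore does not yield your displayed identity; an extra error term $\E_k\big[\Ind\{I_s>0\}\tfrac{1}{I_s}\sum_{j}(y_j-\E[y_j\mid\calF_{\tau_j}])\big]$ remains and does not vanish. Controlling it is exactly why the algorithm skips an additional $N$ steps after each interval ($\tau\leftarrow\tau+2N$) and is the delicate part of the proof of \citep[Lemma 6]{wei2020model}; the subtlety you flag at the end (coupling of the $\tau_j$ with the block decomposition) is not the real issue.

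There is also a secondary quantitative problem: the claim $\Pr_k(I_s=0)\le T^{-\Omega(\log T)}$ is too strong. With $H=\lceil 16\tmix\thit(\log_2T)^2\rceil$ and blocks of length $2N=8\tmix\log_2T$, you get about $2\thit\log_2T$ blocks, each succeeding with probability at least $1/(2\thit)$, so the block argument yields only $\Pr_k(I_s=0)\le(1-\tfrac{1}{2\thit})^{2\thit\log_2T}\le e^{-\log_2T}=T^{-1/\ln 2}$, which is polynomially small, not super-polynomially small. Since $|A_s|=\Theta(N\lambda/\eta)=\Theta(\tmix(\log T)\lambda/\eta)$, the product $|A_s|\Pr_k(I_s=0)$ is at most $\frac{\lambda}{\eta T}$ only when $\tmix\log T\lesssim T^{0.44}$, which does not follow from the paper's standing assumption $T\ge 30A\max\{\tmix,\thit\}$. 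Both gaps are repairable --- the first by a mixing/coupling argument exploiting the $2N$ spacing, the second by sharpening the block accounting --- but as written the proof does not close.
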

\begin{proof}
	Note that $\hatV_k(s)=\sum_a\pi_k(a|s)\hatQ_k(s, a)$ where $\hatQ_k$ is the estimator $\hatbeta_k$ in \citep[Lemma 6]{wei2020model} with reward $r-\frac{\lambda_k}{\eta}c$, and we have $|\E_k[\hatQ_k(s, a)] - q^{\pi_k,r-\frac{\lambda_k}{\eta}c}(s, a) - NJ^{\pi_k,r-\frac{\lambda_k}{\eta}c}| \leq \frac{30\lambda}{\eta T^2}$ (the constant is determined by tracing their proof of Lemma 6 in Appendix B.2)  by $\norm{r-\frac{\lambda_k}{\eta}c}_{\infty}\leq2\lambda/\eta$.
	Then by $T\geq 30\max\{\tmix,\thit\}$ and $\sum_a\pi_k(a|s)q^{\pi_k,r-\frac{\lambda_k}{\eta}c}(s, a)=v^{\pi_k,r-\frac{\lambda_k}{\eta}c}(s)$, the statement is proved.
\end{proof}

\subsection{\pfref{lem:stab}}
\begin{proof}
	By the update rule of $\pi_k$ and following the proof of \citep[Lemma 17]{chen2021impossible} (by \pref{lem:Q span}, we have $c_{\max}=2H\iota$ in their proof), we have 
	$$\pi_k(a|s)\in \sbr{\exp\rbr{-4\theta H\iota}, \exp\rbr{4\theta H\iota}}\pi_{k-1}(a|s).$$
	Therefore, by $|e^x-1| \leq 2|x|$ for $x\in[-1, 1]$, we have $\abr{\pi_k(a|s)-\pi_{k-1}(a|s)} \leq 8\theta H\iota\pi_{k-1}(a|s)$.
	For the second statement, first note that by \pref{lem:val diff} and \pref{lem:bound v}:
	\begin{align*}
		\abr{J^{\pi_k,d} - J^{\pi_{k-1},d}} &= \abr{\sumsa\mu_{\pi_k}(s)(\pi_k(a|s)-\pi_{k-1}(a|s))q^{\pi_{k-1},d}(s, a)} \leq \sumsa\mu_{\pi_k}(s)\abr{\pi_k(a|s)-\pi_{k-1}(a|s)}|q^{\pi_{k-1},d}(s, a)|\\
		&\leq \sum_{s, a}\mu_{\pi_k}(s)\cdot 8\theta H\iota\pi_{k-1}(a|s)\cdot 6\tmix \leq 48\theta H\iota\tmix.
	\end{align*}
	Next, for any policy $\pi$, define $d^{\pi}(s) = \sum_a\pi(a|s)d(s, a)$.
	Note that by $\inner{\mu_{\pi}}{d^{\pi}}=J^{\pi,d}$,
	\begin{align*}
		v^{\pi,d}(s)=\sum_{t=0}^{\infty}\inner{(P^{\pi})^t_{s,\cdot} - \mu_{\pi}}{d^{\pi}} = \sum_{t=0}^{N-1}\inner{(P^{\pi})^t_{s,\cdot}}{d^{\pi}} - NJ^{\pi, d} + \sum_{t=N}^{\infty}\inner{(P^{\pi})^t_{s,\cdot} - \mu_{\pi}}{d^{\pi}}.
	\end{align*}
	Moreover, by \pref{lem:sum mix}, we have $\abr{\sum_{t=N}^{\infty}\inner{(P^{\pi})^t_{s,\cdot} - \mu_{\pi}}{d^{\pi}} } \leq \frac{1}{T^3}$ for any policy $\pi$.
	Therefore,
	\begin{align*}
		&|v^{\pi_k,d}(s) - v^{\pi_{k-1},d}(s)|\\
		&\leq \abr{\sum_{t=0}^{N-1}\inner{(P^{\pi_k})^t_{s,\cdot} - (P^{\pi_{k-1}})^t_{s,\cdot} }{d^{\pi_k}} } + \abr{\sum_{t=0}^{N-1}\inner{(P^{\pi_{k-1}})^t_{s,\cdot}}{d^{\pi_k}-d^{\pi_{k-1}}}} + N\abr{J^{\pi_k,d} - J^{\pi_{k-1},d}} + \frac{2}{T^3}\\
		&\leq \sum_{t=0}^{N-1}\norm{ (P^{\pi_k})^t - (P^{\pi_{k-1}})^t)d^{\pi_k} }_{\infty} + \sum_{t=0}^{N-1}\norm{d^{\pi_k}-d^{\pi_{k-1}}}_{\infty} + 48\theta HN\iota\tmix + \frac{2}{T^3}.
	\end{align*}
	For the first term, note that:
	\begin{align*}
		\norm{ ((P^{\pi_k})^t - (P^{\pi_{k-1}})^t)d^{\pi_k} }_{\infty} &\leq \norm{ P^{\pi_k}((P^{\pi_k})^{t-1} - (P^{\pi_{k-1}})^{t-1})d^{\pi_k} }_{\infty} + \norm{ (P^{\pi_k} - P^{\pi_{k-1}})(P^{\pi_{k-1}})^{t-1}d^{\pi_k} }_{\infty}\\
		&\leq \norm{ ((P^{\pi_k})^{t-1} - (P^{\pi_{k-1}})^{t-1})d^{\pi_k} }_{\infty} + \max_s\norm{P^{\pi_k}_{s,\cdot}-P^{\pi_{k-1}}_{s,\cdot}}_1. \tag{every row of $P^{\pi_k}$ sums to $1$ and $\norm{(P^{\pi_{k-1}})^{t-1}d^{\pi_k}}_{\infty}\leq 1$}
	\end{align*}
	Moreover, by $\abr{\pi_k(a|s)-\pi_{k-1}(a|s)} \leq 8\theta H\iota\pi_{k-1}(a|s)$,
	\begin{align*}
		\max_s\norm{P^{\pi_k}_{s,\cdot}-P^{\pi_{k-1}}_{s,\cdot}}_1 = \max_s\abr{ \sum_{s'}\sum_a(\pi_k(a|s)-\pi_{k-1}(a|s))P_{s, a}(s') } \leq 8\theta H\iota.
	\end{align*}
	Plugging this back and by a recursive argument, we get $\norm{ ((P^{\pi_k})^t - (P^{\pi_{k-1}})^t)d^{\pi_k} }_{\infty}\leq 8\theta H\iota t$.
	Moreover, $\sum_{t=0}^{N-1}\norm{d^{\pi_k}-d^{\pi_{k-1}}}_{\infty}\leq\sum_{t=0}^{N-1}\max_s\norm{\pi_k(\cdot|s)-\pi_{k-1}(\cdot|s)}_1\leq 8\theta HN\iota$.
	Thus,
	\begin{align*}
		|v^{\pi_k,d}(s) - v^{\pi_{k-1},d}(s)| \leq 8\theta HN^2\iota + 8\theta HN\iota + 48\theta HN\iota\tmix + 2/T^3 \leq 65\theta HN^2\iota. \tag{$\norm{d^{\pi_k}-d^{\pi_{k-1}}}_{\infty}\leq \max_s|\sum_a(\pi_k(a|s)-\pi_{k-1}(a|s))d(s, a)|\leq 8\theta H\iota$}
	\end{align*}
	This completes the proof of the second statement.
\end{proof}

\subsection{\pfref{lem:po}}
\begin{proof}
	Define policy $\pi$ such that $\pi(a|s)=(1-\frac{A}{T})\piref(a|s)+\frac{1}{T}$.
	Clearly, $\pi\in\bar{\Delta}$.
	Moreover, by \pref{lem:val diff} and \pref{lem:bound v}:
	\begin{align*}
		\sum_{k\in\calI} (J^{\piref, r - \frac{\lambda_k}{\eta}c} - J^{\pi, r - \frac{\lambda_k}{\eta}c}) &= \sum_{k\in\calI}\sum_{s, a}\mu_{\pi}(s)(\piref(a|s) - \pi(a|s))q^{\piref, r - \frac{\lambda_k}{\eta}c}(s, a)\\ 
		&\leq \sum_{k\in\calI}\sum_{s, a}\mu_{\pi}(s)\rbr{\frac{A}{T}\piref(a|s) - \frac{1}{T}}q^{\piref, r - \frac{\lambda_k}{\eta}c}(s, a) \leq \frac{12A\lambda}{\eta}.
	\end{align*}
	Therefore,
	\begin{align*}
		&\sum_{k\in\calI} (J^{\piref, r - \frac{\lambda_k}{\eta}c} - J^{\pi_k, r - \frac{\lambda_k}{\eta}c}) = \sum_{k\in\calI} (J^{\piref, r - \frac{\lambda_k}{\eta}c} - J^{\pi, r - \frac{\lambda_k}{\eta}c}) + \sum_{k\in\calI} (J^{\pi, r - \frac{\lambda_k}{\eta}c} - J^{\pi_k, r - \frac{\lambda_k}{\eta}c}) \\
		&\leq \frac{12A\lambda}{\eta} + \sum_{k\in\calI}\sumsa\mu_{\pi}(s)(\pi(a|s) - \pi_k(a|s))(q^{\pi_k, r - \frac{\lambda_k}{\eta}c}(s, a) + (N+1)J^{\pi_k, r - \frac{\lambda_k}{\eta}c}) \tag{\pref{lem:val diff} and $\sum_a(\pi(a|s) - \pi_k(a|s))(N+1)J^{\pi_k, r - \frac{\lambda_k}{\eta}c}=0$}\\
		&= \frac{12A\lambda}{\eta} + \sum_{k\in\calI}\sumsa\mu_{\pi}(s)(\pi(a|s) - \pi_k(a|s))\hatbeta_k(s, a)\\ 
		&\qquad + \sum_{k\in\calI}\sumsa\mu_{\pi}(s)(\pi(a|s) - \pi_k(a|s))\rbr{P_{s, a}(v^{\pi_k, r - \frac{\lambda_k}{\eta}c} + NJ^{\pi_k, r - \frac{\lambda_k}{\eta}c}) - \P_{k, s, a}\hatV_k }\\
		&= \frac{12A\lambda}{\eta} + \sum_{k\in\calI}\sumsa\mu_{\pi}(s)(\pi(a|s) - \pi_k(a|s))\hatbeta_k(s, a)\\
		&\qquad + \sum_{k\in\calI}\sumsa\mu_{\pi}(s)(\pi(a|s) - \pi_k(a|s))P_{s, a}\rbr{v^{\pi_k, r - \frac{\lambda_k}{\eta}c} + NJ^{\pi_k, r - \frac{\lambda_k}{\eta}c} - \hatV_k}\\
		&\qquad + \sum_{k\in\calI}\sumsa\mu_{\pi}(s)(\pi(a|s) - \pi_k(a|s))(P_{s, a} - \P_{k, s, a})\hatV_k.
	\end{align*}
	For the third term above, by \pref{lem:hatVk}, \pref{lem:azuma}, and $\norm{\hatV_k}_{\infty}\leq \frac{2\lambda N}{\eta}$, with probability at least $1-\delta$,
	\begin{align*}
		&\sum_{k\in\calI}\sumsa\mu_{\pi}(s)(\pi(a|s) - \pi_k(a|s))P_{s, a}\rbr{v^{\pi_k, r - \frac{\lambda_k}{\eta}c} + NJ^{\pi_k, r - \frac{\lambda_k}{\eta}c} - \hatV_k}\\
		&\leq \sum_{k\in\calI}\sumsa\mu_{\pi}(s)(\pi(a|s) - \pi_k(a|s))P_{s, a}\rbr{\E_k[\hatV_k] - \hatV_k} + \frac{\lambda}{\eta} \leq \frac{4\lambda N}{\eta}\sqrt{|\calI|\ln\frac{4T^3}{\delta}}.
	\end{align*}
	For the fourth term above, with probability at least $1-\delta$:
	\begin{align*}
		&\sum_{k\in\calI}\sumsa\mu_{\pi}(s)(\pi(a|s) - \pi_k(a|s))(P_{s, a} - \P_{k, s, a})\hatV_k \leq \sum_{k\in\calI}\sumsa\mu_{\pi}(s)(\pi(a|s) + \pi_k(a|s))\norm{P_{s, a} - \P_{k, s, a}}_1\norm{\hatV_k}_{\infty} \tag{Cauchy-Schwarz inequality}\\
		&\leq \sum_{k\in\calI}\sumsa\mu_{\pi}(s, a)\frac{2\lambda N}{\eta}\sqrt{S\ln\frac{2SAT}{\delta}}\rbr{\frac{1}{\sqrt{\Np_k(s, a)}} + \sum_{a'}\frac{\pi_k(a'|s)}{\sqrt{\Np_k(s, a')}}} \tag{\pref{lem:weiss} with a union bound over $\SA\times[T]$ and $\norm{\hatV_k}_{\infty}\leq \frac{2\lambda N}{\eta}$}\\
		&= \sum_{k\in\calI}\sumsa\mu_{\pi}(s, a)x_k(s, a) = \sum_{k\in\calI}J^{\pi, P, x_k} = \sum_{k\in\calI}(J^{\pi, P, x_k} - J^{\pi_k, P_k, x_k}) + \sum_{k\in\calI}J^{\pi_k, P_k, x_k}.
	\end{align*}
	By \pref{lem:approx val diff},
	\begin{align*}
		J^{\pi, P, x_k} - J^{\pi_k, P_k, x_k} &\leq \sum_{s, a}\mu_{\pi}(s)(\pi(a|s) - \pi_k(a|s))u_k(s, a)  + \sum_{s, a}\mu_{\pi}(s, a)(P_{s, a} - P_{k, s, a})u'_k + \epsevi\\
		&\leq \sum_{s, a}\mu_{\pi}(s)(\pi(a|s) - \pi_k(a|s))u_k(s, a) + \epsevi. \tag{definition of $P_{k,s,a}$}
	\end{align*}
	Substituting these back, we have:
	\begin{align*}
		&\sum_{k\in\calI} J^{\piref, r - \frac{\lambda_k}{\eta}c} - J^{\pi_k, r - \frac{\lambda_k}{\eta}c}\\
		&\leq \frac{12A\lambda}{\eta} + \sum_{k\in\calI}\sumsa\mu_{\pi}(s)(\pi(a|s) - \pi_k(a|s))(\hatbeta_k(s, a) + u_k(s, a)) + \frac{4\lambda N}{\eta}\sqrt{|\calI|\ln\frac{4T^3}{\delta}} + \sum_{k\in\calI}J^{\pi_k, P_k, x_k} + K\epsilon_{\EVI}.
	\end{align*}
	Note that by the standard OMD analysis~\citep{hazan2019introduction}, for any $s\in\calS$:
	\begin{align*}
		&\sum_{k\in\calI}\suma(\pi(a|s) - \pi_k(a|s))(\hatbeta_k(s, a) + u_k(s, a))\\
		&\leq \sum_{k\in\calI}\rbr{D(\pi(\cdot|s), \pi_k(\cdot|s)) - D(\pi(\cdot|s), \pi_{k+1}(\cdot|s)) + D(\pi_k(\cdot|s), \pi'_{k+1}(\cdot|s))  },
	\end{align*}
	where $\pi'_{k+1}(a|s)=\pi_k(a|s)\exp(\theta (\hatbeta_k(s, a) + u_k(s, a)))$.
	Then by $\theta \abr{\hatbeta_k(s, a) + u_k(s, a)}\leq 2\theta H\iota\leq 1$ (\pref{lem:Q span}):
	\begin{align*}
		D(\pi_k(\cdot|s), \pi'_{k+1}(\cdot|s)) &= \frac{1}{\theta }\suma\rbr{\pi_k(a|s)\ln\frac{\pi_k(a|s)}{\pi'_{k+1}(a|s)} - \pi_k(a|s) + \pi'_{k+1}(a|s) }\\
		&= \frac{1}{\theta }\suma\pi_k(a|s)\rbr{ -\theta (\hatbeta_k(s, a) + u_k(s, a)) -1 + e^{\theta (\hatbeta_k(s, a) + u_k(s, a))} }\\
		&\leq \theta \suma\pi_k(a|s)\rbr{ \hatbeta_k(s, a) + u_k(s, a) }^2. \tag{$e^{-x}-1+x\leq x^2$ for $x\geq -1$}
	\end{align*}
	Therefore,
	\begin{align*}
		&\sum_{k\in\calI}\sumsa\mu_{\pi}(s)(\pi(a|s) - \pi_k(a|s))(\hatbeta_k(s, a) + u_k(s, a))\\
		&\leq \sums\mu_{\pi}(s)\sum_{k\in\calI}\rbr{D(\pi(\cdot|s), \pi_k(\cdot|s)) - D(\pi(\cdot|s), \pi_{k+1}(\cdot|s)) + \theta \suma\pi_k(a|s)\rbr{\hatbeta_k(s, a) + u_k(s, a)}^2  }\\
		&\leq \sums\mu_{\pi}(s)\rbr{ \frac{\ln T}{\theta } + \theta \sum_{k\in\calI}\suma\pi_k(a|s)\rbr{\hatbeta_k(s, a) + u_k(s, a)}^2} \tag{$\pi_k(\cdot|s)\in\bar{\Delta}$}\\
		&\leq \sums\mu_{\pi}(s)\rbr{ \frac{\ln T}{\theta } + 4\theta |\calI|H^2\iota^2 } \leq 4H\iota\sqrt{K\ln T} + 4H\iota\ln T. \tag{\pref{lem:Q span} and definition of $\theta $}
	\end{align*}
	Moreover, with probability at least $1-2\delta$,
	\begin{align*}
		&\sum_{k\in\calI}J^{\pi_k, P_k, x_k} = \sum_{k\in\calI}(J^{\pi_k, P_k, x_k} - J^{\pi_k, P, x_k}) + \sum_{k\in\calI}J^{\pi_k, P, x_k}\\
		&\leq \sum_{k\in\calI}\sum_{s,a}\mu_{\pi_k}(s, a)[P_{k, s, a}-P_{s, a}]u'_k + \sum_{k\in\calI}\sum_{s, a}\mu_{\pi_k}(s, a)x_k(s, a) + K\epsevi \tag{\pref{lem:approx val diff}}\\
		&\leq \sum_{k\in\calI}\sumsa\mu_{\pi_k}(s, a)\frac{3H\iota\sqrt{S\ln\frac{2SAT}{\delta}}}{\sqrt{\Np_k(s, a)}} + K\epsevi \tag{\pref{lem:Qk}, definition of $\calQ_k$, and \pref{lem:EVI span}}\\
		&\leq 24\iota\sqrt{S^2AH|\calI|\ln\frac{2SAT}{\delta}} + 60S^{1.5}AH\iota\ln^{3/2}\rbr{\frac{4SAT}{\delta}} + K\epsilon_{\EVI}. \tag{\pref{lem:sum mu}}
	\end{align*}
	Substituting these back and by the definition of $\lambda$ completes the proof.
\end{proof}

\subsection{\pfref{thm:ergodic}}
\begin{proof}
	For constraint violation, note that:
	\begin{align*}
		&\sumt (c(s_t, a_t) - \tau) = \sumk\sum_{h=1}^H (c(s^k_h, a^k_h) - J^{\pi_k,c}) + H\sumk(J^{\pi_k,c}-\tau).
	\end{align*}
	For the first term, by \pref{lem:stab episode}, we have $\sumk\sum_{h=1}^H (c(s^k_h, a^k_h) - J^{\pi_k,c})\leq\lambda+\tilO{\tmix}$ with probability at least $1-2\delta$.
	For the second term, by \pref{lem:bound lambda}, we have $\lambda_{k+1}\geq\lambda_k + \hatJ_k + \epsilon - \tau$ with probability at least $1-6\delta$, and with probability at least $1-\delta$,
	\begin{align*}
		H\sumk(J^{\pi_k,c}-\tau) &\leq H\sumk(J^{\pi_k,c} - \E_k[\hatJ_k]) + H\sumk(\E_k[\hatJ_k] - \hatJ_k) + H\sumk(\lambda_{k+1}-\lambda_k-\epsilon)\\
		&\leq \frac{1}{T} + H\sqrt{2K\ln\frac{4T^3}{\delta}} + H(\lambda - K\epsilon) \tag{\pref{lem:hatJ} and \pref{lem:azuma}}\\
		&\leq \tilO{1} + \lambda + H(\lambda - K\epsilon). \tag{definition of $\lambda$}
	\end{align*}
	When $\epsilon=3\lambda/K$, that is, $3\lambda/K\geq(\tau-c^0)/2$, we have
	\begin{align*}
		\sumt (c(s_t, a_t) - \tau) \leq 2\lambda + \tilO{\tmix} + H(\lambda - K\epsilon) = \tilO{\tmix}.
	\end{align*}
	Otherwise, $K\leq \frac{6\lambda}{\tau-c^0}$, which gives $T=\tilO{\frac{N^2H^2S^3A + N^2H^3S}{(\tau-c^0)^2} + \frac{NH^2S^2A}{\tau-c^0}}$ and the constraint violation is of the same order by $C_T\leq T$.
	 
	For regret, note that $\sumk (\optJeps - J^{\pi_k,r}) = \sumk (J^{\optpieps, r - \frac{\lambda_k}{\eta}c} - J^{\pi_k, r - \frac{\lambda_k}{\eta}c}) + \sumk\frac{\lambda_k}{\eta}(J^{\optpieps, c} - J^{\pi_k, c})$, and with probability at least $1-\delta$,
	\begin{align*}
		\sumk\frac{\lambda_k}{\eta}(J^{\optpieps, c} - J^{\pi_k, c}) &\leq \sumk\frac{\lambda_k}{\eta}\rbr{\tau - \epsilon - J^{\pi_k, c}} \tag{definition of $\optpieps$}\\
		&\leq \sumk\frac{\lambda_k}{\eta}\rbr{\tau - \epsilon - \hatJ_k} + \sumk\frac{\lambda_k}{\eta}(\hatJ_k - \E_k[\hatJ_k]) + \frac{\lambda}{\eta T}\tag{\pref{lem:hatJ}}\\
		&\leq \frac{1}{\eta}\sumk\lambda_k(\lambda_k-\lambda_{k+1}) + \frac{\tau^2K}{\eta} + \tilO{\frac{\lambda}{\eta}\sqrt{K}},
	\end{align*}
	where in the last step we apply Azuma's inequality and the following argument: if $\lambda_{k+1}>0$, then $\tau-\epsilon-\hatJ_k = \lambda_k-\lambda_{k+1}$ by the definition of $\lambda_k$.
	Otherwise, $\lambda_k\leq\tau-\epsilon-\hatJ_k<\tau$ and $\lambda_k(\tau-\epsilon-\hatJ_k)\leq\tau^2$.
	Moreover, $\sumk\lambda_k(\lambda_k-\lambda_{k+1}) = \frac{1}{2}\sumk\rbr{ \lambda_k^2 - \lambda_{k+1}^2 + (\lambda_{k+1} - \lambda_k)^2 } \leq \frac{K}{2}$ by $\lambda_1=0$ and $|\lambda_k-\lambda_{k+1}|\leq 1$.
	Therefore, by \pref{lem:po} and definition of $\lambda$ and $\eta$, with probability at least $1-4\delta$,
	\begin{align*}
		\sumk (\optJeps - J^{\pi_k,r}) &\leq \sumk (J^{\optpieps, r - \frac{\lambda_k}{\eta}c} - J^{\pi_k, r - \frac{\lambda_k}{\eta}c}) + \tilO{\frac{K}{\eta} + \frac{\lambda}{\eta}\sqrt{K}}\\
		&= \tilO{\frac{N}{\tau-c^0}\rbr{\sqrt{S^3AT} + \sqrt{SHT} + S^2AH} + \frac{1}{(\tau-c^0)^2}}.
	\end{align*}
	Thus, with probability at least $1-2\delta$,
	\begin{align*}
		&\sumt (\optJ - r(s_t, a_t)) = H\sumk (\optJ - \optJeps) + H\sumk (\optJeps - J^{\pi_k,r}) + \sumk\sum_{h=1}^H (J^{\pi_k,r} - r(s^k_h, a^k_h))\\
		&\leq \frac{T\epsilon}{\tau-c^0} + \tilO{ \frac{NH}{\tau-c^0}\rbr{\sqrt{S^3AT} + \sqrt{SHT} + S^2AH} + \frac{H}{(\tau-c^0)^2} } \tag{\pref{lem:Jeps} and \pref{lem:stab episode}}\\
		&= \tilO{ \frac{NH}{\tau-c^0}\rbr{\sqrt{S^3AT} + \sqrt{SHT} + S^2AH} + \frac{H}{(\tau-c^0)^2} }.\tag{by the definition of $\epsilon$}
	\end{align*}
	Plugging in the definition of $N$ and $H$ completes the proof.
\end{proof}

\subsection{Transition Estimation and Computation of $u_k, P_k$}
\label{app:Pk}
Define $\calQ_k$ as a transition confidence set based on Weissman's inequality (\pref{lem:weiss}), such that $\calQ_k=\cap_{s,a}\calQ_{k,s,a}$, and
\begin{align*}
	\calQ_{k,s,a} = \cbr{P': \norm{P'_{s, a} - \P_{k, s, a}}_1 \leq \sqrt{\frac{S\ln\frac{2SAT}{\delta}}{\Np_k(s, a)}}}.
\end{align*}
We first show that $P$ falls in $\calQ_k$ with high probability.
\begin{lemma}
	\label{lem:Qk}
	With probability at least $1-\delta$, $P\in\calQ_k$ for all $k$.
\end{lemma}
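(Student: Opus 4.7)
The plan is to apply Weissman's inequality (the referenced \pref{lem:weiss}) in a pointwise manner to each $(s,a)$ pair and then union-bound both over state-action pairs and over the number of samples. Specifically, Weissman's inequality states that if $\P$ is the empirical distribution formed from $n$ i.i.d.\ samples from a distribution $P$ supported on $S$ elements, then for any $\delta' \in (0,1)$, with probability at least $1-\delta'$,
\[
\norm{\P - P}_1 \le \sqrt{\frac{S \ln(2/\delta')}{n}}.
\]

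First I would fix a state-action pair $(s,a)$ and an integer $n \in [T]$, and condition on the event that $\Np_k(s,a) = n$ at some episode $k$. Since, conditionally on the visit count, the empirical transition $\P_{k,s,a}$ is the empirical average of $n$ independent samples from $P_{s,a}$ (by the Markov property), Weissman's inequality gives
\[
\Pr\!\sbr{\norm{\P_{k,s,a} - P_{s,a}}_1 > \sqrt{\frac{S\ln(2SAT/\delta)}{n}}} \le \frac{\delta}{SAT},
\]
with the choice $\delta' = \delta/(SAT)$.

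Next I would take a union bound over the $SA$ choices of $(s,a)$ and the at most $T$ possible sample sizes $n \in [T]$ (it suffices to consider $n \in \{1,\ldots,T\}$ because $\Np_k(s,a)\leq T$, and for the trivial case $\N_k(s,a)=0$ the set $\calQ_{k,s,a}$ is all of $\Delta_\calS$ so the containment holds vacuously). Summing the failure probabilities gives a total failure probability of at most $SA \cdot T \cdot \frac{\delta}{SAT} = \delta$. On the complementary good event, for every episode $k$ and every $(s,a)$, the bound $\norm{P_{s,a} - \P_{k,s,a}}_1 \le \sqrt{S\ln(2SAT/\delta)/\Np_k(s,a)}$ holds simultaneously, hence $P \in \calQ_{k,s,a}$ for all $(s,a)$, i.e., $P \in \calQ_k$ for all $k$.

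The only subtlety is the conditioning step in applying Weissman's inequality: the samples defining $\P_{k,s,a}$ are drawn at data-dependent times, but a standard martingale/stopping-time argument shows that, conditional on the random visit count being exactly $n$, the $n$ successor states are i.i.d.\ draws from $P_{s,a}$; this justifies the application of Weissman's inequality uniformly over $n$. Aside from this, no nontrivial obstacle appears — the proof is essentially a careful application of Weissman plus a union bound.
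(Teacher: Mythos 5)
Your proof is correct and follows essentially the same route as the paper's: apply Weissman's inequality (\pref{lem:weiss}) with failure probability $\delta/(SAT)$ for each fixed $(s,a)$ and sample count $n\in[T]$, then union bound over all $SA\cdot T$ combinations. The extra remarks you add about the $\N_k(s,a)=0$ case and the stopping-time justification for treating the successor states as i.i.d.\ samples are sound and only make the argument more careful than the paper's terse version.
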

\begin{proof}
	For any $(s, a)$, $n\in[T]$ and $m=S$, \pref{lem:weiss} gives with probability at least $1-\frac{\delta}{SAT}$: $\norm{P_{s, a}-\P^n_{s, a}}\leq \sqrt{\frac{S\ln\frac{2SAT}{\delta}}{n}}$, where $\P^n_{s, a}$ is the empirical distribution computed by $n$ i.i.d samples from $P_{s, a}$.
	Taking a union bound over $(s, a)\in\SA$, $n\in[T]$ proves the statement.
\end{proof}
Next, we show the computation procedure of $u_k$ and $P_k$ for a fixed episode $k$.
Note that $P_k$ is an approximation of $P^{\star}_k=\argmax_{P'\in\calQ_k}J^{\pi_k,P',x_k}$, and finding $P^{\star}_k$ is equivalent to computing the optimal policy in an extended MDP $\tilcalM_k$ with state space $\calS$ and extended action space $\calQ_k$, such that for any extended action $P'\in\calQ_k$, the reward at $(s, P')$ is $\suma\pi_k(a|s)r(s, a)$ and the transition probability to $s'$ is $\sum_a\pi_k(a|s)P'_{s, a}(s')$. 
Note that since $\calQ_k=\bigcap_{s, a}\calQ_{k,s,a}$ where $\calQ_{k,s,a}$ only puts constraints on transition at $(s, a)$, any deterministic policy in $\tilcalM_k$ can also be represented by an element in $\calQ_k$.
We adopt a variant of Extended Value Iteration (EVI) in \citep[Theorem 7]{jaksch2010near} to approximate $P^{\star}_k$, where we execute the following value iteration procedure in $\tilcalM_k$,
\begin{equation}
	u^0(s) = 0, \quad u^{i+1}(s) = \suma\pi_k(a|s)\rbr{x_k(s, a) + \max_{P\in\calQ_k}P_{s, a}u^i}. \label{eq:EVI}
\end{equation}
We stop the iteration above at index $\istar$, which is the first index $i$ such that $\sp(u^{i+1}-u^i)\leq\epsevi = \frac{1}{T}$.
Then we define $u_k(s)=u^{\istar+1}(s)-\min_{s'}u^{\istar+1}(s')$, $u'_k(s)=u^{\istar}(s)-\min_{s'}u^{\istar}(s')$, $P_{k,s,a}=\argmax_{P\in\calQ_k}P_{s,a}u^{\istar}$ as the transition in $u_k$, and $u_k(s, a)=x_k(s,a) - \min_{s'}u^{\istar+1}(s') + P_{k,s,a}u^{\istar}$ so that $u_k(s)=\sum_a\pi_k(a|s)u_k(s,a)$, which is the function used in \pref{alg:ergodic}.
Also note that the maximization in \pref{eq:EVI} can be solved by \citep[Figure 2]{jaksch2010near}.

Now we show that the value iteration in \pref{eq:EVI} always converges (specifically, the transition converges to $P^{\star}_k$) similar to \citep[Theorem 7]{jaksch2010near}.
First note that $\tilcalM_k$ is communicating since $P\in\calQ_k$ whose corresponding MDP is ergodic.
Moreover, the transition chosen in each iteration of \pref{eq:EVI} is aperiodic and unichain.
This is because in each iteration of \pref{eq:EVI} there is a ``best'' state and every state has non-zero probability transiting to the ``best'' state.
Following the proof of \citep[Theorem 7]{jaksch2010near}, we conclude that EVI in \pref{eq:EVI} converges.

Since $P_k$ is aperiodic and unichain by the arguments above, there exist constant $J^{\pi_k,P_k,x_k}$ such that $J^{\pi_k,P_k,x_k}(s)=J^{\pi_k,P_k,x_k}$. 
Importantly, by \citep[Theorem 8.5.6]{puterman1994markov}, we have:
\begin{equation}
	\label{eq:uJ eps}
	|u^{\istar+1}(s)-u^{\istar}(s)-J^{\pi_k,P_k,x_k}|\leq\epsevi
\end{equation}
This leads to the following approximated value difference lemma.
\begin{lemma}
	\label{lem:approx val diff}
	$J^{\pi,P,x_k}-J^{\pi_k,P_k,x_k}=\sumsa\mu_{\pi}(s)\rbr{\pi(a|s)-\pi_k(a|s)}u_k(s, a) + \sumsa\mu_{\pi}(s, a)(P_{s,a}-P_{k,s,a})u'_k + \delta_{\EVI}$, where $|\delta_{\EVI}|\leq \epsevi$.
\end{lemma}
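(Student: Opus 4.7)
The plan is to prove the approximated value difference formula by following the standard value-difference argument, but replacing the exact Bellman equation with the \emph{approximate} Bellman relation produced by one more step of EVI.

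First, from the definition of $u_k(s,a) = x_k(s,a) - \min_{s'}u^{\istar+1}(s') + P_{k,s,a}u^{\istar}$, and the fact that $u^{\istar} = u'_k + \min_{s'}u^{\istar}(s')\mathbf{1}$, I would rewrite
\[
u_k(s,a) = x_k(s,a) + P_{k,s,a}u'_k - c, \qquad c \defeq \min_{s'}u^{\istar+1}(s') - \min_{s'}u^{\istar}(s'),
\]
using that $P_{k,s,a}$ is a probability distribution so constant shifts pass through. Solving for $x_k$ and substituting into $J^{\pi,P,x_k}=\sumsa\mu_{\pi}(s,a)x_k(s,a)$ yields
\[
J^{\pi,P,x_k} = \sumsa\mu_{\pi}(s,a)u_k(s,a) - \sumsa\mu_{\pi}(s,a)P_{k,s,a}u'_k + c.
\]

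Next, I would manipulate each of the two sums. For the first, splitting $\pi(a|s) = \pi_k(a|s) + [\pi(a|s)-\pi_k(a|s)]$ and using $\sum_a \pi_k(a|s)u_k(s,a)=u_k(s)$ gives
\[
\sumsa\mu_{\pi}(s,a)u_k(s,a) = \sums\mu_{\pi}(s)u_k(s) + \sumsa\mu_{\pi}(s)\bigl(\pi(a|s)-\pi_k(a|s)\bigr)u_k(s,a).
\]
For the second sum, I would introduce the true transition: $P_{k,s,a}u'_k = P_{s,a}u'_k - (P_{s,a}-P_{k,s,a})u'_k$, and then exploit the stationarity identity $\sumsa\mu_{\pi}(s,a)P_{s,a}u'_k = \sums\mu_{\pi}(s)u'_k(s)$ (which holds because $\mu_{\pi}$ is invariant under $P^{\pi}$). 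Combining these, the display collapses to
\[
J^{\pi,P,x_k} = \sumsa\mu_{\pi}(s)\bigl(\pi(a|s)-\pi_k(a|s)\bigr)u_k(s,a) + \sums\mu_{\pi}(s)\bigl[u_k(s)-u'_k(s)\bigr] + \sumsa\mu_{\pi}(s,a)(P_{s,a}-P_{k,s,a})u'_k + c.
\]

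Finally, I would invoke the EVI stopping guarantee \eqref{eq:uJ eps}, which I would rewrite in terms of $u_k$ and $u'_k$ as $u_k(s)-u'_k(s) = J^{\pi_k,P_k,x_k} - c + \delta(s)$ with $|\delta(s)|\leq\epsevi$; substituting this makes the constant $c$ cancel exactly, leaving the target equation with $\delta_{\EVI} \defeq \sums\mu_{\pi}(s)\delta(s)$, for which $|\delta_{\EVI}|\leq\epsevi$ follows from $\mu_{\pi}$ being a probability distribution.

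The only subtle point, and the place I would be most careful, is the bookkeeping of the additive constants: $u_k$ and $u'_k$ are each shifted by \emph{different} minima ($\min u^{\istar+1}$ versus $\min u^{\istar}$), and the shift difference $c$ must thread through the entire derivation so that it cancels against an identical $c$ produced by the approximate Bellman relation. Everything else is a direct analog of the standard average-reward value-difference lemma.
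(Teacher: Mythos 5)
Your proposal is correct and follows essentially the same route as the paper's proof: both express $x_k(s,a)$ via the one-step EVI relation $u^{\istar+1}(s,a)=x_k(s,a)+P_{k,s,a}u^{\istar}$, substitute into $J^{\pi,P,x_k}=\sumsa\mu_{\pi}(s,a)x_k(s,a)$, split into the policy-difference, transition-difference, and EVI-error terms, and finish with the stationarity identity $\mu_{\pi}(s')=\sumsa\mu_{\pi}(s,a)P_{s,a}(s')$ together with \pref{eq:uJ eps}. The only difference is cosmetic bookkeeping: the paper works with $u^{\istar+1}(s,a)-u^{\istar+1}(s)$ and $(P_{s,a}-P_{k,s,a})u^{\istar}$ so the normalization minima never appear, whereas you carry the shift constant $c$ explicitly and verify it cancels.
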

\begin{proof}
	Define $u^i(s, a)=x_k(s, a) + \max_{P\in\calQ_k}P_{s, a}u^{i-1}$ so that $u^i(s)=\sum_a\pi_k(a|s)u^i(s, a)$.
	Since $P$ is ergodic, we have:
	\begin{align*}
		J^{\pi,P,x_k} &= \sumsa\mu_{\pi}(s, a)x_k(s, a) = \sumsa\mu_{\pi}(s, a)\rbr{ u^{\istar+1}(s, a) - P_{k,s,a}u^{\istar} }\\
		&= \sumsa\mu_{\pi}(s, a)\rbr{ u^{\istar+1}(s, a) - u^{\istar+1}(s) } + \sumsa\mu_{\pi}(s, a)\rbr{u^{\istar+1}(s) - P_{s, a}u^{\istar}} + \sumsa\mu_{\pi}(s, a)(P_{s,a}-P_{k,s,a})u^{\istar}\\
		&= \sumsa\mu_{\pi}(s, a)\rbr{ u_k(s, a) - u_k(s) } + \sumsa\mu_{\pi}(s, a)\rbr{u^{\istar+1}(s) - P_{s, a}u^{\istar}} + \sumsa\mu_{\pi}(s, a)(P_{s,a}-P_{k,s,a})u'_k.
	\end{align*}
	Let $\delta_{\EVI}=\sumsa\mu_{\pi}(s, a)\rbr{u^{\istar+1}(s) - P_{s, a}u^{\istar}}$. 
	By \pref{eq:uJ eps}, we have
	\begin{align*}
		\delta_{\EVI}&\leq\sumsa\mu_{\pi}(s, a)\rbr{ u^{\istar}(s) - P_{s,a}u^{\istar} + J^{\pi_k,P_k,x_k} + \epsevi } = J^{\pi_k,P_k,x_k} + \epsevi. \tag{$\mu_{\pi}(s') = \sumsa\mu_{\pi}(s,a)P_{s,a}(s')$}
	\end{align*}
	Showing $\delta_{\EVI}\geq-\epsevi$ is similar.
	Further by $\sum_a\mu_{\pi}(s,a)(u_k(s,a)-u_k(s))=\mu_{\pi}(s)\sum_a(\pi(a|s)-\pi_k(a|s))u_k(s,a)$, the statement is proved.
\end{proof}

\subsection{Auxiliary Lemmas}
\begin{lemma}\citep[Corollary 13.1]{wei2020model}
	\label{lem:mix}
	For any ergodic MDP with mixing time $\tmix$, we have $\norm{(P^{\pi})^t_{s,\cdot} - \mu_{\pi}}_1\leq 2\cdot 2^{-t/\tmix}$ for all policy $\pi$, state $s\in\calS$, and $t\geq2\tmix$.
\end{lemma}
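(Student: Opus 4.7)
The plan is to derive the geometric contraction from the base case $\bar d(\tmix) \le 1/8$ implied by the definition of $\tmix$, using the standard submultiplicativity of the Dobrushin-type coupling coefficient. Define
\[
d(t) \;=\; \max_{s,s'\in\calS}\tfrac{1}{2}\norm{(P^\pi)^t_{s,\cdot} - (P^\pi)^t_{s',\cdot}}_1,\qquad \bar d(t) \;=\; \max_{s\in\calS}\tfrac{1}{2}\norm{(P^\pi)^t_{s,\cdot} - \mu_{\pi}}_1,
\]
so the claim is equivalent to $\bar d(t)\le 2^{-t/\tmix}$ for $t\ge 2\tmix$. Since $\mu_\pi$ is a convex combination of the rows $(P^\pi)^t_{s',\cdot}$ (by stationarity), the triangle inequality gives $\bar d(t)\le d(t)$, and likewise $d(t)\le 2\bar d(t)$.

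Next I would establish the submultiplicativity $d(m+n)\le d(m)\,d(n)$. Writing
\[
(P^\pi)^{m+n}_{s,\cdot} - (P^\pi)^{m+n}_{s',\cdot} \;=\; \sum_{s''} \bigl((P^\pi)^m_{s,s''} - (P^\pi)^m_{s',s''}\bigr)\,(P^\pi)^n_{s'',\cdot},
\]
I would decompose the signed measure on $s''$ into its Jordan positive and negative parts $\mu^+,\mu^-$, each of total mass $d(m)$, and bound
\[
\tfrac12\bigl\|\textstyle\sum_{s''}(\mu^+(s'')-\mu^-(s''))(P^\pi)^n_{s'',\cdot}\bigr\|_1 \;\le\; d(m)\cdot d(n),
\]
since replacing the common mass cancels and the remaining $L_1$ distance between two probability distributions supported on iterates of the chain is at most $d(n)$ in total variation. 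Iterating this with the base case $d(\tmix)\le 2\bar d(\tmix)\le 1/4$ yields $d(k\tmix)\le 4^{-k}$, and hence $\bar d(k\tmix)\le 4^{-k}$.

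Finally, for general $t\ge 2\tmix$, write $t=k\tmix+r$ with $k\ge 2$ and $r\in[0,\tmix)$. By monotonicity of $\bar d$ (which follows from the same convex-combination argument applied one extra step at a time),
\[
\bar d(t) \;\le\; \bar d(k\tmix) \;\le\; 4^{-k} \;=\; 2^{-2k} \;\le\; 2^{-(k+1)} \;\le\; 2^{-t/\tmix},
\]
where the second-to-last step uses $k\ge 1$ and the last uses $t<(k+1)\tmix$. Multiplying by $2$ recovers the $\norm{\cdot}_1$ statement.

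The main obstacle is the submultiplicativity step; everything else is bookkeeping. One must be careful that the argument is for a \emph{fixed} stationary policy $\pi$ (so $P^\pi$ is a genuine stochastic matrix) and that $\mu_\pi$ is its unique stationary distribution, which is guaranteed by ergodicity. A slicker alternative would be to invoke the coupling lemma directly: construct a maximal coupling of the two chains after $m$ steps that agrees with probability $1-d(m)$, and if they have not coupled, recurse; the probability of failure after $k\tmix$ steps is then at most $d(\tmix)^k\le 4^{-k}$, from which the same bound on $\bar d$ follows.
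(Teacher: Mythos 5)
Your proof is correct: the paper does not prove this lemma itself but imports it from \citep[Corollary 13.1]{wei2020model}, and your argument---reduce to the Dobrushin coefficient $d(t)$, prove submultiplicativity via the Jordan decomposition, seed with $d(\tmix)\le 2\bar d(\tmix)\le 1/4$ from the definition of $\tmix$, and convert $4^{-k}$ into $2^{-t/\tmix}$ using monotonicity---is the standard route and essentially the one taken in that reference. The only point worth flagging is that the definition of $\tmix$ (a max over policies of a min over $t$) only directly gives $\bar d(t_\pi)\le 1/8$ at some $t_\pi\le\tmix$, so the base case at $t=\tmix$ itself already relies on the monotonicity of $\bar d$ that you establish later; since you do prove that monotonicity, this is an ordering quibble rather than a gap.
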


\begin{lemma}\citep[Corollary 13.2]{wei2020model}
	\label{lem:sum mix}
	Let $N=4\tmix\log_2T$.
	For an ergodic MDP with mixing time $\tmix<T/4$, we have for all $\pi$: $\sum_{t=N}^{\infty}\norm{(P^{\pi})^t_{s,\cdot} - \mu_{\pi}}_1\leq \frac{1}{T^3}$. 
\end{lemma}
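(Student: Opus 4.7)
}
The plan is to reduce the claim to a geometric series by invoking \pref{lem:mix} term-by-term and then choosing $N$ large enough so that the resulting bound is at most $1/T^3$. Since $N = 4\tmix\log_2 T \geq 2\tmix$ whenever $T \geq 4$, every index $t \geq N$ in the tail sum satisfies the hypothesis of \pref{lem:mix}, so I may replace each $\norm{(P^{\pi})^t_{s,\cdot} - \mu_{\pi}}_1$ by the upper bound $2 \cdot 2^{-t/\tmix}$ without losing anything important.

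With this substitution, the tail sum collapses to a single geometric series:
\[
\sum_{t=N}^{\infty}\norm{(P^{\pi})^t_{s,\cdot} - \mu_{\pi}}_1 \;\leq\; \sum_{t=N}^{\infty} 2 \cdot 2^{-t/\tmix} \;=\; \frac{2 \cdot 2^{-N/\tmix}}{1 - 2^{-1/\tmix}}.
\]
The numerator is immediate: plugging in $N = 4\tmix\log_2 T$ yields $2^{-N/\tmix} = T^{-4}$, so the numerator equals $2/T^4$. The denominator requires a small auxiliary inequality: since the function $x \mapsto 1 - 2^{-x}$ is concave on $[0,1]$ and vanishes at $0$ with value $1/2$ at $1$, the chord bound gives $1 - 2^{-x} \geq x/2$ for $x \in [0,1]$, and hence $1 - 2^{-1/\tmix} \geq \frac{1}{2\tmix}$ for $\tmix \geq 1$.

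Combining, the tail sum is at most $\frac{2/T^4}{1/(2\tmix)} = \frac{4\tmix}{T^4}$. The assumption $\tmix < T/4$ then upgrades this to $\frac{1}{T^3}$, as claimed. There is essentially no obstacle here: the only non-trivial step is the chord bound on $1 - 2^{-1/\tmix}$, and if one wants a more uniform statement one can simply absorb any numerical slack into the $\log_2 T$ factor appearing in $N$ (that is, redefining $N$ with a slightly larger constant would make the argument robust to the behavior of $\tmix$ near $1$).
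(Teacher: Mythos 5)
Your proof is correct, and since the paper itself does not prove this lemma (it is imported directly as \citep[Corollary 13.2]{wei2020model}), your geometric-series argument is exactly the standard derivation behind the cited result: apply \pref{lem:mix} termwise, evaluate $2^{-N/\tmix}=T^{-4}$, and lower-bound the denominator via $1-2^{-1/\tmix}\geq \frac{1}{2\tmix}$ before invoking $\tmix<T/4$. All steps check out (the concavity chord bound is valid, and the integrality of $N$ is immaterial since enlarging $N$ only shrinks the tail), so there is nothing to fix.
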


\begin{lemma}\citep[Lemma 14]{wei2020model}
	\label{lem:bound v}
	For an ergodic MDP with mixing time $\tmix$, utility function $d\in[0, 1]^{\SA}$, and any $\pi, s, a$, $|v^{\pi,d}(s)|\leq 5\tmix$ and $|q^{\pi,d}(s, a)|\leq 6\tmix$.
\end{lemma}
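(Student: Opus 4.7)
\textbf{Proof Proposal for \pref{lem:bound v}.}

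The plan is to derive an explicit series expansion for $v^{\pi,d}$ in terms of the Markov chain $P^{\pi}$ and then control it using the mixing-time bound from \pref{lem:mix}, after which the bound on $q^{\pi,d}$ follows from a one-step Bellman unwrap.

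First, I would set $d^{\pi}(s) = \suma \pi(a|s) d(s,a)$ and use the Bellman equation \eqref{eq:Bellman} combined with $v^{\pi,d}(s) = \suma \pi(a|s) q^{\pi,d}(s,a)$ to obtain the scalar recursion $v^{\pi,d}(s) = d^{\pi}(s) - J^{\pi,d} + P^{\pi}_{s,\cdot} v^{\pi,d}$. Unrolling this recursion $T$ times and using $\inner{\mu_{\pi}}{d^{\pi}} = J^{\pi,d}$ in the ergodic setting, I get
\begin{equation*}
v^{\pi,d}(s) = \sum_{t=0}^{T-1} \inner{(P^{\pi})^t_{s,\cdot} - \mu_{\pi}}{d^{\pi}} + (P^{\pi})^T_{s,\cdot} v^{\pi,d}.
\end{equation*}
The second identity guaranteed after \eqref{eq:Bellman}, namely $J^{\pi,P,q^{\pi,P,d}}(s) = 0$, translates to $\inner{\mu_\pi}{v^{\pi,d}} = 0$, so the residual $(P^{\pi})^T_{s,\cdot} v^{\pi,d}$ tends to $\inner{\mu_{\pi}}{v^{\pi,d}} = 0$ as $T \to \infty$. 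This yields the clean expansion $v^{\pi,d}(s) = \sum_{t=0}^{\infty} \inner{(P^{\pi})^t_{s,\cdot} - \mu_{\pi}}{d^{\pi}}$.

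Next, I would split the series at $t = 2\tmix$. For $t < 2\tmix$, I use the trivial bound $\norm{(P^{\pi})^t_{s,\cdot} - \mu_{\pi}}_1 \leq 2$ together with $\norm{d^{\pi}}_{\infty} \leq 1$, giving a contribution of at most $4\tmix$. For $t \geq 2\tmix$, I apply \pref{lem:mix} to get $\norm{(P^{\pi})^t_{s,\cdot} - \mu_{\pi}}_1 \leq 2 \cdot 2^{-t/\tmix}$; summing the resulting geometric series contributes $\sum_{t \geq 2\tmix} 2 \cdot 2^{-t/\tmix} \leq \tmix$ (using $1 - 2^{-1/\tmix} \geq \frac{1}{2\tmix}$ and the factor $2^{-2}$ at the start). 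Combining the two pieces yields $|v^{\pi,d}(s)| \leq 5\tmix$. The bound on $q^{\pi,d}$ is then immediate from the Bellman equation: $q^{\pi,d}(s,a) = d(s,a) - J^{\pi,d} + P_{s,a} v^{\pi,d}$, so $|q^{\pi,d}(s,a)| \leq 1 + |J^{\pi,d}| + \max_{s'} |v^{\pi,d}(s')| \leq 2 + 5\tmix \leq 6\tmix$ since $\tmix \geq 1$.

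The main obstacle is purely bookkeeping: obtaining the sharp constants $5$ and $6$ requires tight handling of the geometric tail and of the ``pre-mixing'' prefix, rather than the looser $O(\tmix)$ bound one would get from a more casual analysis. A secondary but key point is the justification that the residual $(P^{\pi})^T_{s,\cdot} v^{\pi,d}$ vanishes in the limit, which in turn relies on the normalization $\inner{\mu_{\pi}}{v^{\pi,d}} = 0$ enforced by $J^{\pi,P,q^{\pi,P,d}} \equiv 0$ (as recalled right after \eqref{eq:Bellman}) rather than on any boundedness assumption on $v^{\pi,d}$ that one might want to invoke circularly.
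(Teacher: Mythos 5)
The paper does not prove this lemma at all --- it is imported verbatim as Lemma~14 of \citep{wei2020model} --- so there is no in-paper proof to compare against. Your reconstruction is the standard argument and, as far as I can tell, the same one used in the cited source: the paper itself relies on exactly your series expansion $v^{\pi,d}(s)=\sum_{t=0}^{\infty}\inner{(P^{\pi})^t_{s,\cdot}-\mu_{\pi}}{d^{\pi}}$ in its proof of \pref{lem:stab}, and your handling of the normalization $\inner{\mu_{\pi}}{v^{\pi,d}}=0$ (via $J^{\pi,P,q^{\pi,P,d}}\equiv 0$) and of the split at $t=2\tmix$ with \pref{lem:mix} is correct, including the estimate $1-2^{-1/\tmix}\geq\frac{1}{2\tmix}$. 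The only flaw is the last line: $2+5\tmix\leq 6\tmix$ requires $\tmix\geq 2$, which the definition of $\tmix$ does not guarantee. The fix is to bound the reward term and the gain term jointly rather than separately: since $d(s,a)\in[0,1]$ and $J^{\pi,d}=\inner{\mu_{\pi}}{d}\in[0,1]$, you have $|d(s,a)-J^{\pi,d}|\leq 1$, hence $|q^{\pi,d}(s,a)|\leq 1+5\tmix\leq 6\tmix$ for all $\tmix\geq 1$. With that one-line repair the proof is complete.
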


\begin{lemma}
	\label{lem:EVI span}
	Under the event of \pref{lem:Qk},
	$\max\{\sp(u_k), \sp(u'_k)\}\leq 4\tmix\thit\ceil{\log_2(4\thit)}\iota\leq \frac{H\iota}{4}$.
\end{lemma}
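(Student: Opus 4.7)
The function $u^i$ produced by the value iteration in \eqref{eq:EVI} is the optimal value of an $i$-step finite-horizon problem in the extended MDP $\tilcalM_k$: starting at some state, at each step action $a$ is sampled from $\pi_k(\cdot|s)$, any transition $P'\in\calQ_k$ may be chosen, and reward $x_k(s,a)\in[0,2\iota]$ is earned. By \pref{lem:Qk}, under the event of that lemma the \emph{true} transition $P$ lies in $\calQ_k$, so ``follow $\pi_k$ and transition according to $P$'' is a feasible (but generally sub-optimal) control in $\tilcalM_k$. Since $u_k=u^{\istar+1}-\min_{s'}u^{\istar+1}(s')$ and $u'_k=u^{\istar}-\min_{s'}u^{\istar}(s')$, we have $\sp(u_k)=\sp(u^{\istar+1})$ and $\sp(u'_k)=\sp(u^{\istar})$, so it suffices to upper bound $\sp(u^i)$ uniformly in $i$.

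\textbf{Step 1 (hitting-time bound).} I first bound the expected hitting time between any two states under $\pi_k$ and the true transition $P$. Since the true MDP is ergodic, \pref{lem:mix} gives $\nbr{(P^{\pi_k})^t_{s',\cdot}-\mu_{\pi_k}}_1\le 2\cdot 2^{-t/\tmix}$ for $t\ge 2\tmix$. Setting $t_0=\tmix\lceil\log_2(4\thit)\rceil$ and using $\mu_{\pi_k}(s)\ge 1/\thit$, the probability of being at $s$ at time $t_0$ starting from any $s'$ is at least $1/\thit-1/(2\thit)=1/(2\thit)$. Iterating over disjoint blocks of length $t_0$, the expected hitting time $\tau$ of $s$ from $s'$ satisfies $\E[\tau]\le T_0:=2\thit t_0=2\tmix\thit\lceil\log_2(4\thit)\rceil$.

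\textbf{Step 2 (span bound via a sub-optimal strategy).} Fix any $i\ge 0$ and states $s,s'$ with $u^i(s)\ge u^i(s')$. Consider the strategy from $s'$ that uses $\pi_k$ with transition $P\in\calQ_k$ until the first hit of $s$, then switches to the optimizer of the remaining-horizon problem at $s$. Non-negativity of $x_k$ and monotonicity of $u^i$ in $i$ (with $u^i(s)-u^{i-t}(s)\le 2t\iota$ for $t\le i$) together give
\begin{align*}
u^i(s') \;\ge\; \E\!\left[u^{i-\tau}(s)\ig[\{\tau\le i\}]\right] \;\ge\; u^i(s)(1-\Pr[\tau>i]) - 2\iota\,\E[\tau].
\end{align*}
Using $u^i(s)\le 2\iota\cdot i$ and Markov's inequality $\Pr[\tau>i]\le\E[\tau]/i$, the contribution of the first negative term is absorbed into $O(\iota\,\E[\tau])$, yielding $u^i(s)-u^i(s')\lesssim \iota\,T_0=O(\tmix\thit\log(\thit)\iota)$, up to constants matching the statement $4\tmix\thit\lceil\log_2(4\thit)\rceil\iota$.

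\textbf{Step 3 (comparison to $H\iota/4$).} It remains to check $4\tmix\thit\lceil\log_2(4\thit)\rceil\le H/4$. Since $H=\lceil 16\tmix\thit(\log_2 T)^2\rceil$ and $\thit\le T$, we have $\lceil\log_2(4\thit)\rceil\le(\log_2 T)^2$ for $T$ sufficiently large (which is guaranteed by the standing assumption $T\ge 30A\max\{\tmix,\thit\}$), and the inequality follows. The main nuisance in this plan is the careful handling of the event $\{\tau>i\}$ in small-$i$ iterations---one wants a bound that is uniform in $i$, which is why I combine the trivial bound $u^i(s)\le 2\iota i$ with Markov's inequality rather than needing $i\ge T_0$ separately.
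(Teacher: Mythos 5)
Your proof follows essentially the same route as the paper's: under the event of \pref{lem:Qk} the true transition $P$ is a feasible control in the extended MDP $\tilcalM_k$, so $u^i(s')$ is lower bounded by the value of the strategy that follows $\pi_k$ under $P$ until hitting $s$ and then acts optimally, and the expected hitting time is at most $2\tmix\thit\lceil\log_2(4\thit)\rceil$ by the same mixing-time argument the paper uses. The one difference is your treatment of the event $\{\tau>i\}$: the paper instead uses the pointwise bound $u^i(s)-u^{i-\min\{i,\tau\}}(s)\leq 2\iota\min\{i,\tau\}$ and takes expectations to get $2\iota\,\E[\tau]\leq 4\tmix\thit\lceil\log_2(4\thit)\rceil\iota$ exactly, whereas your split via Markov's inequality adds a second $2\iota\,\E[\tau]$ term and yields $8\tmix\thit\lceil\log_2(4\thit)\rceil\iota$ --- a factor of $2$ looser than the lemma's stated constant (the downstream comparison to $H\iota/4$ still has enough slack, but the literal bound in the statement is not recovered by your version).
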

\begin{proof}
	For a fixed $k$, it suffices to show that for any two states $s,s'$ and $H\geq1$, $u^H(s) - u^H(s') = \tilo{(\lambda/\eta)\thit\tmix}$, where $u^i$ defined in \pref{eq:EVI} is the optimal value function of taking $H$ steps in $\tilcalM_k$.
	Without loss of generality, assume $u^H(s) \geq u^H(s')$.
	Define random variable $\tau$ as the number of steps it takes to transits from state $s'$ to $s$.
	Then by \pref{lem:Qk}, $u^H(s') \geq \E_{\tau}[u^{H-\min\{H,\tau\}}(s)|\pi_k, P]$ (the right hand side is a lower bound of the expected reward of a history-dependent policy in $\tilcalM_k$ which follows $P$ at first and then switches to $P_k$ when reaching $s$, and it is dominated by $u_H(s')$ by the Markov property).
	Thus by $x_k(s,a)\leq2\iota$,
	\begin{align*}
		u^H(s) - u^H(s') &\leq u^H(s) - \E_{\tau}[u^{H-\min\{H,\tau\}}(s)|\pi_k, P] = \E_{\tau}[u^H(s) - u^{H-\min\{H,\tau\}}(s)|\pi_k, P]\\ 
		&\leq 2\E_{\tau}[\tau|\pi_k, P]\iota \leq 4\tmix\thit\ceil{\log_2(4\thit)}\iota.
	\end{align*}
	For the last inequality above, note that when $t=\tmix\ceil{\log_2(4\thit)}$, we have $\norm{(P^{\pi_k})^t_{s,\cdot} - \mu_{\pi_k}}_{\infty} \leq \frac{1}{2\thit}$ for any $s\in\calS$ by \pref{lem:mix}.
	Therefore, $(P^{\pi_k})^t_{s, s'} \geq \frac{1}{2}\mu_{\pi_k}(s')\geq\frac{1}{2\thit}$ for any $s, s'\in\calS$.
	This implies that we can reach any state at least once by taking $2t\cdot\thit$ steps in expectation, that is, $\E_{\tau}[\tau|\pi_k,P]\leq 2t\cdot\thit$.
	The second inequality in the statement follows directly from the definition of $H$.
\end{proof}

\begin{lemma}
	\label{lem:Q span}
	Under the event of \pref{lem:Qk},
	$|\hatbeta_k(s, a) + u_k(s, a)|\leq2H\iota$ for all $(s, a)\in\SA$.
\end{lemma}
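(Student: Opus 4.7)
The plan is to bound $|\hatbeta_k(s,a)|$ and $|u_k(s,a)|$ separately and add them up. The first bound is essentially by inspection, and the second uses the span bound in \pref{lem:EVI span} together with the one-step approximate Bellman equation coming from EVI.

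First I would bound $|\hatbeta_k(s,a)|$. By definition $\hatbeta_k(s,a) = r(s,a) - \frac{\lambda_k}{\eta}c(s,a) + \P_{k,s,a}\hatV_k$, and I already know $\|\hatV_k\|_\infty \leq \frac{2\lambda N}{\eta}$, so
\[
|\hatbeta_k(s,a)| \;\leq\; 1 + \frac{\lambda}{\eta} + \frac{2\lambda N}{\eta}.
\]
Plugging in $\lambda/\eta = 40/(\tau-c^0)$ and the definition $\iota = \frac{2\lambda N}{\eta}\sqrt{S\ln(2SAT/\delta)}$, one gets $\frac{2\lambda N}{\eta}\leq\iota$, and $\iota\geq 80N/(\tau-c^0)$, so the remaining terms are absorbed into $\iota$ itself, giving $|\hatbeta_k(s,a)|\leq 2\iota$ (with a bit of room).

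Second, I would bound $|u_k(s,a)|$. Rewriting the definition,
\[
u_k(s,a) = x_k(s,a) + P_{k,s,a}u'_k + \bigl(\min_{s'} u^{\istar}(s') - \min_{s'} u^{\istar+1}(s')\bigr),
\]
since $u'_k = u^{\istar} - \min u^{\istar}$. The stopping criterion for EVI together with \pref{eq:uJ eps} yields
\[
\bigl|\min_{s'} u^{\istar+1}(s') - \min_{s'} u^{\istar}(s') - J^{\pi_k,P_k,x_k}\bigr| \leq \epsevi,
\]
and $0 \leq J^{\pi_k,P_k,x_k} \leq \max_{s,a}x_k(s,a) \leq 2\iota$. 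Combined with $x_k(s,a)\leq 2\iota$, $0\leq P_{k,s,a}u'_k \leq \sp(u'_k)$, and \pref{lem:EVI span} giving $\sp(u'_k)\leq H\iota/4$, this yields $|u_k(s,a)| \leq H\iota/4 + O(\iota)$, which is at most $H\iota$ for $H$ sufficiently large (our choice $H = \lceil 16\tmix\thit(\log_2 T)^2\rceil$ is more than enough).

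Putting the two pieces together yields $|\hatbeta_k(s,a) + u_k(s,a)| \leq 2\iota + H\iota \leq 2H\iota$, as desired. The only non-mechanical step is the second one: one must be careful that $u_k(s,a)$ is not itself bounded by $\sp(u_k)$ (since the per-action quantity can escape that range by an additive term involving $x_k$ and the shift $\min u^{\istar+1}-\min u^{\istar}$), but both of those are $O(\iota)$ thanks to the EVI stopping criterion and the explicit bound $x_k\leq 2\iota$, so no real obstacle arises.
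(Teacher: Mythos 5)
Your proof is correct and follows essentially the same route as the paper's: bound $\hatbeta_k$ by inspection and bound $u_k$ by writing it as $x_k$ plus a transition-weighted $u^{\istar}$ minus a reference value, with \pref{lem:EVI span} supplying the dominant $H\iota/4$ term. The only cosmetic difference is in how the normalization shift is controlled --- the paper expands $u^{\istar+1}(\sstar)$ via the EVI recursion and bounds $(P_{k,s,a}-(P^{\pi_k}_k)_{\sstar,\cdot})u^{\istar}$ by $\sp(u^{\istar})$, whereas you invoke \pref{eq:uJ eps} to bound $\min_{s'}u^{\istar+1}(s')-\min_{s'}u^{\istar}(s')$ by $J^{\pi_k,P_k,x_k}+\epsevi\leq 2\iota+\epsevi$; both yield the same $H\iota$ bound on $|u_k(s,a)|$.
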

\begin{proof}
	Note that $\hatbeta_k(s, a)\leq \frac{2\lambda}{\eta}(N+1) \leq H\iota$.
	Define $\sstar=\argmin_su^{\istar+1}(s)$. 
	By \pref{lem:EVI span},
	\begin{align*}
		u_k(s, a) &= x_k(s, a) + P_{k,s,a}u^{\istar} - u^{\istar+1}(\sstar) = x_k(s, a) - \sum_{a'}\pi_k(a'|\sstar)x_k(\sstar, a') + (P_{k,s,a} - (P^{\pi_k}_k)_{\sstar,\cdot})u^{\istar}\\
		&\leq 2\iota + \sp(u^{\istar}) \leq H\iota.
	\end{align*} 
	This completes the proof.
\end{proof}

\begin{lemma}\citep[Lemma 15]{wei2020model}
\label{lem:val diff}
	For any two policies $\pi,\pi'$ and utility function $d$, 
	$$J^{\pi,d}-J^{\pi',d}=\sumsa\mu_{\pi}(s)(\pi(a|s)-\pi'(a|s))q^{\pi',d}(s, a).$$
\end{lemma}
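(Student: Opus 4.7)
The plan is to invoke the Bellman equation~\eqref{eq:Bellman} for policy $\pi'$ to rewrite $d(s,a)$ in terms of the bias and value functions, and then use the stationarity of $\mu_\pi$ under $\pi$ to telescope. This is the standard average-reward analogue of the performance difference lemma.

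First, I would use the fact that in an ergodic MDP, $J^{\pi,d} = \inner{\mu_\pi}{d} = \sum_{s,a}\mu_\pi(s)\pi(a|s)d(s,a)$. By the Bellman equation~\eqref{eq:Bellman} applied to $\pi'$, we have
\[
d(s,a) = q^{\pi',d}(s,a) + J^{\pi',d} - P_{s,a}v^{\pi',d},
\]
for every $(s,a)\in\SA$. Substituting this into the expression for $J^{\pi,d}$ and using $\sum_a\pi(a|s)=1$ together with $\sum_s \mu_\pi(s)=1$ to handle the constant $J^{\pi',d}$ term, I obtain
\[
J^{\pi,d} - J^{\pi',d} = \sum_{s,a}\mu_\pi(s)\pi(a|s)q^{\pi',d}(s,a) - \sum_{s,a}\mu_\pi(s)\pi(a|s)P_{s,a}v^{\pi',d}.
\]

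Next, I would apply the defining property of the stationary distribution, namely $\sum_{s,a}\mu_\pi(s)\pi(a|s)P_{s,a}(s') = \mu_\pi(s')$ for all $s'$, so that the second term becomes
\[
\sum_{s'}\mu_\pi(s')v^{\pi',d}(s') = \sum_{s',a}\mu_\pi(s')\pi'(a|s')q^{\pi',d}(s',a),
\]
using the definition $v^{\pi',d}(s') = \sum_a \pi'(a|s')q^{\pi',d}(s',a)$. Subtracting this from the first term collapses to the claimed identity $\sum_{s,a}\mu_\pi(s)(\pi(a|s)-\pi'(a|s))q^{\pi',d}(s,a)$.

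There is no real obstacle here; the only subtle point is that the proof implicitly assumes the bias/value functions and stationary distribution are well-defined and satisfy the stated identities, which is guaranteed because the MDP is ergodic (so $\mu_\pi$ exists and is the unique invariant measure under $P^\pi$) and the Bellman equation holds by~\citep[Theorem 8.2.6]{puterman1994markov}. Since $d$ is arbitrary and only linearity is used, no boundedness assumption on $d$ is required.
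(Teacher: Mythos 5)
Your proof is correct. The paper does not prove this lemma itself but imports it directly from \citep[Lemma 15]{wei2020model}, and your argument --- substituting the Bellman equation \eqref{eq:Bellman} for $\pi'$ into $J^{\pi,d}=\sum_{s,a}\mu_\pi(s)\pi(a|s)d(s,a)$ and then using stationarity $\sum_{s,a}\mu_\pi(s)\pi(a|s)P_{s,a}(s')=\mu_\pi(s')$ to convert the $P_{s,a}v^{\pi',d}$ term into $\sum_{s,a}\mu_\pi(s)\pi'(a|s)q^{\pi',d}(s,a)$ --- is exactly the standard derivation of this average-reward performance difference identity.
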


\begin{lemma}
	\label{lem:sum mu}
	With probability at least $1-\delta$, for any $\calI\subseteq[K]$, $\sum_{k\in\calI}\sumsa\frac{\mu_{\pi_k}(s, a)}{\sqrt{\Np_k(s, a)}}=8\sqrt{SA|\calI|/H} + 20SA\ln\frac{4T}{\delta}$.
\end{lemma}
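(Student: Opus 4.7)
The plan is to reduce the deterministic sum involving the stationary probabilities $\mu_{\pi_k}(s,a)$ to a random sum involving the within-episode visit counts $m_k(s,a):=\sum_{h=1}^{H}\Ind\{s^k_h=s,a^k_h=a\}$, which satisfy $\sum_{s,a}m_k(s,a)=H$ and $m_k(s,a)\le H$. The combination of Cauchy--Schwarz, mixing-time concentration, and the standard pigeonhole bound on $\sum_k m_k/\Np_k$ should then recover a bound of the claimed shape.

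First, I would apply Cauchy--Schwarz in the form
\[
\sum_{k\in\calI,s,a}\frac{\mu_{\pi_k}(s,a)}{\sqrt{\Np_k(s,a)}}\le \sqrt{\sum_{k\in\calI,s,a}\mu_{\pi_k}(s,a)}\cdot\sqrt{\sum_{k\in\calI,s,a}\frac{\mu_{\pi_k}(s,a)}{\Np_k(s,a)}}=\sqrt{|\calI|}\cdot\sqrt{\sum_{k\in\calI,s,a}\frac{\mu_{\pi_k}(s,a)}{\Np_k(s,a)}},
\]
using $\sum_{s,a}\mu_{\pi_k}(s,a)=1$. This already produces the $\sqrt{|\calI|}$ factor; what remains is to bound the inner sum by $\tilO{SA/H}$ with high probability.

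Second, I would pass from $\mu_{\pi_k}(s,a)$ to $m_k(s,a)/H$. The mixing bound (\pref{lem:mix}) gives $|H\mu_{\pi_k}(s,a)-\E_k[m_k(s,a)]|\le 2\tmix$, so since $H=\tilO{\tmix\thit}$ the $\tmix/H$ correction is only logarithmic. Then, for each $(s,a)$, Freedman's inequality applied to the martingale difference sequence $(m_k(s,a)-\E_k[m_k(s,a)])/\Np_k(s,a)$ (with per-step bound $H/\Np_k(s,a)\le H$ and predictable variance controlled using $m_k(s,a)\le H$), combined with a union bound over $\SA$, yields
\[
\sum_{k\in\calI,s,a}\frac{\mu_{\pi_k}(s,a)}{\Np_k(s,a)}\;\lesssim\;\frac{1}{H}\sum_{k\in\calI,s,a}\frac{m_k(s,a)}{\Np_k(s,a)} + \text{(mixing + Freedman error)}.
\]
Applying the standard pigeonhole inequality $\sum_{k=1}^{K}m_k(s,a)/\Np_k(s,a)\le 1+\ln\Np_{K+1}(s,a)=O(\ln T)$ for each $(s,a)$ and summing over $\SA$ shows the main term is at most $O(SA\ln T/H)$, giving via Cauchy--Schwarz a leading contribution of order $\sqrt{SA|\calI|(\ln T)/H}$, which matches the claimed $\sqrt{SA|\calI|/H}$ up to $\tilO{\cdot}$ factors. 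The deviation and mixing-offset contributions collapse into the additive $SA\ln(T/\delta)$ slack.

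The main obstacle will be controlling the error terms cleanly so that (i) the mixing offset $\tmix$ is absorbed entirely into the additive $SA\ln(T/\delta)$ remainder rather than propagated into the leading $\sqrt{SA|\calI|/H}$ term, and (ii) the log factors from the pigeonhole step either absorb into the explicit constants $8$ and $20$ of the statement (via a peeling/bracketing argument over the scale of $\Np_k(s,a)$) or are tolerated as constant slack. The $\sqrt{|\calI|}$ dependence (rather than the easier $\sqrt{K}$) is what forces the Cauchy--Schwarz route above, since the naive pigeonhole $\sum_{k}m_k(s,a)/\sqrt{\Np_k(s,a)}\le 2\sqrt{N_{K+1}(s,a)}$ is oblivious to which subset of episodes is being summed.
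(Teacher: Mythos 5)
Your overall plan (convert $\mu_{\pi_k}$ to realized visit counts via mixing plus concentration, then a count-based pigeonhole) uses the right ingredients, but the \emph{order} in which you apply Cauchy--Schwarz creates a genuine gap. Applying it over all triples $(k,s,a)$ at the outset forces you to bound the inner sum $\sum_{k\in\calI}\sum_{s,a}\mu_{\pi_k}(s,a)/\Np_k(s,a)$ by $\tilO{SA/H}$, and that is impossible: the $k=1$ term alone equals $\sum_{s,a}\mu_{\pi_1}(s,a)/\Np_1(s,a)=\sum_{s,a}\mu_{\pi_1}(s,a)=1$, and more generally each $(s,a)$ contributes an $O(1)$ amount from its low-count episodes. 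So your chain of inequalities bottoms out at something of order at least $\sqrt{|\calI|}$, which exceeds $8\sqrt{SA|\calI|/H}+20SA\ln\frac{4T}{\delta}$ whenever $H\gg SA$ (recall $H=\Theta(\tmix\thit\log^2 T)$ can be much larger than $SA$) and $|\calI|\gg (SA\ln\frac{T}{\delta})^2$ --- exactly the regime the lemma is used in ($\calI=[K]$). The loss relative to the claim is a $\sqrt{H/(SA)}$ factor, which would propagate a spurious $\sqrt{\tmix\thit}$ into \pref{thm:ergodic}. Two further slips compound this: the pigeonhole $\sum_k m_k(s,a)/\Np_k(s,a)\le 1+\ln\Np_{K+1}(s,a)$ is false with the \emph{pre-episode} count in the denominator (a single episode can contribute $m_k/\Np_k=H$); the $O(\ln T)$ bound needs $\Np_{k+1}$, and the cost of that shift is precisely the kind of additive correction that your up-front Cauchy--Schwarz then multiplies by $\sqrt{|\calI|}$. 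And your stated reason for rejecting the direct route --- that $\sum_k n_k(s,a)/\sqrt{\Np_{k+1}(s,a)}\le 2\sqrt{\sum_k n_k(s,a)}$ is ``oblivious to the subset'' --- is a misconception: since $\Np_{k+1}(s,a)\ge\sum_{k'\in\calI,\,k'\le k}n_{k'}(s,a)$, the pigeonhole restricted to $\calI$ gives $2\sqrt{\sum_{k\in\calI}n_k(s,a)}$, which does carry the subset dependence.

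The paper's proof keeps every correction additive and postpones Cauchy--Schwarz to the very end, applying it only over the $SA$ pairs. Concretely: \pref{lem:mix} and \pref{lem:e2r} replace $\mu_{\pi_k}(s,a)$ by $\frac{2}{H-N}n_k(s,a)$ plus an additive $O(SA\ln\frac{4T}{\delta})$ error (with a union bound over the starting index of $\calI$); the shift from $\Np_k$ to $\Np_{k+1}$ costs the additive telescoping term $2\sum_{s,a}\sum_{k}\bigl(\frac{1}{\sqrt{\Np_k(s,a)}}-\frac{1}{\sqrt{\Np_{k+1}(s,a)}}\bigr)\le 2SA$ (using $n_k(s,a)\le H-N$); then the per-pair pigeonhole and a Cauchy--Schwarz over $(s,a)$ alone give $\sum_{s,a}2\sqrt{\sum_{k\in\calI}n_k(s,a)}\le 2\sqrt{SA(H-N)|\calI|}$, which after dividing by $H-N\ge H/2$ yields the leading term $8\sqrt{SA|\calI|/H}$ with no extra $\sqrt{\ln T}$. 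All the error terms land in the additive $20SA\ln\frac{4T}{\delta}$, never inside a $\sqrt{|\calI|\times(\cdot)}$ coupling. If you restructure your argument this way, your mixing and concentration steps go through essentially unchanged.
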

\begin{proof}
	Define $n_k(s, a)=\sum_{h=N+1}^H\Ind\{s^k_h=s, a^k_h=a\}$.
	Note that $\E_k[n_k(s, a)] = \sum_{h=N}^{H-1}(P^{\pi_k})^h_{s^k_1, s}\pi_k(a|s)$.
	Moreover, by \pref{lem:mix}, for any state $s'$ and $h\geq N$: $\mu_{\pi_k}(s, a) - (P^{\pi_k})^h_{s', s}\pi_k(a|s) \leq 1/T^2$.
	Therefore,
	\begin{align*}
		\sum_{k\in\calI}\sumsa\frac{\mu_{\pi_k}(s, a)}{\sqrt{\Np_k(s, a)}} &\leq \sum_{k\in\calI}\sumsa\frac{1}{T^2} + \frac{1}{H-N}\sum_{k\in\calI}\sumsa\sum_{h=N}^{H-1}\frac{(P^{\pi_k})^h_{s^k_1, s}\pi_k(a|s)}{\sqrt{\Np_k(s, a)}}\\
		&\leq SA/(HT) + \frac{2}{H-N}\sum_{k\in\calI}\sumsa\frac{n_k(s, a)}{\sqrt{\Np_k(s, a)}} + 16\ln\frac{4T}{\delta} \tag{\pref{lem:e2r} with a union bound over $T$ possible values of $\calI_{[1]}$}\\
		&\leq \frac{2}{H-N}\sum_{k\in\calI}\sumsa\frac{n_k(s, a)}{\sqrt{\Np_{k+1}(s, a)}} +  2\sum_{k\in\calI}\sumsa\rbr{\frac{1}{\sqrt{\Np_k(s, a)}}-\frac{1}{\sqrt{\Np_{k+1}(s, a)}}} + 17SA\ln\frac{4T}{\delta}\\
		&\leq \frac{8}{H}\sqrt{SAH|\calI|} + 20SA\ln\frac{4T}{\delta} = 8\sqrt{SA|\calI|/H} + 20SA\ln\frac{4T}{\delta},
	\end{align*}
	where the last inequality is by $\sum_{k\in\calI}\frac{n_k(s,a)}{\sqrt{\Np_{k+1}(s, a)}}\leq 2\sqrt{\sum_{k\in\calI}n_k(s,a)}$, Cauchy Schwarz inequality, and $H-N\geq\frac{H}{2}$.
\end{proof}

\begin{lemma}
	\label{lem:stab episode}
	For any utility function $d\in[0, 1]^{\SA}$, we have $\abr{\sumk\sum_{h=1}^H d(s^k_h, a^k_h) - J^{\pi_k, d}}\leq 12\tmix\sqrt{2T\ln\frac{4T^3}{\delta}} + 33N^2\sqrt{K\ln T} + \tilO{\tmix}\leq \lambda + \tilO{\tmix}$ with probability at least $1-2\delta$.
\end{lemma}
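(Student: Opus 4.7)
The plan is to rewrite each summand via the Bellman equation for $\pi_k$ and $d$, which turns the sum into three pieces: two sums of bounded martingale differences (handled by Azuma's inequality) and one telescoping sum whose growth rate is controlled by the policy-stability bound already established in \pref{lem:stab}. This is precisely why the estimator $\hatbeta_k$ was engineered to have a controlled scale.

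Step 1 (Bellman decomposition). Using \pref{eq:Bellman}, $d(s^k_h, a^k_h) - J^{\pi_k,d} = q^{\pi_k,d}(s^k_h, a^k_h) - P^k_h v^{\pi_k,d}$, where $P^k_h = P_{s^k_h, a^k_h}$. Add and subtract $v^{\pi_k,d}(s^k_h)$ and $v^{\pi_k,d}(s^k_{h+1})$ to obtain the three terms $X^k_h := q^{\pi_k,d}(s^k_h, a^k_h) - v^{\pi_k,d}(s^k_h)$, $Y^k_h := v^{\pi_k,d}(s^k_h) - v^{\pi_k,d}(s^k_{h+1})$, and $Z^k_h := v^{\pi_k,d}(s^k_{h+1}) - P^k_h v^{\pi_k,d}$.

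Step 2 (martingale parts). Since $a^k_h \sim \pi_k(\cdot \mid s^k_h)$, the conditional mean of $q^{\pi_k,d}(s^k_h, a^k_h)$ given the history through $s^k_h$ is $v^{\pi_k,d}(s^k_h)$, so $(X^k_h)$ is a martingale difference sequence; similarly $\E[v^{\pi_k,d}(s^k_{h+1}) \mid s^k_h, a^k_h] = P^k_h v^{\pi_k,d}$, so $(Z^k_h)$ is as well. By \pref{lem:bound v} each term is bounded by $6\tmix$ in magnitude, so Azuma's inequality together with a union bound over the two sequences gives
\[
\abr{\sumk\sumh (X^k_h + Z^k_h)} \leq 12\tmix\sqrt{2T\ln\tfrac{4T^3}{\delta}}
\]
with probability $1 - 2\delta$, matching the first term of the stated bound.

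Step 3 (telescoping the $Y^k_h$ piece). Within episode $k$, $\sumh Y^k_h = v^{\pi_k,d}(s^k_1) - v^{\pi_k,d}(s^k_{H+1})$, and since $s^k_{H+1} = s^{k+1}_1$, summing over $k$ and rearranging yields
\[
\sumk\sumh Y^k_h = v^{\pi_1,d}(s^1_1) - v^{\pi_K,d}(s^{K+1}_1) + \sum_{k=2}^{K}\bigl(v^{\pi_k,d}(s^k_1) - v^{\pi_{k-1},d}(s^k_1)\bigr).
\]
The boundary terms are $\tilO{\tmix}$ by \pref{lem:bound v}. For the main sum, \pref{lem:stab} bounds each summand by $65\theta HN^2\iota$, so the total is at most $65K\theta HN^2\iota$; plugging in the choice $\theta \leq \sqrt{\ln T/(4KH^2\iota^2)}$ from \pref{eq:parameters} gives $33 N^2\sqrt{K\ln T}$, matching the second term.

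Step 4 (collapse to $\lambda + \tilO{\tmix}$). Combining Steps 2 and 3 gives the first inequality. The second inequality follows by inspecting the parameter choice \pref{eq:parameters}: $\lambda = 40\eta/(\tau-c^0)$ with $\eta$ containing both an $\tilO{N\sqrt{S^2AT}}$ and an $\tilO{N\sqrt{SHT}}$ term, which together dominate $\tmix\sqrt{T}$ and $N^2\sqrt{K\ln T}$ respectively (using $N = \Theta(\tmix)$ and $H = \Theta(\tmix\thit)$), leaving only the $\tilO{\tmix}$ boundary remainder outside $\lambda$. The main conceptual obstacle is Step 3: verifying that the per-episode telescoping really reduces to pairwise policy differences across consecutive episodes whose drift is summable — this is exactly the place where the new bias estimator with bounded scale $\tilO{\tmix}$ (as opposed to importance-weighted estimators) is indispensable, since without it \pref{lem:stab} would not provide the required per-episode stability.
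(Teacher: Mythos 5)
Your proposal is correct and follows essentially the same route as the paper's proof: the identical Bellman decomposition into two martingale-difference sums (bounded via \pref{lem:bound v} and Azuma's inequality) plus a telescoping term that reduces to consecutive-episode differences $v^{\pi_k,d}(s^k_1)-v^{\pi_{k-1},d}(s^k_1)$, each controlled by \pref{lem:stab}, with the same constants $12\tmix\sqrt{2T\ln\frac{4T^3}{\delta}}$ and $65K\theta HN^2\iota\leq 33N^2\sqrt{K\ln T}$. The final step ($\leq\lambda+\tilO{\tmix}$) is also handled as in the paper, where it is simply read off from the definition of $\lambda$ in \pref{eq:parameters}.
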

\begin{proof}
	With probability at least $1-2\delta$,
	\begin{align*}
		&\abr{\sumk\sum_{h=1}^H (d(s^k_h, a^k_h) - J^{\pi_k,d})}= \abr{\sumk\sum_{h=1}^H\rbr{q^{\pi_k,d}(s^k_h, a^k_h) - P^k_hv^{\pi_k,d}}} \tag{\pref{eq:Bellman}}\\
		&=\abr{\sumk\sum_{h=1}^H\rbr{q^{\pi_k,d}(s^k_h, a^k_h) - v^{\pi_k,d}(s^k_h)} + \sumk(v^{\pi_k,d}(s^k_1) - v^{\pi_k,d}(s^k_{H+1})) + \sumk\sum_{h=1}^H(v^{\pi_k,d}(s^k_{h+1}) - P^k_hv^{\pi_k,d})} \tag{$\sumh(v^{\pi_k,d}(s^k_h)-v^{\pi_k,d}(s^k_{h+1}))=v^{\pi_k,d}(s^k_1)-v^{\pi_k,d}(s^k_{H+1})$}\\
		&\leq \abr{\sumk\sum_{h=1}^H\rbr{q^{\pi_k,d}(s^k_h, a^k_h) - v^{\pi_k,d}(s^k_h)} + \sum_{k=2}^K(v^{\pi_k,d}(s^k_1) - v^{\pi_{k-1},d}(s^k_1)) + \sumk\sum_{h=1}^H(v^{\pi_k,d}(s^k_{h+1}) - P^k_hv^{\pi_k,d})} + \tilO{\tmix} \tag{$s^k_{H+1}=s^{k+1}_1$ and \pref{lem:bound v}}\\
		&\leq 12\tmix\sqrt{2T\ln\frac{4T^3}{\delta}} + 65\theta HN^2K\iota + \tilO{\tmix} \leq 12\tmix\sqrt{2T\ln\frac{4T^3}{\delta}} + 33N^2\sqrt{K\ln T} + \tilO{\tmix}. \tag{\pref{lem:stab}, \pref{lem:bound v}, and \pref{lem:azuma}}
	\end{align*}
	The second inequality directly follows from the definition of $\lambda$.
\end{proof}

\begin{lemma}
	\label{lem:bound lambda}
	With probability at least $1-6\delta$, $\lambda_k<\lambda$ for any $k$, that is, the upper bound truncation of $\lambda_k$ is never triggered.
\end{lemma}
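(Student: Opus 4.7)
The plan is to argue by contradiction, combining the interval regret guarantee of \pref{lem:po} with a telescoping identity on the dual recursion. Suppose the upper truncation is triggered at some first episode $k^\star$, meaning $\lambda_{k^\star}+\hatJ_{k^\star}+\epsilon-\tau>\lambda$; by minimality of $k^\star$ and $\hatJ_{k^\star},\epsilon,\tau\in[0,1]$ this forces $\lambda_{k^\star}\in(\lambda-2,\lambda]$. Since $\lambda_1=0$ and $\lambda\geq 40$, I set $k_1=\max\{k\leq k^\star:\lambda_k\leq\lambda/2\}$ and look at the interval $\calI=\{k_1,\dots,k^\star-1\}$ of length $L=k^\star-k_1$. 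On $\calI$ both the $\min\{\lambda,\cdot\}$ cap (by minimality of $k^\star$) and the $\max\{0,\cdot\}$ floor (because $\lambda_{k_1+1}>\lambda/2>0$ by the definition of $k_1$, and for $k>k_1$ we have $\lambda_k>\lambda/2>\tau$) in the dual update are inactive, so the recursion collapses to $\lambda_{k+1}=\lambda_k+\hatJ_k+\epsilon-\tau$; telescoping then gives $\sum_{k\in\calI}(\hatJ_k+\epsilon-\tau)=\lambda_{k^\star}-\lambda_{k_1}\geq\lambda/2-2$. A separate check rules out $L=1$: in that case $\lambda/2-2\leq|\lambda_{k^\star}-\lambda_{k_1}|\leq 2$, contradicting $\lambda\geq 40$. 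So we may assume $L\geq 2$ below.

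Next I invoke \pref{lem:po} with comparator $\piref=\pi^0$ on $\calI$ and use $|J^{\pi^0,r}-J^{\pi_k,r}|\leq 1$ together with $\lambda=40\eta/(\tau-c^0)$ to get
\[
\sum_{k\in\calI}\lambda_k\bigl(J^{\pi_k,c}-J^{\pi^0,c}\bigr)\leq\frac{\eta\lambda}{4(\tau-c^0)}+\eta L=\frac{\lambda^2}{160}+\eta L.
\]
For a matching lower bound I substitute $\hatJ_k=\lambda_{k+1}-\lambda_k+\tau-\epsilon$ (valid on $\calI$) into the identity
\[
\sum_{k\in\calI}\lambda_k(\lambda_{k+1}-\lambda_k)=\tfrac12(\lambda_{k^\star}^2-\lambda_{k_1}^2)-\tfrac12\sum_{k\in\calI}(\lambda_{k+1}-\lambda_k)^2\geq\tfrac{3\lambda^2}{8}-2\lambda-2L,
\]
using $|\lambda_{k+1}-\lambda_k|\leq 2$. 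I then invoke \pref{lem:hatJ} together with Azuma's inequality (union-bounded over the at most $K^2$ candidate intervals) to replace $\hatJ_k$ by $J^{\pi_k,c}$ at the cost of an additive $\tilO{\lambda\sqrt{L\log(K/\delta)}}$ slack, and exploit $\sum_{k\in\calI}\lambda_k\geq(L-1)\lambda/2$, $J^{\pi^0,c}\leq c^0$, $\tau-\epsilon-c^0\geq(\tau-c^0)/2$, and $(\tau-c^0)\lambda=40\eta$ to arrive at
\[
\sum_{k\in\calI}\lambda_k\bigl(J^{\pi_k,c}-J^{\pi^0,c}\bigr)\geq\tfrac{3\lambda^2}{8}+10\eta(L-1)-O(\lambda+L)-\tilO{\lambda\sqrt{L\log(K/\delta)}}.
\]

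Matching the two bounds cancels the $\eta L$ terms into a nonnegative $+9\eta L$ on the left and yields $\tfrac{59\lambda^2}{160}\leq O(\lambda+L+\eta)+\tilO{\lambda\sqrt{L\log(K/\delta)}}$; viewed as a quadratic in $\lambda$ this forces $\lambda=\tilO{\sqrt{K\log(K/\delta)}}$. This contradicts $\lambda=40\eta/(\tau-c^0)$, because the $\sqrt{HT}$ contribution to $\eta$ already makes $\eta/(\tau-c^0)\gtrsim\tmix\sqrt{SHT}$, which under the assumption $T\geq 30A\max\{\tmix,\thit\}$ far exceeds $\sqrt{K\log(K/\delta)}$. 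The hard part will be step two, namely certifying that both clips are simultaneously inactive on the entire interval $\calI$ so that the telescoping identity is exact, and then arranging constants so that the quadratic $\lambda^2$ on the lower side provably dominates the linear slack on the upper side for the prescribed parameters; the dominant constant $3/8$ in the telescoping lower bound versus $1/160$ in the OMD upper bound is what provides the margin. The stated failure probability $6\delta$ will be the $4\delta$ budget of \pref{lem:po} together with two further $\delta$ budgets for the Azuma-type bounds used in the concentration step.
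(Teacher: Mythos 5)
Your proof is correct and follows essentially the same route as the paper's: both arguments track $\lambda_k$ back to the last episode where it was below a fixed fraction of $\lambda$, exploit that the dual update is unclipped on that interval, use the identity $2\lambda_i(\lambda_{i+1}-\lambda_i)=\lambda_{i+1}^2-\lambda_i^2-(\lambda_{i+1}-\lambda_i)^2$, relate $\hatJ_i$ to $J^{\pi_i,c}$ by concentration, and invoke the interval-regret guarantee of \pref{lem:po} with comparator $\pi^0$ so that the Slater margin $\tau-\epsilon-c^0\geq(\tau-c^0)/2$ supplies the negative drift. The only differences are presentational (contradiction with two-sided bounds on $\sum_k\lambda_k(J^{\pi_k,c}-J^{\pi^0,c})$ versus the paper's induction that bounds $\lambda_k^2$ directly and divides by $\lambda_k$), and your constants check out.
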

\begin{proof}
	We prove this by induction on $k$.
	The base case $k=1$ is clearly true.
	For $k>1$, if $\lambda_k\leq \frac{2(\eta+1)}{\tau-c^0}$, the statement is proved.
	Otherwise, let $j=\max\{j' < k: \lambda_{j'}\leq\frac{2(\eta+1)}{\tau-c^0}, \lambda_{j'+1}>\frac{2(\eta+1)}{\tau-c^0}\}$.
	We have:
	\begin{align*}
		\lambda_k^2 = \lambda_j^2 + \sum_{i=j}^{k-1}(\lambda_{i+1}^2 - \lambda_i^2) \leq \rbr{\frac{2(\eta+1)}{\tau-c^0}}^2 + \sum_{i=j}^{k-1}\rbr{2\lambda_i(\lambda_{i+1} - \lambda_i) + (\lambda_{i+1} - \lambda_i)^2}.
	\end{align*}
	Note that $\lambda_i>0$ for $j< i \leq k$.
	Therefore, with probability at least $1-6\delta$,
	\begin{align*}
		&\sum_{i=j}^{k-1}\rbr{2\lambda_i(\lambda_{i+1} - \lambda_i) + (\lambda_{i+1} - \lambda_i)^2} \leq \sum_{i=j}^{k-1}\rbr{2\lambda_i(\hatJ_i + \epsilon - \tau) + 1} \tag{definition of $\lambda_i$ and $|\lambda_{i+1}-\lambda_i|\leq 1$}\\
		&\leq \sum_{i=j}^{k-1}\rbr{2\lambda_i(2\E_i[\hatJ_i] + \epsilon - \tau) + 1 } + 32\lambda\ln\frac{4T}{\delta} \leq \sum_{i=j}^{k-1}\rbr{4\lambda_i(J^{\pi_i, c} + \epsilon - \tau) + 1} + 33\lambda\ln\frac{4T}{\delta} \tag{\pref{lem:e2r} with a union bound over $T$ possible values of $j$, $\lambda_i\leq\lambda$ by definition, and \pref{lem:hatJ}}\\
		&\leq 4\sum_{i=j}^{k-1}\rbr{\lambda_i(J^{\pi_i, c} + \epsilon - \tau) + 1} +  33\lambda\ln\frac{4T}{\delta} = 4\sum_{i=j}^{k-1}\rbr{\eta J^{\pi_i, r} - \eta J^{\pi_i, r - \frac{\lambda_i}{\eta}c} - \lambda_i(\tau-\epsilon) + 1} + 33\lambda\ln\frac{4T}{\delta}\\
		&\leq 4\sum_{i=j}^{k-1}\rbr{\eta J^{\pi_i, r} - \eta J^{\pi^0, r - \frac{\lambda_i}{\eta}c} - \lambda_i(\tau-\epsilon) + 1} + 33\lambda\ln\frac{4T}{\delta} + \frac{\eta\lambda}{\tau-c^0} \tag{\pref{lem:po}}\\ 
		&\leq 4\sum_{i=j}^{k-1}\rbr{\eta - \frac{\tau-c^0}{2}\lambda_i + 1} + 33\lambda\ln\frac{4T}{\delta} + \frac{\eta\lambda}{\tau-c^0} \tag{$J^{\pi_i,r} - J^{\pi^0,r} \leq 1$, $J^{\pi^0,c}=c^0$, and $\epsilon\leq \frac{\tau-c^0}{2}$}\\ 
		&\leq 4(\eta+1) + 33\lambda\ln\frac{4T}{\delta} + \frac{\eta\lambda}{\tau-c^0}. \tag{$\lambda_i > \frac{2(\eta+1)}{\tau-c^0}$ for $j<i\leq k$}
	\end{align*}
	Then by $\frac{\lambda}{4}>\frac{4(\eta+1)}{\tau-c^0}$ and $\eta\geq 132(\tau-c^0)\ln\frac{4T}{\delta}$, we have $\lambda_k = \lambda_k^2/\lambda_k \leq \frac{2(\eta+1)}{\tau-c^0} + 2 + \frac{\lambda}{8} + \frac{\lambda}{2}  < \lambda$.
\end{proof}

\section{Omitted Details for \pref{sec:weak}}



As a standard practice, we first show that the true transition lies in the transition confidence sets with high probability, and provide some key lemmas related to transition estimation.
\begin{lemma}
	\label{lem:conf}
	With probability at least $1-\delta$, $\tilP\in\calP_k$,$\forall k$.
\end{lemma}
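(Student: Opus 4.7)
The plan is to reduce the statement to a standard empirical Bernstein concentration for $P_{s,a}(s')$. Since $\tilP_{s,a,h}=P_{s,a}$ does not depend on $h$, the homogeneity of $\tilP$ means I only need to show, for every $(s,a,s')$ and every episode $k$, that $|P_{s,a}(s')-\P_{k,s,a}(s')|\leq 4\sqrt{\P_{k,s,a}(s')\alpha_k(s,a)}+28\alpha_k(s,a)$; then the bound on $\tilP$ follows uniformly over $h$.

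First I would fix $(s,a,s')$ and view the visits to $(s,a)$ across time as generating a martingale difference sequence $X_t=\Ind\{s_{t+1}=s'\}-P_{s,a}(s')$, conditioned on visiting $(s,a)$ at step $t$. Applying a Bernstein/Freedman-type inequality with a union bound over $n^+_k(s,a)\in[T]$ (to handle the adaptive stopping induced by $n^+_k(s,a)$) yields, with probability at least $1-\delta/(S^2AT)$,
\begin{equation*}
|P_{s,a}(s')-\P_{k,s,a}(s')|\leq \sqrt{\frac{2P_{s,a}(s')(1-P_{s,a}(s'))\iota'}{n^+_k(s,a)}}+\frac{2\iota'}{3n^+_k(s,a)},
\end{equation*}
where $\iota'=\ln(2SAT/\delta)$. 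A union bound over $(s,a,s')\in\SA\times\calS$ upgrades this to uniformly over all such triples with probability at least $1-\delta$.

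Next I would convert the true-variance term $\sqrt{P_{s,a}(s')}$ to its empirical counterpart using the elementary inequality $\sqrt{x}\leq\sqrt{y}+\sqrt{|x-y|}$ for $x,y\geq 0$, applied to $x=P_{s,a}(s')$ and $y=\P_{k,s,a}(s')$. Substituting yields an inequality of the form $|\Delta|\leq c_1\sqrt{\P_{k,s,a}(s')\alpha_k(s,a)}+c_2\sqrt{\alpha_k(s,a)|\Delta|}+c_3\alpha_k(s,a)$, where $\Delta=P_{s,a}(s')-\P_{k,s,a}(s')$. This is a quadratic in $\sqrt{|\Delta|}$, and applying AM-GM (for instance $c_2\sqrt{\alpha_k|\Delta|}\leq\tfrac{1}{2}|\Delta|+\tfrac{c_2^2}{2}\alpha_k$) and rearranging absorbs the $\sqrt{|\Delta|}$ cross-term into the $|\Delta|$ and $\alpha_k$ terms, yielding the stated constants $4$ and $28$ (with comfortable slack).

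The main obstacle, such as it is, lies in step one: justifying the concentration despite the data-dependent stopping of $n^+_k(s,a)$. The cleanest route is Freedman's inequality applied to the stopped martingale together with a peeling or union bound over doubling intervals of $n^+_k(s,a)\in[T]$; this only costs logarithmic factors already absorbed in $\iota'$. The algebraic conversion from true to empirical variance is routine but must be done carefully to obtain the exact constants $4$ and $28$ appearing in the definition of $\calP_k$.
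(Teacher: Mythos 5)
Your proposal is correct and reaches the same bound, but via a slightly different concentration route than the paper. The paper's proof is a two-line application of its Lemma~\ref{lem:bernstein}, an \emph{anytime empirical Bernstein} inequality for i.i.d.\ bounded variables (from Cohen et al.): applied to the indicator samples $\Ind\{s'\}$ drawn from $P_{s,a}$, it directly produces a deviation bound with the \emph{empirical} mean $\P_{k,s,a}(s')$ under the square root, uniformly over all sample counts $n\geq 1$, so both the adaptive stopping and the true-to-empirical variance conversion are absorbed into the cited lemma; the constants $4$ and $28$ come straight from its $2$ and $7$ after the union bound inflates $\iota'$ by a factor of $2$. You instead invoke a true-variance Freedman/Bernstein bound, handle the adaptive stopping by an explicit union bound (or peeling) over $n\in[T]$, and then convert $\sqrt{P_{s,a}(s')}$ to $\sqrt{\P_{k,s,a}(s')}$ by hand via $\sqrt{x}\leq\sqrt{y}+\sqrt{|x-y|}$ and a quadratic absorption of the cross term. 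Both routes are standard and your constants do check out with slack (e.g., $|\Delta|\leq\sqrt{2\P\alpha_k}+\sqrt{2\alpha_k|\Delta|}+\tfrac{2}{3}\alpha_k$ gives $|\Delta|\leq 2\sqrt{2\P\alpha_k}+\tfrac{10}{3}\alpha_k\leq 4\sqrt{\P\alpha_k}+28\alpha_k$); your version is more self-contained but more laborious, while the paper's buys brevity by outsourcing exactly the two steps you do manually. The only nit is bookkeeping: your per-triple failure probability $\delta/(S^2AT)$ should be allocated per triple \emph{and} per value of $n$, after which the union over the $S^2A$ triples and $T$ counts lands at $\delta$; as written the accounting is slightly loose but harmless, and the corresponding enlargement of the log factor is covered by your slack in the constants.
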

\begin{proof}
	For any $(s, a)\in\SA, s'\in\calS$, by \pref{lem:bernstein} and $N_{K+1}(s, a)\leq T$, we have with probability at least $1-\frac{\delta}{S^2A}$,
	\begin{align*}
		\abr{P_{s, a}(s') - \P_{k, s, a}(s')} \leq 4\sqrt{\P_{k, s, a}(s')\alpha_k(s, a)} + 28\alpha_k(s, a).
	\end{align*}
	By a union bound over all $(s, a)\in\SA$, $s'\in\calS$ and $\tilP_{s, a, h}=P_{s,a}$, the statement is proved.
\end{proof}

\begin{lemma}
	\label{lem:conf eps}
	Under the event of \pref{lem:conf}, $\abr{P'_{s, a, h}(s') - P_{s, a}(s')} \leq 8\sqrt{P_{s, a}(s')\alpha_k(s, a)} + 136\alpha_k(s, a) \triangleq \epsilon^{\star}_k(s, a, s')$ for any $P'\in\calP_k$.
\end{lemma}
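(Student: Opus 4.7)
}
The statement is a standard empirical-to-true Bernstein conversion. The plan is to first apply the triangle inequality to replace $P'$ by $\tilP$ through the empirical estimate $\P_k$, and then pay a small additional additive term to switch the $\sqrt{\P_{k,s,a}(s')}$ factor into $\sqrt{P_{s,a}(s')}$.

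First, since $P'\in\calP_k$ and, under the event of \pref{lem:conf}, $\tilP\in\calP_k$ as well, applying the defining inequality of $\calP_k$ twice and the triangle inequality gives
\begin{equation*}
|P'_{s,a,h}(s') - P_{s,a}(s')| \le 8\sqrt{\P_{k,s,a}(s')\,\alpha_k(s,a)} + 56\,\alpha_k(s,a).
\end{equation*}
It remains to replace $\P_{k,s,a}(s')$ by $P_{s,a}(s')$ inside the square root. Writing $p=\P_{k,s,a}(s')$, $q=P_{s,a}(s')$, $\alpha=\alpha_k(s,a)$, the same defining inequality for $\tilP\in\calP_k$ yields $p \le q + 4\sqrt{p\alpha} + 28\alpha$. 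Viewing this as a quadratic in $\sqrt{p}$ and solving gives
\begin{equation*}
\sqrt{p} \le 2\sqrt{\alpha} + \sqrt{q + 32\alpha} \le \sqrt{q} + (2 + 4\sqrt{2})\sqrt{\alpha},
\end{equation*}
using $\sqrt{a+b}\le\sqrt{a}+\sqrt{b}$. Multiplying by $\sqrt{\alpha}$ produces $\sqrt{p\alpha}\le\sqrt{q\alpha} + (2+4\sqrt{2})\alpha$, so $8\sqrt{p\alpha} \le 8\sqrt{q\alpha} + 8(2+4\sqrt{2})\alpha \le 8\sqrt{q\alpha} + 80\alpha$.

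Plugging this back into the first display gives
\begin{equation*}
|P'_{s,a,h}(s') - P_{s,a}(s')| \le 8\sqrt{P_{s,a}(s')\,\alpha_k(s,a)} + 136\,\alpha_k(s,a),
\end{equation*}
which matches the claimed $\eps^\star_k(s,a,s')$ up to the absorbed constant, completing the proof.

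There is no real obstacle here; the only mildly nontrivial step is the self-bounding inequality for $\sqrt{p}$, which is the standard trick used whenever one wants to restate an empirical-variance confidence bound in terms of the true distribution. All constants are loose, which explains the gap between the tight $8(2+4\sqrt{2})+56 \approx 117$ and the stated $136$.
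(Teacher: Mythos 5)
Your proof is correct and follows essentially the same route as the paper's: both combine the triangle inequality through $\P_k$ with the self-bounding quadratic trick to replace $\sqrt{\P_{k,s,a}(s')}$ by $\sqrt{P_{s,a}(s')}$, differing only in the order of the two steps and in the (slightly sharper) form of the quadratic bound you use. The constants all check out and reproduce the stated $8\sqrt{P_{s,a}(s')\alpha_k(s,a)}+136\alpha_k(s,a)$.
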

\begin{proof}
	By $\tilP\in\calP_k$, we have for all $(s, a)\in\SA$, and $s'\in\calS$:
	\begin{align*}
		\P_{k, s, a}(s') \leq P_{s, a}(s') + 4\sqrt{\P_{k, s, a}(s')\alpha_k(s, a)} + 28\alpha_k(s, a).
	\end{align*}
	Applying $x^2\leq a x+b\implies x\leq a+\sqrt{b}$ with $a=4\sqrt{\alpha_k(s, a)}$ and $b=P_{s, a}(s')+28\alpha_k(s, a)$, we have
	$$\sqrt{\P_{k, s, a}(s')} \leq 4\sqrt{\alpha_k(s, a)}+\sqrt{P_{s, a}(s')+28\alpha_k(s, a)} \leq \sqrt{P_{s, a}(s')} + 10\sqrt{\alpha_k(s, a)}.$$
	Substituting this back to right-hand side of the inequality in \pref{eq:conf}, we have 
	$$4\sqrt{\P_{k,s,a}(s')\alpha_k(s, a) } + 28\alpha_k(s, a)\leq 4\sqrt{P_{s, a}(s')\alpha_k(s, a)} + 68\alpha_k(s, a).$$
	By $\tilP, P'\in\calP_k$, \pref{eq:conf}, and the triangle inequality $|P'_{s, a, h}(s') - P_{s, a}(s')|\leq |P'_{s, a, h}(s') - \P_{k,s,a}(s')|+|\P_{k,s,a}(s')-P_{s,a}(s')|$, the statement is proved.
\end{proof}

\begin{lemma}
	\label{lem:opt}
	Under the event of \pref{lem:conf}, \pref{alg:weak} ensuers $\nu_{\tiloptpi,\tilP,s^k_1}$ lies in the domain of \pref{eq:OPT1}.
\end{lemma}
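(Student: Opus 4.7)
The plan is to verify the three defining conditions of the feasible set in~\eqref{eq:OPT1}, namely (i) $\nu \defeq \nu_{\tiloptpi,\tilP,s^k_1}$ is a valid occupancy measure anchored at $s^k_1$, (ii) its extracted transition $P_\nu$ lies in the confidence set $\calP_k$, and (iii) the expected cumulative cost satisfies $\inner{\nu}{c}\le H\tau + \sp^\star_c$. Item (i) is immediate from the definition of $\calV_{s^k_1}$ given in the preliminaries (the three flow/initialization/mass conditions), since $\nu$ is literally constructed as the occupancy measure of a policy run under a transition from a fixed start state.

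For (ii), the idea is that the occupancy measure $\nu$ is induced by the true transition $\tilP$, so on every $(s,a,h)$ with $\nu(s,a,h)>0$ the extracted transition agrees with $\tilP$, i.e.\ $P_{\nu,s,a,h}(\cdot)=\tilP_{s,a,h}(\cdot)=P_{s,a}(\cdot)$. On entries that are never visited, the extraction formula $P_{\nu,s,a,h}(s'')=\nu(s,a,h,s'')/\nu(s,a,h)$ is ambiguous, but we are free to set it equal to $P_{s,a}$ without affecting $\nu$ itself. With that convention, $P_\nu=\tilP$ pointwise, and \pref{lem:conf} gives $\tilP\in\calP_k$ under the stated high-probability event, so $P_\nu\in\calP_k$ as required.

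For (iii), observe that by the definition of the occupancy measure,
\begin{align*}
\inner{\nu}{c}
&=\sum_{s,a,h}\nu(s,a,h)\,c(s,a)
=\E\!\left[\sum_{h=1}^{H}c(s_h,a_h)\ \middle|\ \tiloptpi,\tilP,s_1=s^k_1\right]
=V^{\tiloptpi,c}_1(s^k_1).
\end{align*}
Now \pref{lem:VJ}, applied with the stationary policy $\optpi$ and the utility $c$ (which yields a state-independent average $J^{\optpi,c}$ by weak communicativity / the definition of the optimal policy satisfying~\eqref{eq:obj}), gives
\[
V^{\tiloptpi,c}_1(s^k_1)\ \le\ H\,J^{\optpi,c} + \sp(v^{\optpi,c})
\ \le\ H\tau + \sp^\star_c,
\]
where the last inequality uses feasibility of $\optpi$ (so $J^{\optpi,c}\le\tau$) and the definition $\sp^\star_c=\sp(v^{\optpi,c})$. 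This is precisely the constraint in~\eqref{eq:OPT1}.

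I do not foresee a genuine obstacle: the only spot that requires any thought is making item (ii) clean, because the extraction map $\nu\mapsto P_\nu$ is only determined on visited $(s,a,h)$, so one must be explicit that we may pick the unvisited coordinates to coincide with $\tilP$ in order to get $P_\nu\in\calP_k$ globally. Everything else is a direct application of \pref{lem:conf} (for containment in $\calP_k$) and \pref{lem:VJ} (to pass from the average-reward cost constraint $J^{\optpi,c}\le\tau$ to the finite-horizon version with slack $\sp^\star_c$).
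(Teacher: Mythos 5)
Your proof is correct and follows essentially the same route as the paper's: apply \pref{lem:VJ} with $d=c$ to get $\inner{\nu_{\tiloptpi,\tilP,s^k_1}}{c}=V^{\tiloptpi,c}_1(s^k_1)\leq HJ^{\optpi,c}+\sp^\star_c\leq H\tau+\sp^\star_c$, and invoke \pref{lem:conf} for $\tilP\in\calP_k$. The only difference is that you make explicit the (harmless) convention needed for $P_\nu$ on unvisited $(s,a,h)$ entries, which the paper leaves implicit.
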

\begin{proof}
	By \pref{lem:VJ}, we have:
	\begin{align}
		\inner{\nu_{\tiloptpi,\tilP,s^k_1}}{c} &= V^{\tiloptpi, c}_1(s^k_1) \leq \sp^{\star}_c + HJ^{\optpi, c}\leq \sp^{\star}_c + H\tau.\label{eq:opt}
	\end{align}
	Then by $\tilP\in\calP_k$, the statement is proved.
\end{proof}

\begin{lemma}
	\label{lem:opt sp}
	Under the event of \pref{lem:conf}, \pref{alg:weak sp} ensures $\nu_{\tiloptpi,\tilP,s^k_1}$ lies in the domain of \pref{eq:OPT2}.
\end{lemma}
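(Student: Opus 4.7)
The plan is to verify each defining condition of the domain of~\eqref{eq:OPT2} in turn for the candidate $\nu^\star := \nu_{\tiloptpi,\tilP,s^k_1}$. By construction, $\nu^\star \in \calV_{s^k_1}$ (it is the occupancy measure of $\tiloptpi$ under $\tilP$ starting at $s^k_1$), and its induced transition satisfies $P_{\nu^\star} = \tilP$. Under the event of \pref{lem:conf} we have $\tilP \in \calP_k$, so the confidence-set membership required for $\calW_{k,s^k_1}$ holds automatically. The cost-budget constraint $\inner{\nu^\star}{c}\leq H\tau + \sp^\star_c$ follows verbatim from the argument in \pref{lem:opt} (apply \pref{lem:VJ} to $d=c$ at $h=1$ and use $J^{\optpi,c}\leq\tau$).

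The only genuinely new conditions are the two span bounds $\sp(V^{\nu^\star,r}_h)\leq 2\sp^\star_r$ and $\sp(V^{\nu^\star,c}_h)\leq 2\sp^\star_c$ for every $h\in[H]$. Here the key observation is that $V^{\nu^\star,d}_h = V^{\tiloptpi,\tilP,d}_h$ for any utility $d$, so we can invoke \pref{lem:VJ} with $d\in\{r,c\}$ and with $\pi = \optpi$ (recall $\optpi$ is stationary and $J^{\optpi,d}$ is $s$-independent). This yields, for every $s\in\calS$ and $h\in[H]$,
\[
\bigl| V^{\tiloptpi,d}_h(s) - (H-h+1)J^{\optpi,d} \bigr| \leq \sp(v^{\optpi,d}).
\]
Since the right-hand side is $s$-independent, the values $V^{\tiloptpi,d}_h(s)$ across states lie in an interval of length at most $2\sp(v^{\optpi,d})$, and hence $\sp(V^{\tiloptpi,d}_h)\leq 2\sp(v^{\optpi,d})$. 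Specializing to $d=r$ and $d=c$ gives exactly the two required span constraints.

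Putting these pieces together will show $\nu^\star\in\calW_{k,s^k_1}$ and $\inner{\nu^\star}{c}\leq H\tau+\sp^\star_c$, i.e., $\nu^\star$ is feasible for~\eqref{eq:OPT2}. There is no real obstacle here beyond making the two observations above; the proof is essentially an easy packaging of \pref{lem:conf} and \pref{lem:VJ}, and is considerably simpler than most other lemmas in the paper because the inflated span thresholds $2\sp^\star_r$ and $2\sp^\star_c$ were chosen precisely to absorb the two-sided deviation delivered by \pref{lem:VJ}.
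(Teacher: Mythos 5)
Your proposal is correct and follows essentially the same route as the paper: the cost-budget constraint is inherited from the argument for \pref{lem:opt} (the paper cites \pref{eq:opt}), and the span constraints come from applying \pref{lem:VJ} with $d\in\{r,c\}$ and observing that all values $V^{\tiloptpi,d}_h(s)$ lie within $\sp(v^{\optpi,d})$ of the common center $(H-h+1)J^{\optpi,d}$, hence within $2\sp^\star_d$ of each other (the paper phrases this as a triangle inequality between two states). No gaps.
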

\begin{proof}
	Note that \pref{eq:opt} still holds.
	Moreover, by \pref{lem:VJ}, for any two states $s, s'$ and $h\in[H]$:
	\begin{align*}
		|V^{\tiloptpi}_h(s)-V^{\tiloptpi}_h(s')| &\leq |V^{\tiloptpi}_h(s) - (H-h+1)J^{\optpi,r}| + |V^{\tiloptpi}_h(s') - (H-h+1)J^{\optpi,r}| \leq 2\sp^{\star}_r,\\
		|V^{\tiloptpi,c}_h(s)-V^{\tiloptpi,c}_h(s')| &\leq |V^{\tiloptpi,c}_h(s) - (H-h+1)J^{\optpi,c}| + |V^{\tiloptpi,c}_h(s') - (H-h+1)J^{\optpi,c}| \leq 2\sp^{\star}_c.
	\end{align*}
	Then by $\tilP\in\calP_k$, the statement is proved.
\end{proof}

\subsection{\pfref{lem:VJ}}
\begin{proof}
	For any state $s$ and $h\in[H]$, we have:
	\begin{align*}
		V^{\tilpi, d}_h(s) - (H-h+1)J^{\pi,d} &= \E\sbr{\left. \sum_{h'=h}^H(d(s_{h'}, a_{h'}) - J^{\pi,d}) \right|\tilpi, \tilP, s_h=s}\\ 
		&= \E\sbr{\left. \sum_{h'=h}^H (q^{\pi, d}(s_{h'}, a_{h'}) - P_{s_{h'}, a_{h'}}v^{\pi, d}) \right|\tilpi, \tilP, s_h=s} \tag{\pref{eq:Bellman}}\\
		&= \E\sbr{\left. \sum_{h'=h}^H (v^{\pi, d}(s_{h'}) - v^{\pi, d}(s_{h'+1})) \right|\tilpi, \tilP, s_h=s}\tag{definition of $\tilpi$ and $\tilP$}\\
		&= v^{\pi, d}(s) - \E\sbr{\left.v^{\pi,d}(s_{H+1})\right|\tilpi, \tilP, s_h=s}.
	\end{align*}
	Thus, $|V^{\tilpi, d}_h(s) - (H-h+1)J^{\pi,d}|\leq \sp(v^{\pi,d})$ and the statement is proved.
\end{proof}

\subsection{\pfref{lem:V diff}}
\begin{proof}
	We condition on the event of \pref{lem:conf}, which happens with probability at least $1-\delta$.
	Note that with probability at least $1-\delta$:
	\begin{align*}
		\abr{\sumk (V^{\pi_k, P_k, d}_1(s^k_1) - V^{\pi_k, d}_1(s^k_1)) } &= \abr{\sumk\E\sbr{\left.\sumh (P_{k, s^k_h, a^k_h, h} - P^k_h)V^{\pi_k, P_k, d}_{h+1} \right| \pi_k, P}} \tag{\pref{lem:val diff H}}\\ 
		&\leq \sumk\E\sbr{\left.\sumh \abr{(P_{k, s^k_h, a^k_h, h} - P^k_h)V^{\pi_k, P_k, d}_{h+1}} \right| \pi_k, P} \tag{Jensen's inequality}\\
		&\leq 2\sumk\sumh \abr{(P_{k, s^k_h, a^k_h, h} - P^k_h)V^{\pi_k, P_k, d}_{h+1}} + \tilO{H^2}\tag{\pref{lem:e2r}}\\
		&= \tilO{\sqrt{S^2A\sumk\sumh\fV(P^k_h, V^{\pi_k, P_k, d}_{h+1})} + H^2S^2A } \tag{\pref{lem:dPV}}.
	\end{align*}
	Then by \pref{lem:sum var} and $H=(T/S^2A)^{1/3}$, with probability at least $1-2\delta$,
	\begin{align*}
		\abr{\sumk V^{\pi_k, d}_1(s^k_1) - V^{\pi_k,P_k,d}_1(s^k_1)} = \tilO{\sqrt{S^2A(H^2K + H^3S^2A)} + H^2S^2A} = \tilO{\sqrt{S^2AH^2K} + H^2S^2A}.
	\end{align*}
	This completes the proof.
\end{proof}

\subsection{\pfref{lem:V-d}}
\begin{proof}
	Define $\bar{V}^{\pi_k, P_k, d}_h(s) = V^{\pi_k, P_k, d}_h(s) - \min_{s'}V^{\pi_k, P_k, d}_h(s')$ and $\bar{Q}^{\pi_k, P_k, d}_h(s, a) = Q^{\pi_k, P_k, d}_h(s, a) - \min_{s'}V^{\pi_k, P_k, d}_h(s')$ so that $\bar{V}^{\pi_k, P_k, d}_h(s)\in[0, B]$ and
	\begin{align*}
		&\abr{\bar{Q}^{\pi_k, P_k, d}_h(s, a)} = \abr{Q^{\pi_k, P_k, d}_h(s, a) - V^{\pi_k, P_k, d}_h(\sstar)} \tag{$\sstar=\argmin_sV^{\pi_k, P_k, d}_h(s)$}\\
		&\leq \abr{d(s, a) - \suma\pi_k(a|\sstar)d(\sstar, a)} + \abr{P_{k, s, a}V^{\pi_k, P_k, d}_{h+1} - (P_k^{\pi_k})_{s,\cdot}V^{\pi_k,P_k,d}_{h+1}} \leq B + 1.
	\end{align*}
	Also define $\Ind_{s}(s')=\Ind\{s=s'\}$.
	Then with probability at least $1-2\delta$,
	\begin{align*}
		&\sumk\rbr{V^{\pi_k, P_k, d}_1(s^k_1) - \sumh d(s^k_h, a^k_h)}\\
		&= \sumk\rbr{ V^{\pi_k, P_k, d}_1(s^k_1) - Q^{\pi_k, P_k, d}_1(s^k_1, a^k_1) + Q^{\pi_k, P_k, d}_1(s^k_1, a^k_1) - d(s^k_1, a^k_1) - \sum_{h=2}^Hd(s^k_h, a^k_h) }\\
		&= \sumk\rbr{ V^{\pi_k, P_k, d}_1(s^k_1) - Q^{\pi_k, P_k, d}_1(s^k_1, a^k_1) + (P_{k, s^k_1, a^k_1, 1}-P^k_1)V^{\pi_k, P_k, d}_2 + (P^k_1-\Ind_{s^k_2})V^{\pi_k, P_k, d}_2 }\\
		&\qquad + \sumk\rbr{V^{\pi_k, P_k, d}_2(s^k_2) - \sum_{h=2}^Hd(s^k_h, a^k_h)}\\
		&= \sumk\sumh\rbr{  V^{\pi_k, P_k, d}_h(s^k_h) - Q^{\pi_k, P_k, d}_h(s^k_h, a^k_h) + (P_{k, s^k_h, a^k_h,h}-P^k_h)V^{\pi_k, P_k, d}_{h+1} + (P^k_h-\Ind_{s^k_{h+1}})V^{\pi_k, P_k, d}_{h+1}  }\tag{repeat the decomposition above}\\
		&= \sumk\sumh\rbr{  \bar{V}^{\pi_k, P_k, d}_h(s^k_h) - \bar{Q}^{\pi_k, P_k, d}_h(s^k_h, a^k_h) + (P_{k, s^k_h, a^k_h, h}-P^k_h)\bar{V}^{\pi_k, P_k, d}_{h+1} + (P^k_h-\Ind_{s^k_{h+1}})\bar{V}^{\pi_k, P_k, d}_{h+1}  }\\
		&= \tilO{(B+1)\sqrt{T} + BS\sqrt{AT} + BHS^2A}. \tag{\pref{lem:azuma}, \pref{lem:dPV}, and $\fV(P^k_h, \bar{V}_{h+1}^{\pi_k,P_k,d})\leq B^2$}
	\end{align*}
\end{proof}

\subsection{Auxiliary Lemmas}
\begin{lemma}
	\label{lem:sum var}
	Under the event of \pref{lem:conf}, for any utility function $d\in[0, 1]^{\SA}$, with probability at least $1-2\delta$, $\sumk\sumh\fV(P^k_h, V^{\pi_k,P_k,d}_{h+1}) = \tilo{ H^2(K+\sqrt{T}) + H^3S^2A }$.
\end{lemma}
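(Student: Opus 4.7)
The plan is to compare $V^{\pi_k,P_k,d}_{h+1}$ against the finite-horizon value function $V^{\pi_k,d}_{h+1}$ under the true transition $\tilP$, and then invoke the law of total variance. Using the elementary inequality $\fV(P, V_1) \leq 2\fV(P, V_2) + 2\fV(P, V_1 - V_2)$, it suffices to bound the two sums $S_1 = \sumk\sumh \fV(P^k_h, V^{\pi_k,d}_{h+1})$ and $S_2 = \sumk\sumh \fV(P^k_h, V^{\pi_k,P_k,d}_{h+1} - V^{\pi_k,d}_{h+1})$ separately.

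For $S_1$, I would invoke the Bellman variance identity (law of total variance) applied to $\pi_k$ executed in the true MDP: conditional on $s^k_1$ and the history before episode $k$, $\E_k[\sum_h \fV(P^k_h, V^{\pi_k,d}_{h+1})]$ equals the conditional variance of $\sum_h d(s^k_h, a^k_h)\in[0,H]$ and is hence at most $H^2/4$. Summing over $k$ gives an expectation bound of order $H^2 K$. A Freedman-type inequality, using that each per-episode term is bounded by $H^3$ and has conditional second moment at most $H^3\cdot H^2 = H^5$, then controls the deviation from this expectation by $\tilO{H^2\sqrt{T}}$.

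For $S_2$, bound $\fV(P^k_h, W^k_{h+1}) \leq P^k_h (W^k_{h+1})^2$ where $W^k_h := V^{\pi_k,P_k,d}_h - V^{\pi_k,d}_h$. Crucially, $W^k_h$ is itself the value function under $(\pi_k, \tilP)$ of the one-step errors $b_k(s, a, h) = (P_{k,s,a,h} - P_{s,a})V^{\pi_k,P_k,d}_{h+1}$, so \pref{lem:val diff H} together with \pref{lem:conf eps} yields $|b_k(s, a, h)| \leq H\sum_{s'}\epsilon^\star_k(s, a, s') = \tilO{H(\sqrt{S\alpha_k(s, a)} + S\alpha_k(s, a))}$. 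Converting $P^k_h (W^k_{h+1})^2$ to a trajectory sum via Azuma, and then applying the standard potential-type estimates $\sumk\sumh \alpha_k(s^k_h, a^k_h) = \tilO{SA}$ and $\sumk\sumh\sqrt{\alpha_k(s^k_h, a^k_h)} = \tilO{\sqrt{SAT}}$, yields an $\tilO{H^3 S^2 A}$ contribution to $S_2$.

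The main technical obstacle is the $S_2$ piece: a crude $\|W^k_h\|_\infty \leq H$ bound would yield only $O(H^3 K)$, which is much weaker than the claimed $\tilO{H^3 S^2 A}$. Retaining the variance-aware Bernstein structure of \pref{lem:conf eps} inside the simulation-lemma expansion, and summing the resulting errors via a careful potential-based argument over the actual trajectory rather than a sup-norm worst case, is what makes the final $\tilO{H^3 S^2 A}$ term work out.
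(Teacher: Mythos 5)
Your $S_1$ argument is sound and genuinely different from the paper: the paper never compares against $V^{\pi_k,d}_{h+1}$, but instead expands $\fV(P^k_h,V^{\pi_k,P_k,d}_{h+1})=P^k_h(V^{\pi_k,P_k,d}_{h+1})^2-(P^k_hV^{\pi_k,P_k,d}_{h+1})^2$ into four telescoping/martingale pieces along the realized trajectory; your law-of-total-variance route produces the same $\tilO{H^2K+H^2\sqrt{T}}$ contribution more directly. The gap is in $S_2$. Write $\Sigma=\sumk\sumh\fV(P^k_h,V^{\pi_k,P_k,d}_{h+1})$ and $W^k_h=V^{\pi_k,P_k,d}_h-V^{\pi_k,d}_h$. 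The bound $|b_k(s,a,h)|\le H\sum_{s'}\epsilon^{\star}_k(s,a,s')=\tilO{H(\sqrt{S\alpha_k(s,a)}+S\alpha_k(s,a))}$ discards exactly the variance information that makes the lemma work, and once it is discarded the numbers do not close. Any route from the $b_k$'s to $S_2$ costs at least a factor of $H^2$ on top of $\sumk\sumh|b_k(s^k_h,a^k_h,h)|$: one $H$ from $\norm{W^k_{h+1}}_{\infty}\le 2H$ (or from Cauchy--Schwarz over the $H$ steps inside the square), and one $H$ from summing the $H$ overlapping tail expectations $\E[\sum_{h'>h}|b_k|]$ over $h$. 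With $\sumk\sumh\sqrt{\alpha_k(s^k_h,a^k_h)}=\tilO{\sqrt{SAT}}$, the $\sqrt{S\alpha_k}$ piece alone then contributes $H^2\cdot H\sqrt{S}\cdot\sqrt{SAT}=H^3S\sqrt{AT}$, which for $H=(T/S^2A)^{1/3}$ equals $T^{3/2}/(S\sqrt{A})$ and exceeds $H^2K=HT$ whenever $T>S^2A$. The squared variant fares no better: $\sumk\sumh b_k^2=\tilO{H^2S^3A}$ because of the $S^2\alpha_k^2$ term, giving $S_2=\tilO{H^4S^3A}=\tilO{S\cdot H^2K}$. Either way the claimed $\tilO{H^3S^2A}$ for $S_2$ does not follow from the stated ingredients.

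The repair requires keeping $\Sigma$ itself on the right-hand side: center the value before applying \pref{lem:conf eps}, i.e.\ use $|b_k(s,a,h)|=|(P_{k,s,a,h}-P_{s,a})(V^{\pi_k,P_k,d}_{h+1}-P_{s,a}V^{\pi_k,P_k,d}_{h+1})|=\tilO{\sqrt{S\alpha_k(s,a)\,\fV(P_{s,a},V^{\pi_k,P_k,d}_{h+1})}+HS\alpha_k(s,a)}$, so that $\sumk\sumh|b_k(s^k_h,a^k_h,h)|=\tilO{\sqrt{S^2A\Sigma}+HS^2A}$ --- this is precisely the paper's \pref{lem:dPV}. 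You then obtain $\Sigma\le\tilO{H^2(K+\sqrt{T})+H^2\sqrt{S^2A\Sigma}+H^3S^2A}$, which must be solved as a quadratic inequality in $\sqrt{\Sigma}$, noting that the resulting $H^4S^2A$ term is at most $H^2K$ for this choice of $H$. Your proposal gestures at ``retaining the variance-aware Bernstein structure'' but the concrete bounds you wrote do not retain it, and you never set up the self-referential inequality or its quadratic solve; without these the $S_2$ bound fails. The paper's proof reaches the same quadratic by routing its single transition-error piece, $\sumk\sumh(Q^{\pi_k,P_k,d}_h(s^k_h,a^k_h)^2-(P^k_hV^{\pi_k,P_k,d}_{h+1})^2)$, through \pref{lem:dPV}.
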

\begin{proof}
	We decompose the variance into four terms:
	\begin{align*}
		&\sumk\sumh\fV(P^k_h, V^{\pi_k, P_k, d}_{h+1}) = \sumk\sumh \rbr{P^k_h(V^{\pi_k, P_k, d}_{h+1})^2 - (P^k_hV^{\pi_k, P_k, d}_{h+1})^2}\\
		&= \sumk\sumh \rbr{P^k_h(V^{\pi_k, P_k, d}_{h+1})^2 - V^{\pi_k, P_k, d}_{h+1}(s^k_{h+1})^2} + \sumk\sumh \rbr{V^{\pi_k, P_k, d}_{h+1}(s^k_{h+1})^2 - V^{\pi_k, P_k, d}_h(s^k_h)^2}\\
		&\qquad + \sumk\sumh \rbr{V^{\pi_k, P_k, d}_h(s^k_h)^2 - Q^{\pi_k, P_k, d}_h(s^k_h, a^k_h)^2} + \sumk\sumh \rbr{Q^{\pi_k, P_k, d}_h(s^k_h, a^k_h)^2 - (P^k_hV^{\pi_k, P_k, d}_{h+1})^2}.
	\end{align*}
	For the first term, by \pref{lem:freedman}, with probability at least $1-\delta$,
	\begin{align*}
		\sumk\sumh P^k_h(V^{\pi_k, P_k, d}_{h+1})^2 - V^{\pi_k, P_k, d}_{h+1}(s^k_{h+1})^2 &= \tilO{\sqrt{\sumk\sumh \fV(P^k_h, (V^{\pi_k, P_k, d}_{h+1})^2)} + H^2}\\ 
		&= \tilO{H\sqrt{\sumk\sumh \fV(P^k_h, V^{\pi_k, P_k, d}_{h+1})} + H^2}. \tag{\pref{lem:var XY}}
	\end{align*}
	The second term is upper bounded by $0$ by $V^{\pi_k,P_k,d}_{H+1}(s)=0$ for $s\in\calS$.
	For the third term, by Cauchy-Schwarz inequality and \pref{lem:azuma}, with probability at least $1-\delta$:
	\begin{align*}
		\sumk\sumh V^{\pi_k, P_k, d}_h(s^k_h)^2 - Q^{\pi_k, P_k, d}_h(s^k_h, a^k_h)^2 &\leq \sumk\sumh \rbr{\sum_a\pi_k(a|s^k_h, h)Q^{\pi_k, P_k, d}_h(s^k_h, a)^2 -  Q^{\pi_k, P_k, d}_h(s^k_h, a^k_h)^2}\\
		&= \tilO{H^2\sqrt{T}}.
	\end{align*}
	For the fourth term, by $a^2-b^2=(a+b)(a-b)$ and $\norm{V^{\pi_k,P_k,d}_h}_{\infty},\norm{Q^{\pi_k,P_k,d}_h}_{\infty}\leq H$:
	\begin{align*}
		&\sumk\sumh Q^{\pi_k, P_k, d}_h(s^k_h, a^k_h)^2 - (P^k_hV^{\pi_k, P_k, d}_{h+1})^2 \leq 2H\sumk\sumh\abr{Q^{\pi_k, P_k, d}_h(s^k_h, a^k_h) - P^k_hV^{\pi_k, P_k, d}_{h+1}}\\
		&\leq 2H^2K + 2H\sumk\sumh\abr{(P_{k, s^k_h, a^k_h, h} - P^k_h)V^{\pi_k,P_k,d}_{h+1}} = \tilO{H^2K + H\sqrt{S^2A\sumk\sumh\fV(P^k_h, V^{\pi_k, P_k, d}_{h+1})} + H^3S^2A }. \tag{\pref{lem:dPV}}
	\end{align*}
	Putting everything together, we have
	\begin{align*}
		&\sumk\sumh\fV(P^k_h, V^{\pi_k, P_k, d}_{h+1})\\
		&= \tilO{ H\sqrt{\sumk\sumh \fV(P^k_h, V^{\pi_k, P_k, d}_{h+1})} + H^2(K+\sqrt{T}) + H\sqrt{S^2A\sumk\sumh\fV(P^k_h, V^{\pi_k, P_k, d}_{h+1}) } + H^3S^2A }.
	\end{align*}
	Solving a quadratic inequality, we get $\sumk\sumh\fV(P^k_h, V^{\pi_k, P_k, d}_{h+1}) = \tilo{ H^2(K+\sqrt{T}) + H^3S^2A }$.
\end{proof}

\begin{lemma}
	\label{lem:dPV}
	Under the event of \pref{lem:conf}, for any value function $V$ with $V_h\in[0, B]^{\calS},\forall h\in[H]$, we have:
	\begin{align*}
		\sumk\sumh\abr{(P_{k,s^k_h,a^k_h, h}-P^k_h)V_{h+1}}&=\tilO{\sqrt{S^2A\sumk\sumh\fV(P^k_h, V_{h+1})} + BHS^2A }.
	\end{align*}
\end{lemma}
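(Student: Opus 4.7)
The plan is to follow a standard variance-aware Bernstein-style decomposition: first bound $|(P_{k,s^k_h,a^k_h,h}-P^k_h)V_{h+1}|$ pointwise in terms of $\fV(P^k_h,V_{h+1})$, and then aggregate over $(k,h)$. The two key ingredients are a centering trick to produce the variance on the right-hand side, and a pigeonhole-style estimate for $\sum_{k,h}1/\Np_k(s^k_h,a^k_h)$.

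For the pointwise step, I use that $P_{k,s^k_h,a^k_h,h}-P^k_h$ annihilates the all-ones vector, so it acts on $V_{h+1}$ the same way as on the centered vector $\bar V(s'):=V_{h+1}(s')-P^k_h V_{h+1}$, which satisfies $\|\bar V\|_\infty\le B$ and $\sum_{s'}P^k_h(s')\bar V(s')^2=\fV(P^k_h,V_{h+1})$. Substituting the elementwise bound from \pref{lem:conf eps} and applying Cauchy--Schwarz on $\sum_{s'}\sqrt{P^k_h(s')}\,|\bar V(s')|\le\sqrt{S\,\fV(P^k_h,V_{h+1})}$ (and the trivial $\sum_{s'}|\bar V(s')|\le SB$) should yield the per-step estimate
\[
|(P_{k,s^k_h,a^k_h,h}-P^k_h)V_{h+1}| = \tilO{\sqrt{S\,\alpha_k(s^k_h,a^k_h)\,\fV(P^k_h,V_{h+1})} + SB\,\alpha_k(s^k_h,a^k_h)}.
\]

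To aggregate, a second Cauchy--Schwarz on the variance term gives
\[
\sum_{k,h}\sqrt{\alpha_k(s^k_h,a^k_h)\,\fV(P^k_h,V_{h+1})}\;\le\;\sqrt{\sum_{k,h}\alpha_k(s^k_h,a^k_h)}\cdot\sqrt{\sum_{k,h}\fV(P^k_h,V_{h+1})},
\]
so everything reduces to bounding $\sum_{k,h}\alpha_k(s^k_h,a^k_h)=\iota'\sum_{k,h}1/\Np_k(s^k_h,a^k_h)$. I would rewrite this as $\sum_{s,a}\sum_k n_k(s,a)/\Np_k(s,a)$, where $n_k(s,a)$ is the number of visits to $(s,a)$ during episode $k$, and then for each $(s,a)$ split the episodes into non-doubling ones (where $n_k\le \Np_k$, so $1/\Np_k\le 2/\Np_{k+1}$ and telescoping gives $O(\log T)$) and doubling ones (at most $O(\log T)$ per $(s,a)$, with $\Np_k$ growing geometrically across such events so their combined contribution forms a geometric series). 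Plugging the resulting bound back completes the estimate.

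The hard part will be the bookkeeping on $\sum_{k,h}1/\Np_k(s^k_h,a^k_h)$: because $\Np_k$ is frozen within an episode while each episode spans $H$ time-steps, a careless count loses a factor of $H$ inside the square root of the variance term, which would preclude the claimed $\tilO{\sqrt{S^2A\sum\fV}}$ leading order. Matching the stated bound requires careful handling of the doubling episodes so that their combined contribution, together with the cheap post-doubling episodes, stays at the $\tilO{SA}$ level without picking up an extra $H$.
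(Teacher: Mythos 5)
Your pointwise step is exactly the paper's: center $V_{h+1}$ to $z^k_h(s')=V_{h+1}(s')-P^k_hV_{h+1}$ (legitimate since $P_{k,s,a,h}-P^k_h$ annihilates constants), apply the elementwise bound of \pref{lem:conf eps}, and use Cauchy--Schwarz over $s'$ to produce $\sqrt{S\,\alpha_k\,\fV(P^k_h,V_{h+1})}+SB\alpha_k$ per step. The gap is in the aggregation. Your plan reduces everything to showing $\sum_{k,h}1/\Np_k(s^k_h,a^k_h)=\tilO{SA}$, and you correctly sense this is the dangerous step --- but no bookkeeping can rescue it, because the claim is simply false: in the very first episode every count is frozen at $\Np_1=1$, so that episode alone contributes $H$ to the sum, and more generally each $(s,a)$ can contribute $\Theta(H)$ the first time it is visited in a burst. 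The quantity is genuinely $\Omega(H)$, so the outer Cauchy--Schwarz as you set it up necessarily yields $\sqrt{HS^2A\sum_{k,h}\fV}$, an extra $\sqrt{H}$ over the claimed leading term. Your ``geometric series over doubling episodes'' observation only shows the doubling episodes contribute $O(H)$ per $(s,a)$, i.e.\ $O(HSA)$ total, which is precisely the loss you were hoping to avoid.

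The paper's resolution is to keep the doubling episodes out of the variance-coupled term altogether. Each per-step quantity is first truncated as $\min\cbr{B,\sum_{s'}\epsilon^\star_k(s^k_h,a^k_h,s')|z^k_h(s')|}$. On the (at most $\tilO{SA}$ many) episodes $k$ where some count at least doubles, i.e.\ $\Ind_k^c$ with $\Ind_k=\Ind\{\forall(s,a):N_{k+1}(s,a)\le 2N_k(s,a)\}$, every step is bounded by $B$, contributing only the \emph{additive} term $BH\sumk\Ind_k^c=\tilO{BHSA}\le\tilO{BHS^2A}$. On all remaining episodes $\Np_k(s,a)\ge\Np_{k+1}(s,a)/2$, so one may replace $\alpha_k$ by $2\alpha_{k+1}$; the shifted count satisfies the clean pigeonhole $\sum_{k,h}1/\Np_{k+1}(s^k_h,a^k_h)=\tilO{SA}$ with no factor of $H$, because $\Np_{k+1}$ already includes the current episode's visits (so $\sum_k n_k(s,a)/\Np_{k+1}(s,a)=\tilO{\log T}$ per pair). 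Only then is Cauchy--Schwarz applied, giving $\sqrt{S^2A\sum_{k,h}\fV(P^k_h,V_{h+1})}$. You need to add this truncate-and-shift step; without it the lemma as stated does not follow from your outline.
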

\begin{proof}
	Define $\Ind_k=\Ind\{\forall (s, a): N_{k+1}(s, a)\leq 2N_k(s, a) \}$ and $z^k_h(s')=V_{h+1}(s') - P^k_hV_{h+1}$.
	By \pref{lem:conf eps},
	\begin{align*}
		&\sumk\sumh\abr{(P_{k,s^k_h,a^k_h,h}-P^k_h)V_{h+1}} = \sumk\sumh\abr{(P_{k,s^k_h,a^k_h,h}-P^k_h)z^k_h} \leq \sumk\sumh\min\cbr{B, \sum_{s'}\epsilon^{\star}_k(s^k_h, a^k_h, s')|z^k_h(s')|}\\
		&\leq 2\sumk\sumh\min\cbr{B, \sum_{s'}\epsilon^{\star}_{k+1}(s^k_h, a^k_h, s')|z^k_h(s')|} + BH\sumk\Ind_k^c.
	\end{align*}
	Note that $\sumk\Ind_k^c=\tilo{SA}$ by definition.
	Thus it suffices to bound $\sumk\sumh\sum_{s'}\epsilon^{\star}_{k+1}(s^k_h, a^k_h, s')|z^k_h(s')|$.
	Note that:
	\begin{align*}
		&\sumk\sumh\sum_{s'}\epsilon^{\star}_{k+1}(s^k_h, a^k_h, s')|z^k_h(s')| = \tilO{\sumk\sumh\sum_{s'}\sqrt{\frac{P^k_h(s')z^k_h(s')^2}{\Np_{k+1}(s^k_h, a^k_h)}} + \sumk\sumh\frac{SB}{\Np_{k+1}(s^k_h, a^k_h)}} \tag{definition of $\epsilon_k^{\star}$}\\
		&= \tilO{ \sumk\sumh\sqrt{\frac{S\fV(P^k_h, V_{h+1})}{\Np_{k+1}(s^k_h, a^k_h)}} + BS^2A }\\
		&= \tilO{ \sqrt{\sumk\sumh\frac{S}{\Np_{k+1}(s^k_h, a^k_h)}}\sqrt{\sumk\sumh\fV(P^k_h, V_{h+1})} + BS^2A } \tag{Cauchy-Schwarz inequality}\\
		&= \tilO{\sqrt{S^2A\sumk\sumh\fV(P^k_h, V_{h+1})} + BS^2A }.
	\end{align*}
	Plugging these back completes the proof.
\end{proof}

\begin{lemma}\citep[Lemma 1]{efroni2020optimistic}
	\label{lem:val diff H}
	For any policy $\pi\in(\Delta_{\calA})^{\calS\times[H]}$, two transition functions $P, P'$, and utility function $d\in\fR^{\SA}$, we have $V^{\pi,P,d}_1(s) - V^{\pi,P',d}_1(s)=\E[\sumh (P_{s_h, a_h, h}-P'_{s_h, a_h, h})V^{\pi,P,d}_{h+1}|\pi,P',s_1=s]$.
\end{lemma}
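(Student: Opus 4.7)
The statement is a standard simulation (value-difference) lemma for finite-horizon MDPs under a fixed non-stationary policy but two different transition kernels, and my plan is to prove it by a one-step Bellman decomposition followed by an induction (or equivalently a telescoping unrolling) in $h$ from $h=1$ down to the boundary $V^{\pi,P,d}_{H+1}\equiv 0$.

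First I would write the Bellman recursion for each of the two value functions. For any $s\in\calS$ and $h\in[H]$,
\begin{align*}
V^{\pi,P,d}_h(s) &= \sum_a \pi(a\mid s,h)\bigl(d(s,a) + P_{s,a,h}V^{\pi,P,d}_{h+1}\bigr),\\
V^{\pi,P',d}_h(s) &= \sum_a \pi(a\mid s,h)\bigl(d(s,a) + P'_{s,a,h}V^{\pi,P',d}_{h+1}\bigr).
\end{align*}
Because the reward term $d(s,a)$ and the action distribution $\pi(\cdot\mid s,h)$ do not depend on the transition, subtracting these two identities and adding/subtracting the pivot $P'_{s,a,h}V^{\pi,P,d}_{h+1}$ yields, writing $\Delta_h(s)\defeq V^{\pi,P,d}_h(s)-V^{\pi,P',d}_h(s)$,
\begin{align*}
\Delta_h(s) = \sum_a \pi(a\mid s,h)(P_{s,a,h}-P'_{s,a,h})V^{\pi,P,d}_{h+1} + \sum_a \pi(a\mid s,h)\,P'_{s,a,h}\Delta_{h+1}.
\end{align*}
The first summand is exactly the one-step ``transition gap'' evaluated against the $P$-value function; the second summand is a one-step expectation of $\Delta_{h+1}$ under the policy $\pi$ and the alternative kernel $P'$.

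Next I would unroll this recursion from $h=1$ down to $h=H$, using the boundary condition $\Delta_{H+1}\equiv 0$ (since $V^{\pi,P,d}_{H+1}=V^{\pi,P',d}_{H+1}=0$). A straightforward induction on the remaining horizon length shows that iterating the recursion $H$ times turns the accumulated first summands into the desired expectation, because each successive application of the second summand corresponds to sampling one more step $(s_h,a_h)\sim (P',\pi)$ starting from $s_1=s$:
\begin{align*}
\Delta_1(s) = \E\!\left[\left.\sumh (P_{s_h,a_h,h}-P'_{s_h,a_h,h})V^{\pi,P,d}_{h+1}\;\right|\;\pi,P',s_1=s\right],
\end{align*}
which is the claimed identity.

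I do not expect any real obstacle here; this is pure bookkeeping. The only thing to be careful about is keeping $V^{\pi,P,d}_{h+1}$ (not $V^{\pi,P',d}_{h+1}$) inside the bracket on the right-hand side, which is the reason for choosing $P'_{s,a,h}V^{\pi,P,d}_{h+1}$ as the pivot in the add/subtract step, and to note that the expectation over trajectories is taken under $P'$ (not $P$), as dictated by the second summand in the recursion.
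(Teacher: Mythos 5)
Your proof is correct: the one-step Bellman decomposition with the pivot $P'_{s,a,h}V^{\pi,P,d}_{h+1}$, followed by unrolling under $(\pi,P')$ down to the boundary $V^{\pi,P,d}_{H+1}=V^{\pi,P',d}_{H+1}=0$, is exactly the standard argument, and you correctly identify the two subtle points (the bracketed value function is the $P$-one, the trajectory expectation is under $P'$). The paper itself gives no proof — it imports this as Lemma 1 of \citep{efroni2020optimistic} — so there is nothing to diverge from; your write-up would serve as a self-contained proof.
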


\begin{lemma}{\citep[Lemma 30]{chen2021implicit}}
	\label{lem:var XY}
	For a random variable $X$ such that $|X|\leq C$, we have: $\var[X^2]\leq 4C^2\var[X]$.
\end{lemma}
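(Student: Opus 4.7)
The plan is to exploit the identity $\V[Z]=\tfrac{1}{2}\,\E[(Z-Z')^2]$, where $Z'$ is an independent copy of $Z$ drawn from the same distribution. This identity is standard (it follows by expanding the square and using $\E[Z]=\E[Z']$, $\E[ZZ']=(\E[Z])^2$), and it is particularly convenient here because it lets us factor a difference of squares.

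Concretely, let $X'$ be an i.i.d.\ copy of $X$, so $|X'|\le C$ almost surely. Applying the identity to $Z=X^2$ gives
\[
\V[X^2]=\tfrac{1}{2}\,\E\!\left[(X^2-X'^2)^2\right]=\tfrac{1}{2}\,\E\!\left[(X-X')^2(X+X')^2\right].
\]
Since $|X+X'|\le 2C$ almost surely, we have $(X+X')^2\le 4C^2$ pointwise. Pulling this bound out of the expectation and applying the identity a second time (now to $Z=X$) yields
\[
\V[X^2]\le \tfrac{1}{2}\cdot 4C^2\,\E[(X-X')^2]=2C^2\cdot 2\V[X]=4C^2\,\V[X],
\]
which is the claim.

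There is no real obstacle here: the whole argument is the ``symmetrization'' trick packaged in two lines, and the constant $4$ arises exactly from the worst-case bound $(X+X')^2\le(2C)^2$. An alternative route would be to write $\V[X^2]\le \E[(X^2-c)^2]$ for any constant $c$ (variance is minimized at the mean), pick $c=C\,\E[X]$ or similar, and use $|X^2-c|\le 2C|X-c'|$ for a suitable $c'$; but the symmetrization proof above is cleaner and avoids having to choose $c$.
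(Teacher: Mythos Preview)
Your proof is correct: the symmetrization identity $\V[Z]=\tfrac{1}{2}\E[(Z-Z')^2]$ followed by the factorization $X^2-X'^2=(X-X')(X+X')$ and the pointwise bound $(X+X')^2\le 4C^2$ gives exactly the claimed inequality with no gaps.

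As for comparison with the paper: the paper does not actually prove this lemma. It is stated with a citation to \citep[Lemma 30]{chen2021implicit} and no proof is given, so there is no in-paper argument to compare against. Your symmetrization argument is a clean, self-contained way to establish the result.
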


\section{Concentration Inequalities}

\begin{lemma}[Any interval Azuma's inequality]
	\label{lem:azuma}
	Let $\{X_i\}_{i=1}^{\infty}$ be a martingale difference sequence and $|X_i|\leq B$ almost surely.
	Then with probability at least $1-\delta$, for any $l, n$: $\abr{\sum_{i=l}^{l+n-1}X_i}\leq B\sqrt{2n\ln\frac{4(l+n-1)^3}{\delta}}$.
\end{lemma}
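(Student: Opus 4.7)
The plan is to upgrade the standard (fixed-interval) Azuma–Hoeffding inequality to a statement that holds uniformly over all intervals $[l, l+n-1]$ via a union bound, with the right dependence on $l+n-1$ chosen so that the resulting series converges.

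First, I would fix an arbitrary pair $(l,n)$ and set $T = l+n-1$. Standard Azuma–Hoeffding applied to the partial sum $\sum_{i=l}^{l+n-1} X_i$, which is itself a bounded martingale difference sum (the differences $X_i$ are bounded by $B$ and have conditional mean zero), gives
\[
\Pr\!\left[\,\abr{\sum_{i=l}^{l+n-1} X_i} > \epsilon \,\right] \leq 2\exp\!\left(-\frac{\epsilon^2}{2nB^2}\right).
\]
Choosing $\epsilon = B\sqrt{2n \ln\!\big(4T^3/\delta\big)}$ makes the right-hand side exactly $\delta/(2T^3)$, which matches the target bound in the lemma.

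Next, I would take a union bound across all valid intervals. For each fixed value of $T \geq 1$, there are exactly $T$ pairs $(l,n)$ with $l \geq 1$, $n \geq 1$, and $l+n-1 = T$. The total failure probability is thus at most
\[
\sum_{T=1}^{\infty} T \cdot \frac{\delta}{2T^3} \;=\; \frac{\delta}{2}\sum_{T=1}^{\infty}\frac{1}{T^2} \;=\; \frac{\pi^2 \delta}{12} \;<\; \delta,
\]
which is the desired overall confidence.

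The only delicate point is the bookkeeping in the union bound: one needs a factor that shrinks the per-interval failure probability fast enough that summing $T$ intervals for each $T$ still converges. The cubic factor $T^3$ inside the logarithm is precisely tuned for this (a quadratic factor would already suffice, but using $T^3$ gives slack and matches the form used throughout the appendix, e.g.\ in the proof of \pref{lem:po} and \pref{lem:bound lambda}). There is no real technical obstacle here — the argument is entirely a union-bound calculation — but it is worth noting that the "any-interval" formulation is what enables the interval-regret argument in \pref{lem:bound lambda}, where the starting index $j$ of the interval is data-dependent and a fixed-interval concentration bound would be insufficient.
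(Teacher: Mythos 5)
Your proof is correct and follows essentially the same route as the paper: a fixed-interval Azuma--Hoeffding bound with per-interval failure probability $\delta/(2(l+n-1)^3)$, followed by a union bound that groups intervals by their endpoint $T=l+n-1$ (of which there are $T$ each) so the total failure probability is $\frac{\delta}{2}\sum_{T\geq 1}T^{-2}\leq\delta$. The only cosmetic difference is that the paper invokes a cited lemma for the single-interval bound rather than writing out the exponential tail, but the calculation is identical.
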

\begin{proof}
	For each $l, n\in\fN_+$, we have with probability at least $1-\frac{\delta}{2(l+n-1)^3}$, $\abr{\sum_{i=l}^{l+n-1}X_i}\leq B\sqrt{2n\ln\frac{4(l+n-1)^3}{\delta}}$ by \citep[Lemma 20]{chen2021finding}.
	The statement is then proved by a union bound (note that $\sum_{l=1}^{\infty}\sum_{n=1}^{\infty}\frac{1}{2(l+n-1)^3}=\sum_{i=1}^{\infty}\sum_{j=1}^i\frac{1}{2i^3}=\sum_{i=1}^{\infty}\frac{1}{2i^2}\leq 1$).
\end{proof}

\begin{lemma}\citep[Lemma 38]{chen2021improved}
	\label{lem:freedman}
	Let $\{X_i\}_{i=1}^{\infty}$ be a martingale difference sequence adapted to the filtration $\{\calF_i\}_{i=0}^{\infty}$ and $|X_i|\leq B$ for some $B>0$.
	Then with probability at least $1-\delta$, for all $n\geq 1$ simultaneously,
	\begin{align*}
		\abr{\sum_{i=1}^nX_i}\leq 3\sqrt{\sum_{i=1}^n\E[X_i^2|\calF_{i-1}]\ln\frac{4B^2n^3}{\delta} } + 2B\ln\frac{4B^2n^3}{\delta}.
	\end{align*}
\end{lemma}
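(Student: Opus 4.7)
The plan is to establish this anytime Freedman-type bound via Ville's inequality applied to an exponential supermartingale, combined with peeling arguments to handle both the random quadratic variation $V_n = \sum_{i=1}^n \E[X_i^2 | \calF_{i-1}]$ and the ``for all $n$'' quantification. The starting observation is standard: for $\lambda \in (0, 1/B)$, one has $\E[\exp(\lambda X_i) | \calF_{i-1}] \leq \exp\bigl((e^\lambda - 1 - \lambda)\E[X_i^2|\calF_{i-1}]\bigr)$ because $|X_i| \leq B$ and $X_i$ is mean zero. Consequently $M_n(\lambda) := \exp\bigl(\lambda S_n - (e^\lambda - 1 - \lambda) V_n\bigr)$, where $S_n := \sum_{i=1}^n X_i$, is a non-negative supermartingale started at $1$. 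Ville's maximal inequality then gives, for any fixed $\lambda$, $\Pr[\sup_{n \geq 1} M_n(\lambda) \geq 1/\delta'] \leq \delta'$.

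Next, I would run the peeling argument in two stages. First, for a fixed $\lambda$, Ville's inequality (applied to both $+X_i$ and $-X_i$) yields that with probability $\geq 1 - \delta'$, for all $n$ simultaneously, $|S_n| \leq \tfrac{1}{\lambda}\ln(2/\delta') + \tfrac{e^\lambda - 1 - \lambda}{\lambda} V_n$. To turn this into a bound of the form $3\sqrt{V_n L} + 2B L$ with $L = \ln(\cdot)$, I need to optimize $\lambda$ against the random $V_n$, which I do by discretizing: apply the above with $\lambda_j = \min(1/B, 2^{-j}/B)$ for $j = 0, 1, \ldots, \lceil \log_2 n\rceil$ and failure probability $\delta'_j$, then on the event that $V_n$ lies in the bin matched to $\lambda_j$, tune $\lambda_j$ to trade off the two terms. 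Since $V_n \leq nB^2$ trivially, $O(\log n)$ bins suffice, and using $\lambda \leq 1/B$ one has $e^\lambda - 1 - \lambda \leq \lambda^2$, which is what produces the $\sqrt{V_n L}$ term after optimization. Finally, to obtain the quantification over all $n$, I would set $\delta'_{n,j} = \delta / (c \cdot n^2 \log(B n))$ or similar, so that $\sum_{n,j} \delta'_{n,j} \leq \delta$, and absorb the resulting polynomial factors into the logarithm, producing $\ln(4B^2 n^3/\delta)$.

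The main obstacle is the peeling over the unknown $V_n$: naively plugging in the worst case $V_n = nB^2$ gives only $\sqrt{n}$-type bounds, losing the variance-adaptive nature of Freedman's inequality, so one must partition the possible range of $V_n$ into geometric bins and pay the corresponding $O(\log(nB))$ factor in the union bound while still arriving at the clean constants $3$ and $2$ stated. The anytime quantification over $n$ is then a separate but straightforward union bound with $\delta_n \propto \delta/n^2$, contributing the $n^3$ inside the logarithm (together with the $B^2$ from the $V_n$-peeling). Since the result is quoted from \citep[Lemma 38]{chen2021improved}, I would otherwise just invoke it, but the proof sketch above is the standard route.
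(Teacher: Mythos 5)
The paper offers no proof of this lemma: it is imported verbatim from the cited reference, so there is no internal argument to compare against. Your sketch is the standard (and correct) route to such anytime Freedman-type bounds --- Bennett-type MGF control, the exponential supermartingale $M_n(\lambda)$, Ville's maximal inequality, geometric peeling over the unknown quadratic variation $V_n$, and a union bound over $n$ to absorb the anytime quantification into the $\ln(4B^2n^3/\delta)$ factor --- and it matches how results of this form are proved in the literature, including in the cited source.

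One normalization slip is worth fixing: the bound $\E[\exp(\lambda X_i)\mid\calF_{i-1}]\leq \exp\bigl((e^{\lambda}-1-\lambda)\,\E[X_i^2\mid\calF_{i-1}]\bigr)$ is only valid for $|X_i|\leq 1$; for a general bound $B>1$ it can fail (the extremal two-point distribution with a small atom at $B$ already violates it). The correct Bennett exponent is $\tfrac{e^{\lambda B}-\lambda B-1}{B^2}\,\E[X_i^2\mid\calF_{i-1}]$, or equivalently one should first rescale $X_i\mapsto X_i/B$. This does not disturb the rest of your argument --- the condition $e^{\lambda B}-\lambda B-1\leq(\lambda B)^2$ for $\lambda\leq 1/B$ plays exactly the role you assign to $e^{\lambda}-1-\lambda\leq\lambda^2$, and the tuned bound still comes out as $O\bigl(\sqrt{V_nL}+BL\bigr)$ --- but as written the first display of your sketch is false for $B>1$. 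With that repair the sketch is sound, though pinning down the specific constants $3$ and $2$ requires carrying the peeling constants through explicitly rather than asserting them.
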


\begin{lemma}\citep{weissman2003inequalities}
	\label{lem:weiss}
	Given a distribution $p\in\Delta_m$ and let $\bar{p}$ be an empirical distribution of $p$ over $n$ samples.
	Then, $\norm{p-\bar{p}}_1\leq\sqrt{m\ln\frac{2}{\delta}/n}$ with probability at least $1-\delta$.
\end{lemma}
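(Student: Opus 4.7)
The plan is to prove this classical concentration bound by reducing the $\ell_1$ norm to a supremum over subsets and then applying a Chernoff/Hoeffding tail bound with a union bound over $2^m$ events. Specifically, I would start by recalling the variational characterization of total variation: for any two distributions $p, \bar p \in \Delta_m$,
\begin{equation*}
\tfrac{1}{2}\norm{p - \bar p}_1 = \max_{A \subseteq [m]} \rbr{p(A) - \bar p(A)},
\end{equation*}
which comes from pairing each coordinate with the sign of $p_i - \bar p_i$ and taking $A$ to be the set where $p_i \geq \bar p_i$.

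Next, I would fix a subset $A \subseteq [m]$. Since $\bar p$ is the empirical distribution from $n$ i.i.d.\ samples $X_1,\ldots,X_n \sim p$, the quantity $\bar p(A) = \frac{1}{n}\sum_{i=1}^n \Ind\{X_i \in A\}$ is an average of $n$ i.i.d.\ $\bernoulli(p(A))$ variables with mean $p(A)$. Hoeffding's inequality then yields, for every $t > 0$,
\begin{equation*}
\Pr\sbr{p(A) - \bar p(A) > t} \leq \exp\rbr{-2 n t^2}.
\end{equation*}

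The final step is a union bound over all $2^m$ subsets $A \subseteq [m]$, which gives
\begin{equation*}
\Pr\sbr{\max_{A \subseteq [m]} \rbr{p(A) - \bar p(A)} > t} \leq 2^m \exp(-2 n t^2).
\end{equation*}
Setting the right-hand side equal to $\delta$ and solving gives $t = \sqrt{(m \ln 2 + \ln(1/\delta))/(2n)}$, so with probability at least $1 - \delta$,
\begin{equation*}
\norm{p - \bar p}_1 \leq 2 t = \sqrt{2(m \ln 2 + \ln(1/\delta))/n},
\end{equation*}
which is of the claimed form $\sqrt{m\ln(2/\delta)/n}$ up to absolute constants.

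The main obstacle is sharpening the constant: the crude union bound above loses a factor of $\sqrt{2}$ and produces an additive $m \ln 2$ term rather than the multiplicative $m \ln(2/\delta)$ stated in the lemma. To recover the tighter form, I would instead appeal to the refined Chernoff-style argument from \citet{weissman2003inequalities}, which controls $\Pr[\norm{p - \bar p}_1 \geq \epsilon]$ directly by $(2^m - 2)\exp(-n \epsilon^2 \phi(\pi^\star)/2)$ using the Bernoulli KL function's monotonicity and a careful worst-case over $p$; simplifying the prefactor and using $\phi \geq 2$ then yields precisely the stated bound. Everything else is standard.
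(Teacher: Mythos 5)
The paper offers no proof of this lemma---it is imported verbatim as a citation of \citet{weissman2003inequalities}---so there is no in-paper argument to compare against; your proposal should be judged against the cited source. Your argument (the variational identity $\tfrac12\norm{p-\bar p}_1=\max_{A}(p(A)-\bar p(A))$, Hoeffding for each fixed $A$, and a union bound over the $2^m$ subsets) is correct and is in fact essentially the same mechanism as Weissman et al.'s Theorem~2.1: their refinement only shaves the prefactor from $2^m$ to $2^m-2$ and expresses the exponent as $n\epsilon^2\phi(\pi_p)/4$ with $\phi\geq 2$, which in the worst case is exactly the $e^{-n\epsilon^2/2}$ your Hoeffding computation already delivers. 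So the ``main obstacle'' you describe at the end is not really resolved by appealing to the refined argument---the exponent does not improve, and your bound $\sqrt{2(m\ln 2+\ln(1/\delta))/n}$ is what the cited theorem genuinely gives. The stated form $\sqrt{m\ln(2/\delta)/n}$ is obtained from yours only by the algebraic relaxation $2\bigl(m\ln 2+\ln(1/\delta)\bigr)\leq m\ln(2/\delta)$, which requires $(m-2)\ln(1/\delta)\geq m\ln 2$, i.e.\ roughly $\delta\leq 1/2$ (and fails for $\delta$ near $1$); in the small-$\delta$ regime in which the paper invokes the lemma this is harmless. In short: your proof is correct, it matches the essential content of the cited result, and the only gap is a constant-factor bookkeeping step, not a missing idea.
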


\begin{lemma}\citep[Theorem D.3]{cohen2020near}
	\label{lem:bernstein}
	Let $\{X_n\}_{n=1}^{\infty}$ be a sequence of i.i.d random variables with expectation $\mu$ and $X_n\in[0, B]$ almost surely.
	Then with probability at least $1-\delta$, for any $n\geq 1$:
	\begin{align*}
		\abr{\sum_{i=1}^n(X_i-\mu)} \leq \min\cbr{2\sqrt{B\mu n\ln\frac{2n}{\delta}} + B\ln\frac{2n}{\delta}, 2\sqrt{B\sum_{i=1}^nX_i\ln\frac{2n}{\delta}} + 7B\ln\frac{2n}{\delta}}.
	\end{align*}
\end{lemma}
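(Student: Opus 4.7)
The plan is to reduce this anytime empirical Bernstein inequality to a standard anytime Bernstein bound followed by a self-bounding argument that replaces the true mean $\mu$ by the empirical sum $\sum_i X_i$.

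First, I would establish an anytime ``oracle'' Bernstein tail bound. Apply a maximal/stopping-time version of Bernstein's inequality (for example, via Freedman's inequality in the spirit of \pref{lem:freedman}, or via a union bound over $n$ with $\delta_n = \delta/(2n^2)$) to show that, with probability at least $1-\delta$, simultaneously for every $n \geq 1$,
\[
\bigl|\textstyle\sum_{i=1}^n (X_i - \mu)\bigr| \;\leq\; \sqrt{2 n \V(X_1)\ln\tfrac{4n^2}{\delta}} + B\ln\tfrac{4n^2}{\delta}.
\]
Because $X_1 \in [0,B]$, we have $\V(X_1) \leq \E[X_1^2] \leq B\mu$. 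Plugging this in and absorbing $\ln(4n^2/\delta) \leq 2\ln(2n/\delta) + O(1)$ into the leading constants yields the first bound inside the $\min$: $2\sqrt{B\mu n\ln(2n/\delta)} + B\ln(2n/\delta)$.

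Second, for the empirical form I would perform a self-bounding step. Rearranging the oracle bound gives $n\mu \leq \sum_i X_i + 2\sqrt{Bn\mu\ln(2n/\delta)} + B\ln(2n/\delta)$. Viewing this as a quadratic inequality in $x = \sqrt{n\mu}$, namely $x^2 \leq \sum_i X_i + B\ln(2n/\delta) + 2\sqrt{B\ln(2n/\delta)}\,x$, and solving for $x$ via $x^2 \leq ax + b \Rightarrow x \leq a + \sqrt{b}$ (the same trick used in \pref{lem:conf eps}) yields $n\mu \leq 2\sum_i X_i + c B\ln(2n/\delta)$ for some small constant $c$. Substituting this back into $2\sqrt{Bn\mu\ln(2n/\delta)}$, using $\sqrt{u+v}\leq \sqrt{u}+\sqrt{v}$, and collecting constants gives the second bound $2\sqrt{B\sum_i X_i \ln(2n/\delta)} + 7B\ln(2n/\delta)$. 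Taking the minimum of the two bounds completes the argument.

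The main obstacle is bookkeeping the constants well enough that the $B\ln(2n/\delta)$ term in the empirical form can be made as small as $7B\ln(2n/\delta)$: both the union-bound inflation from $\delta/(2n^2)$ to $\delta/(2n)$ and the self-bounding step introduce constant factors, and these have to be carefully telescoped so that no hidden polynomial-in-$n$ or polynomial-in-$B$ factor leaks in. A secondary subtlety is to make sure the anytime Bernstein step is truly uniform in $n$, which is why I would prefer a Freedman-style martingale argument rather than a naive per-$n$ application of Bernstein followed by a union bound; this also avoids any dependence on an a priori horizon.
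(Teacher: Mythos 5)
This lemma is imported by citation from \citep[Theorem D.3]{cohen2020near} and the paper gives no proof of its own, so there is nothing internal to compare against; your argument is the standard derivation of exactly this result and it is correct, including the constants: the union bound inflates $\ln(2/\delta_n)$ to at most $2\ln(2n/\delta)$ (absorbing the $\sqrt{2}$ and the $B/3$ into the stated $2\sqrt{B\mu n\ln(2n/\delta)}+B\ln(2n/\delta)$), and the self-bounding step gives $\sqrt{n\mu}\leq\sqrt{\sum_i X_i}+3\sqrt{B\ln(2n/\delta)}$, hence $2\sqrt{B\mu n\ln(2n/\delta)}\leq 2\sqrt{B\sum_i X_i\ln(2n/\delta)}+6B\ln(2n/\delta)$, which combined with the residual $B\ln(2n/\delta)$ yields precisely the $7B\ln(2n/\delta)$ term. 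No gap.
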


\begin{lemma}{\citep[Lemma D.4]{cohen2020near} and \citep[Lemma E.2]{cohen2021minimax}}
	\label{lem:e2r}
	Let $\{X_i\}_{i=1}^{\infty}$ be a sequence of random variables w.r.t to the filtration $\{\calF_i\}_{i=0}^{\infty}$ and $X_i\in[0,B]$ almost surely.
	Then with probability at least $1-\delta$, for all $n\geq 1$ simultaneously:
	\begin{align*}
		\sum_{i=1}^n\E[X_i|\calF_{i-1}] &\leq 2\sum_{i=1}^n X_i + 4B\ln\frac{4n}{\delta},\\
		\sum_{i=1}^n X_i &\leq 2\sum_{i=1}^n\E[X_i|\calF_{i-1}] + 8B\ln\frac{4n}{\delta}.
	\end{align*}
\end{lemma}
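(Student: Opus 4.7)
The plan is to apply the time-uniform Freedman inequality (the preceding Lemma in this appendix) to the martingale difference sequence $Y_i := X_i - \E[X_i \mid \calF_{i-1}]$, then exploit the boundedness $X_i \in [0,B]$ to convert the conditional variance term into a conditional mean term, and finally absorb the resulting square-root via AM-GM.

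First, since $|Y_i| \leq B$ almost surely and $\{Y_i\}$ is a martingale difference sequence adapted to $\{\calF_i\}$, Freedman's inequality yields, with probability at least $1 - \delta$, simultaneously for all $n \geq 1$,
\begin{equation*}
  \abr{\sum_{i=1}^n Y_i} \;\leq\; 3\sqrt{\sum_{i=1}^n \E[Y_i^2 \mid \calF_{i-1}] \cdot \iota_n} \;+\; 2B\,\iota_n,
\end{equation*}
where $\iota_n := \ln \tfrac{4B^2 n^3}{\delta}$. Next, the assumption $X_i \in [0,B]$ forces $X_i^2 \leq B X_i$, which gives the key variance-to-mean inequality $\E[Y_i^2 \mid \calF_{i-1}] = \V[X_i \mid \calF_{i-1}] \leq \E[X_i^2 \mid \calF_{i-1}] \leq B\,\E[X_i \mid \calF_{i-1}]$.

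Writing $S_n := \sum_{i=1}^n X_i$ and $M_n := \sum_{i=1}^n \E[X_i \mid \calF_{i-1}]$, the two displays combine to give $\abr{S_n - M_n} \leq 3\sqrt{B\,M_n\,\iota_n} + 2B\,\iota_n$. A single AM-GM step of the form $3\sqrt{B M_n \iota_n} \leq \tfrac{1}{2} M_n + \tfrac{9}{2} B\,\iota_n$ produces $\abr{S_n - M_n} \leq \tfrac{1}{2} M_n + \tfrac{13}{2} B\,\iota_n$, which rearranges to $M_n \leq 2 S_n + 13 B\,\iota_n$ and $S_n \leq \tfrac{3}{2} M_n + \tfrac{13}{2} B\,\iota_n$, matching both directions of the lemma up to constants.

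The only real technical friction is reproducing the precise constants $2,4,8$ and especially the clean log factor $\ln\tfrac{4n}{\delta}$ advertised in the statement rather than the cruder $\ln\tfrac{4B^2 n^3}{\delta}$ produced by a direct Freedman invocation. This is resolved either by rescaling $Y_i \mapsto Y_i / B$ before applying Freedman (so that the bound inside the log becomes $1$ instead of $B^2$) together with an anytime union bound over $n$ using $\sum_n n^{-2} < \infty$ to strip the $n^3$, or by simply tightening the AM-GM weighting and absorbing the logarithmic discrepancy into the universal prefactors. No conceptual obstacle arises; the whole proof is textbook martingale Bernstein plus AM-GM, with all of the remaining work being constant-level bookkeeping.
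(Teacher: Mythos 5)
The paper itself contains no proof of this lemma: it is imported verbatim by citation from \citep[Lemma D.4]{cohen2020near} and \citep[Lemma E.2]{cohen2021minimax}, so your blind attempt is being compared against the cited sources' argument rather than anything in-paper. Your route is correct up to constants and is a genuinely different one. Applying \pref{lem:freedman} to $Y_i = X_i - \E[X_i\mid\calF_{i-1}]$ (valid, since $|Y_i|\le B$), using the variance-to-mean step $\E[Y_i^2\mid\calF_{i-1}] \le \E[X_i^2\mid\calF_{i-1}] \le B\,\E[X_i\mid\calF_{i-1}]$, and one AM--GM step $3\sqrt{BM_n\iota_n}\le \tfrac12 M_n + \tfrac92 B\iota_n$ does yield $M_n \le 2S_n + 13B\iota_n$ and $S_n \le \tfrac32 M_n + \tfrac{13}{2}B\iota_n$, which dominate the stated inequalities up to the additive constant and the size of the log factor. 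The cited proofs instead use a direct multiplicative Chernoff (exponential supermartingale) argument: for $X\in[0,B]$, convexity gives $\E[e^{\lambda X/B}\mid\calF_{i-1}]\le\exp\bigl((e^{\lambda}-1)\,\E[X\mid\calF_{i-1}]/B\bigr)$, so $\exp\bigl(\lambda S_n/B - (e^{\lambda}-1)M_n/B\bigr)$ is a supermartingale, and Markov's inequality with a fixed $\lambda$ (e.g.\ $\lambda=\pm\ln 2$) produces the multiplicative relation between $S_n$ and $M_n$ with the clean constants $2,4,8$ directly --- no variance term, no square root, no AM--GM. Your approach buys generality (it works off any Freedman-type bound, as available here); theirs buys the sharp constants and log factor natively.

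One remark in your closing paragraph is slightly off: the $n^3$ inside $\ln\frac{4B^2n^3}{\delta}$ in \pref{lem:freedman} is itself the price of the anytime union bound, so a further union bound over $n$ cannot ``strip'' it --- a per-$n$ Bernstein bound at level $\delta/(2n^2)$ still leaves you with $\ln\frac{4n^2}{\delta}$, not $\ln\frac{4n}{\delta}$. However, since $\ln\frac{4n^2}{\delta}\le 2\ln\frac{4n}{\delta}$ for $\delta\le 1$, the stated form is recoverable by doubling the additive constants, and the multiplicative factor $2$ in front of the sums --- the only part of the lemma used sharply downstream (e.g.\ in \pref{lem:sum mu} and \pref{lem:bound lambda}, where there is ample constant slack in the definition of $\eta$) --- you obtain exactly. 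So the discrepancy is genuinely constant-level bookkeeping, as you say, and the proof stands.
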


\end{document}